\renewcommand{\baselinestretch}{2}
\theoremstyle{plain}
\newtheorem{theorem}{Theorem}
\newtheorem{lemma}{Lemma}
\newtheorem{corollary}{Corollary}
\newtheorem{remark}{Remark}
\newtheorem{definition}{Definition}
\newtheorem{proposition}{Proposition}
\newtheorem{condition}{Condition}
\newtheorem{model}{M}
\newcommand{\bbR}{\mathbb{R}}
\newcommand{\bbE}{\mathbb{E}}
\newcommand{\bbN}{\mathbb{N}}
\newcommand{\btheta}{\bm{\theta}}
\newcommand{\bphi}{\bm{\phi}}
\newcommand{\bgamma}{\bm{\gamma}}
\newcommand{\calX}{\mathcal{X}}
\newcommand{\calZ}{\mathcal{Z}}
\newcommand{\calW}{\mathcal{W}}
\newcommand{\calL}{\mathcal{L}}
\newcommand{\calG}{\mathcal{G}}
\newcommand{\calF}{\mathcal{F}}
\newcommand{\bX}{\bm{X}}
\newcommand{\bx}{\bm{x}}
\newcommand{\bZ}{\bm{Z}}
\newcommand{\bz}{\bm{z}}
\title{Penalized Generative Variable Selection}
\author{Tong Wang\thanks{Department of Biostatistics, Yale School of Public Health, New Haven, Connecticut, USA}
\ Jian Huang
\thanks{Department of Applied Mathematics, The Hong Kong Polytechnic University, Hong Kong SAR, China}
\ Shuangge Ma
\thanks{Department of Biostatistics, Yale School of Public Health, New Haven, Connecticut, USA}}
\date{}
\begin{document}

\maketitle

\begin{abstract}
Deep networks are increasingly applied to a wide variety of data, including data with high-dimensional predictors. In such analysis, variable selection can be needed along with estimation/model building. Many of the existing deep network studies that incorporate variable selection have been limited to methodological and numerical developments. In this study, we consider modeling/estimation using the conditional Wasserstein Generative Adversarial networks. Group Lasso penalization is applied for variable selection, which may improve model estimation/prediction, interpretability, stability, etc. Significantly advancing from the existing literature, the analysis of censored survival data is also considered. We establish the convergence rate for variable selection while considering the approximation error, and obtain a more efficient distribution estimation. Simulations and the analysis of real experimental data demonstrate satisfactory practical utility of the proposed analysis.     
\end{abstract}

\noindent{\bf Keywords:} Conditional Wasserstein Generative Adversarial Networks; Penalized variable selection; Consistency. 

\section{Introduction}\label{sec:intro}
In data analysis, an essential task is to establish the relationship between a response $Y$ and predictors $\bX$. Numerous regression techniques have been developed for this. It has been increasingly recognized that regression sometimes can be too rigid, and over the past decade, we have witnessed a surge in deep network techniques, which have demonstrated great power in a wide range of domains, such as computer vision, speech recognition, biomedical diagnosis, natural language processing, and others. ``Traditional’’ deep networks have often been criticized for being ``black-boxes’’ with somewhat limited statistical principles, theoretical guarantees, and interpretability. 

The fast development of data collection techniques in many domains has made the analysis of data with high-dimensional predictors routine. With such data, model sparsity is often assumed, under which only a subset of the predictors is relevant. For many specific problems, this assumption has been suggested as sensible. Accordingly, variable selection has been  conducted, which can lead to more reliable, interpretable, and efficient estimation. Under the regression framework, various penalization techniques have been developed and shown to have satisfactory theoretical and numerical properties \citep{zou2005regularization, SpAM2007, ZhangCH2010}. Alternative selection techniques include Bayesian, thresholding, boosting, and others. 

Deep network techniques are now increasingly applied to data with high-dimensional predictors. A natural question is whether variable selection can be incorporated in such analysis. Compared to their regression counterparts, deep networks have more complicated model structures and many more parameters, leading to a stronger demand for variable selection. Here it is first noted that some pruning techniques, such as random dropout, can generate sparser networks but cannot conduct variable selection. In the literature, there have been a few efforts. For example, one strategy is to conduct variable selection via constructing a sparse network. 
With imaging data, \cite{zhao2015heterogeneous} and \cite{scardapane2017group} applied group Lasso penalization to either a specific module or the entire network. \cite{feng2017sparse,dinh2020consistent} and \cite{luo2023sparseinput} studied the sparse input neural network and developed the corresponding algorithms. \cite{lemhadri2021lassonet} developed LassoNet by adding a skip (residual) layer, which can control the participation of a feature in any hidden layer. As an alternative strategy, researchers have developed new network architectures for variable selection. For example, \cite{li2016deep} performed selection by adding a sparse one-to-one layer to networks, and \cite{chen2021nonlinear} established its selection consistency under some conditions with a greedy algorithm they proposed. \cite{lu2018deeppink} designed a new deep neural network architecture by integrating knockoff filters to achieve feature selection with a controlled false discovery rate. \cite{liang2018bayesian} developed a neural network variable selection method under the Bayesian paradigm. 

The aforementioned and other studies have demonstrated promising performance of deep networks, without and with variable selection. Many of them have been limited to methodological and numerical developments, while insufficient theoretical guarantee has been provided. Nevertheless, theoretical investigation has been conducted in a handful of studies. For example, when variable selection is not incorporated, convergence properties of the ReLU neural networks have been studied under least squares regression \citep{Bauer2019,Schmidt-Hieber2020}, quantile regression \citep{Padilla2020QuantileRW}, robust nonparametric regression \citep{shen2021robust}, and a general class of nonparametric regression-type loss functions \citep{Farrell2021DNN}. Effort has also been made when variable selection is incorporated. \cite{Yan2022} established selection consistency for Bayesian sparse deep neural networks based on marginal posterior inclusion probabilities. \cite{chen2021nonlinear} established selection consistency with  approximation error for the network with a one-to-one selection layer under a regression framework. The aforementioned and some other studies have examined selection consistency with respect to certain specifically designed network architectures (and hence may not be generalizable). There are also studies on the commonly adopted fully connected networks. For example, \cite{feng2017sparse} examined variable selection consistency for single-layer neural networks. Additionally, \cite{dinh2020consistent} examined selection properties from a parametric perspective through setting the underlying regression function as the neural network.

The goal of this study is to further advance the research on variable selection with deep networks. This study is connected to but also significantly differs from the existing literature in multiple important aspects. Specifically, to more comprehensively describe the response-predictor relationship, we consider the Wasserstein generative adversarial network (WGAN), which significantly differ from the networks that are limited to certain specific quantities such as conditional mean. 
It makes milder assumptions on data structures and can be more suitable for modeling complex and unknown relationships. 
For variable selection, we adopt penalization, which has been demonstrated as effective in multiple studies. Different from some studies that have an additional sparse layer, group penalization is imposed to the first-layer weights. Additionally, there are multiple notable differences in the statistical development. For example, we consider deep neural networks whose width and depth can increase with sample size – this significantly differs from those studies with fixed networks. Different from \cite{dinh2020consistent} and some others, no assumption is imposed on the underlying regression function, making the proposed networks more flexible. Approximation error, which has a crucial role in nonparametric estimation but has been ignored in \cite{feng2017sparse, dinh2020consistent} and some other studies, is taken into consideration. Most of the existing studies, in particular those incorporating variable selection, have been limited to continuous and categorical responses. This study further advances by also considering censored survival data. Overall, this study can deliver a practically useful tool with a strong statistical ground, advancing deep network methodological and theoretical research.

\section{Methods}\label{sec:method}

Let $\mathcal{A}=\{(\bx_i,y_i):i=1,\ldots,n\}$ denote a dataset of $n$ i.i.d. observations, where the predictor $\bx_i\in\bbR^p$ and the response  $y_i\in\bbR$. 
To mitigate data-induced correlations, we split $\mathcal{A}$ into two subsets: $\mathcal{A}_1=\{(\bx_i,y_i):i=1,\ldots,n_1\}$ and $\mathcal{A}_2=\{(\bx_i,y_i):i=n_1+1,\ldots,n_1+n_2\}$ with $n_1+n_2=n$.

\subsection{Penalized WGAN}
\label{sec:P-WGAN}

The proposed analysis consists of two stages/steps. In both stages, we adopt WGAN to describe the response-predictor relationship. Compared to those focused on certain specific quantities (for example, conditional mean), GWAN is built on distributions and can more comprehensively measure model fitting. Its advantages have been increasingly recognized in the recent literature. In the first stage, the first subset of data is analyzed, and the main goal is to identify the predictors that are important for the response. Penalization is applied for variable selection. In the second stage, the second subset of data is analyzed. With only the selected variables, a refined estimation is conducted. Here no variable selection is needed, and hence penalization is not imposed. 

\noindent{\bf Stage 1}  WGAN corresponds to a minimax two-player game, and two networks are involved in training: a generator network $g_{\btheta_1}$ with predictors $\bx$ and a noise vector $\bz$ as the input, and a discriminator network $f_{\bphi_1}$ with $(\bx,y)$ as the input, where $\btheta_1$ and $\bphi_1$ are the network parameters (weights and biases). We denote the first-layer weights in $g_{\btheta_1}$ by $\bm{w}_{0}(\btheta_1)\in\bbR^{p_1\times(p+m)}$, where $p_1$ is the width of the first hidden layer and $m$ is the dimension of $\bz$, which is generated from  the standard normal distribution (noting that some other distributions are also applicable).

Using only $\mathcal{A}_1$, we propose the penalized objective function:
\begin{align}\label{eq:emp-stage1}
     l_1(g_{\btheta_1},f_{\bphi_1})=
        \frac{1}{n_1}\sum_{i=1}^{n_1}f_{\bphi_1}(\bx_i,g_{\btheta_1}(\bx_i,\bz_i))-\frac{1}{n_1}\sum_{i=1}^{n_1}f_{\bphi_1}(\bx_i,y_i)+{\lambda_n}\sum_{j=1}^{p}\|\bm{w}_{0}(\btheta_1)^{[,j]}\|_2,
\end{align}
where ${\lambda_n}$ is a data-dependent tuning parameter and $\bm{w}_{0}(\btheta_1)^{[,j]}$ is the $j$-th column of $\bm{w}_{0}(\btheta_1)$, which connects the $j$-th component of input $\bx_i$. The proposed estimation is defined as:
\begin{align}\label{eq:est-stage1}
(\hat{\btheta}_1,\hat{\bphi_1})=\arg\min_{\btheta_1}\max_{\bphi_1}l_1(g_{\btheta_1},f_{\bphi_1}).
\end{align}
With the estimated weights, the network structures can be obtained. The input variables with the estimated first-layer weights being nonzero are identified as important for the response -- this is consistent with penalized regression and some existing penalized deep network selection studies. 
Overall, the proposed training process is sketched in Figure 
\ref{fig:P-WGAN}.

\begin{figure}[h]
    \centering
\begin{tabular}{@{}c@{}}
    \includegraphics[width=0.55\textheight]{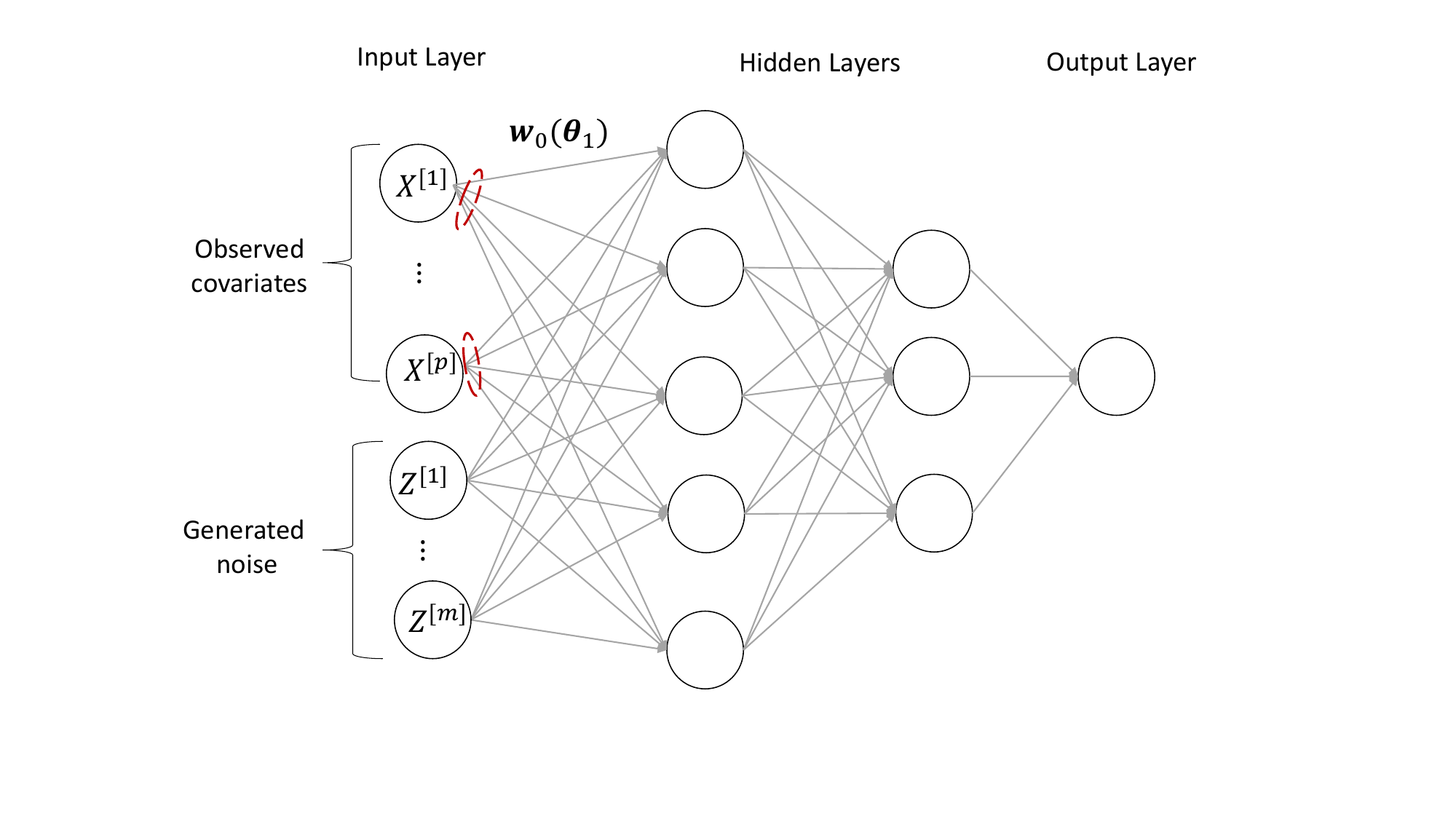}
\end{tabular}
\begin{tabular}{@{}c@{}}
    \includegraphics[width=0.55\textheight]{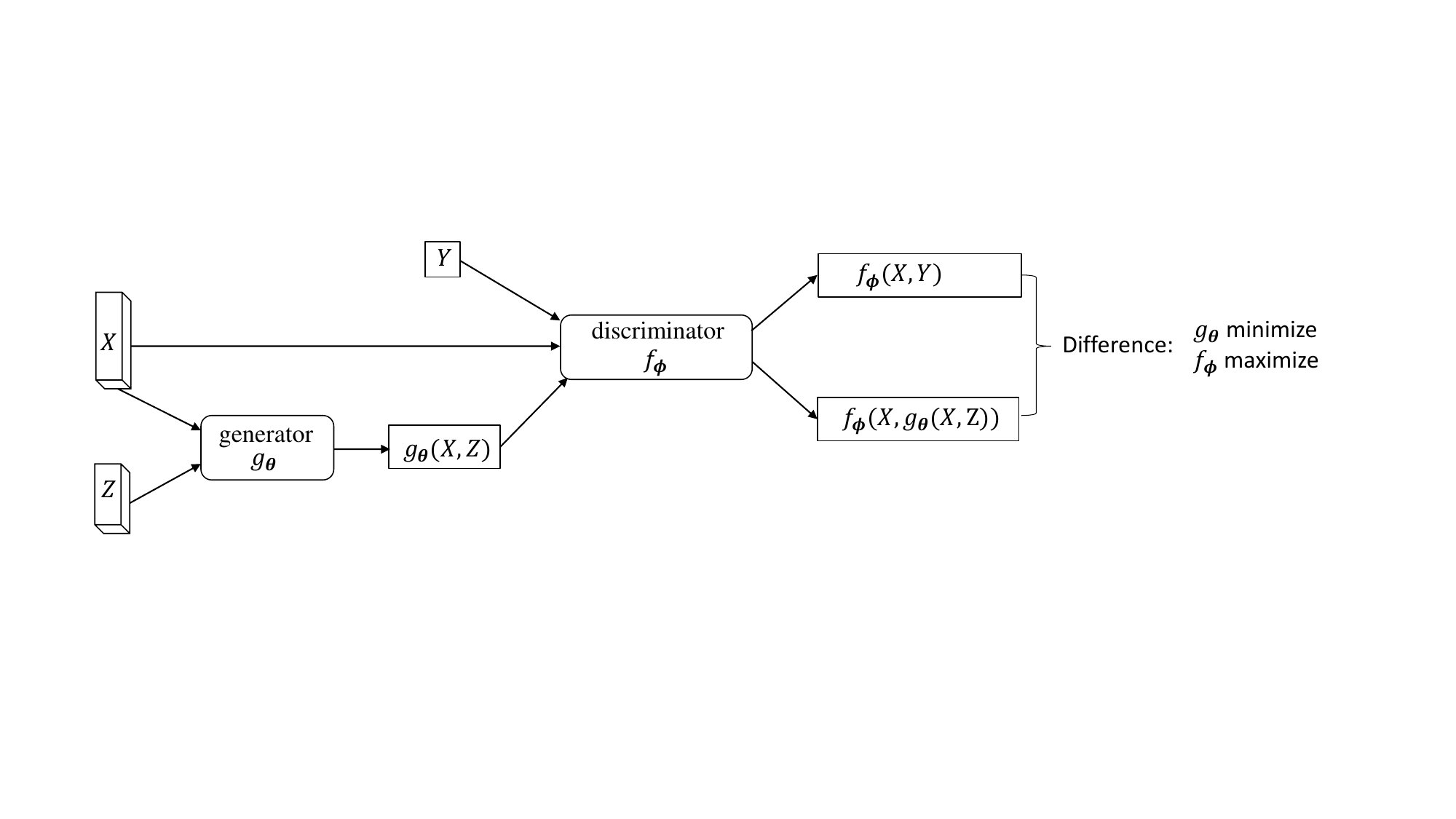}
\end{tabular}
\caption{Upper: Architecture of the penalized generator network $g_{\btheta_1}$. Lower: Training process of the proposed method.}
\label{fig:P-WGAN}
\end{figure}

\noindent\underline{Rationale}
The first two terms in (\ref{eq:emp-stage1}) empirically measure the 1-Wasserstein distance between the joint distributions of $(X,g_{\btheta_1}(\bX,\bZ))$ and $(\bX,Y)$, denoted by $P_{\bX,g_{\btheta_1}}$ and $P_{\bX,Y}$, respectively. It can be seen from (\ref{eq:est-stage1}), traning a WGAN amounts to finding the Nash equilibrium between $g_{\btheta_1}$ and $f_{\bphi_1}$, with $g_{\btheta_1}$ mimicking $P_{Y|\bX}$ and $f_{\bphi_1}$ discriminating the difference between $P_{\bX,g_{\btheta_1}}$ and $P_{\bX,Y}$.
The minimum of this distance is achieved when $P_{\bX,g_{\btheta_1}}$ and $P_{\bX,Y}$ are identical, which indicates that $g_{\btheta_1}(\bx,\bZ)$ follows the conditional distribution of $Y$ given $\bX=\bx$. 

The last term in (\ref{eq:emp-stage1}) is a group Lasso penalty, designed for identifying important input variables. Penalization has been adopted in multiple deep network studies for variable selection. Although sharing a similar spirit, different studies have different penalization designs. Here, the adopted penalty treats all the first-layer weights corresponding to an input variable as a group. With its all-in-or-all-out property, if a group is estimated as zero, the corresponding variable can be dropped from the model. This cannot be achieved using for example Lasso. On the other hand, if a group is estimated as nonzero, the corresponding input variable has all the first-layer weights nonzero, allowing for sufficient contributions. This strategy differs from that involving an additional sparse layer. Our preliminary examination does not suggest the dominance of one strategy over the other. As group penalization is adopted for the purpose of variable selection, it is only applied to the first layer of $g_{\btheta_1}$. Some published studies have also imposed penalization  to the other layers to regularize estimation and/or achieve sparsity. If desirable, this can be coupled with the proposed approach too.

\noindent{\bf Stage 2}
To further refine estimation, network retraining is conducted using only the identified important input variables. Such retraining has been commonly conducted with penalized regression. Denote $\hat{\bX}^s$. With the low dimensionality of $\hat{\bX}^s$ and the focus on estimation, no penalization is imposed in this stage. 
Using $\mathcal{A}_2$ only, we consider the un-penalized objective function: 
\begin{align} \label{eq:emp-stage2}  
     l_2(g_{\btheta_2},f_{\bphi_2})=
        \frac{1}{n_2}\sum_{i=1}^{n_2}f_{\bphi_2}(\tilde{\bx}^{s}_{n_1+i},g_{\btheta_2}(\tilde{\bx}^{s}_{n_1+i},\bz_i))-\frac{1}{n_2}\sum_{i=1}^{n_2}f_{\bphi_2}(\tilde{\bx}^{s}_{n_1+i},y_{n_1+i}).
       \end{align}
where $g_{\btheta_2}$ and $f_{\bphi_2}$ are a generator network and a discriminator network with parameters $\btheta_2$ and $\bphi_2$, respectively. The proposed estimation is defined as: $(\hat{\btheta}_2,\hat{\bphi}_2)=\arg\min_{\btheta_2}\max_{\bphi_2}l_2(g_{\btheta_2},f_{\bphi_2}).$

\begin{remark}
Splitting data leads to a reduction in sample size. In principle, it is possible to use the whole dataset for both stages. However, this may make theoretical development much more challenging. Similar data splitting has been adopted in multiple published studies.
\end{remark}

\subsection{Implementation}\label{sec:imple}

Here, we present the implementation details including the network architecture and the computational algorithm. Throughout this article, we consider the feedforward neural network (FNN), which can be expressed as a composition of a series of functions: 
\begin{align*}
\zeta(\bx)=\bm{w}_{\calL}\sigma(\bm{w}_{\calL-1}\sigma(\ldots\bm{w}_1\sigma(\bm{w}_0\bx+\bm{b}_0)+\bm{b}_1)+\ldots)+\bm{b}_{\calL-1})+\bm{b}_{\calL},\quad \bx\in\bbR^{p_0}.
\end{align*}
Here $\sigma(\cdot)$ is the activation function, $\calL$ is depth of the network,  $p_0$ is the dimension of the input vector, 
$p_{i}$ is the width of the $i$-th layer for $i=1,\ldots,\calL$,  
and $\bm{w}_i\in\bbR^{p_{i+1}\times p_{i}}$ and $\bm{b}_i\in\bbR^{p_{i+1}}$ are the weight matrix and bias vector of the $i$-th layer, respectively.  The width of $\zeta(\cdot)$ is described as $\calW=\max\{p_1,\ldots,p_{\calL}\}$.

To emphasize penalization in the objective function, the first-layer weights of the generator network $g_{\btheta_1}$ are denoted by $\bm{w}_{0}(\btheta_1)$. $\bm{w}_{0}(\btheta_1)\in\bbR^{p_1\times (p+m)}$ includes two parts: the first $p$ columns connecting the predictors $\bX$ and the rest connecting the generated noise $\bZ$. 
Algorithm \ref{alg: P-WGAN} presents the most essential component of the computation. Weight clipping is applied during the training of the discriminator network $f_{\bphi_1}$ to satisfy the Lipschitz constraint required by the Wasserstein distance. Although \cite{gulrajani2017improved} showed that a gradient penalty can enforce this Lipschitz constraint better in training WGANs, weight clipping performs comparably and is computationally easier for the present problem. The second step computation is omitted as it is a simplified version of Algorithm \ref{alg: P-WGAN}. 

 \begin{algorithm}[h]
	\caption{Penalized WGAN}
	
	\begin{algorithmic}[1]
	\Require{ Tuning parameter ${\lambda_n}$;  Minibatch size $n_b\leq n_1$; Clipping parameter $c$; Noise dimension $m$; Learning rates $r_f$, $r_g$. 
	}
	\For {number of training iterations in stage 1}
        \State Sample $\{(\bx_{bj},y_{bj})\}_{j=1}^{n_b}$ from $\{(\bx_i,y_i)\}_{i=1}^{n_1}$, and $\{\bz_j\}_{j=1}^{n_b}$ from $N(\bm{0},\bm{I}_{m})$.
		\State Update the discriminator $f_{\bphi_1}$ by ascending its stochastic gradient:
	\begin{align*}
	 & f_{\bphi_1}\gets\nabla_{\bphi_1}\left[ \frac{1}{n_b} \sum_{j=1}^{n_b} f_{\bphi_1}\left(\bx_{bj}, {g}_{\btheta_1}\left( \bx_{bj},\bz_{j}\right)\right)-\frac{1}{n_b} \sum_{j=1}^{n_b}f_{\bphi_1}\left(\bx_{bj}, y_{bj}\right)\right],\\
    &\bphi_1 \gets \bphi_1 + r_f\cdot \text{RMSProp}(\bphi_1,f_{\bphi_1}), \quad \bphi_1 \gets \text{clip}(\bphi_1,-c,c).
	\end{align*}
		\State Update the generator $g_{\btheta_1}$ by descending its stochastic gradient:
	\begin{align*}
	 &g_{\btheta_1}\gets \nabla_{\btheta} \left[\frac{1}{n_b} \sum_{j=1}^{n_b} {f}_{\bphi}(\bx_{bj}, g_{\boldsymbol{\btheta}}( \bx_{bj},\bz_{j}))+{\lambda_n}\sum_{l=1}^p\|\bm{w}_{0}(\btheta_1)^{[,l]}\|_2 \right],\\
    &\btheta_1 \gets \btheta_1 - r_g \cdot \text{RMSProp}(\btheta,g_{\btheta_1}).
	\end{align*}
	\EndFor
	\end{algorithmic}\label{alg: P-WGAN}
 \footnotesize
\end{algorithm}

\section{Statistical Properties}\label{sec:thm}

\subsection{Definition of importance}\label{sec:DefSign}
Let $Y\in\mathcal{Y}\subseteq\bbR$ and $\bX\in\mathcal{X}\subseteq\bbR^p$.
Since the conditional distribution $P_{Y|\bX}$ can fully describe the relationship between $\bX$ and $Y$, it is natural to define 
the important variables as the subset of $\bX$ that $P_{Y|\bX}$ only depends on.

\begin{definition}[Importance]\label{def:significance}
    Let $\bX^{[j]}$ be the $j$-th component of $\bX$. Then, $X^{[j]}$ is defined as unimportant if and only if
    \begin{align}\label{eq:def-sign}
         P_{Y|\bX^{[-j]}=\bx^{[-j]}}\sim P_{Y|\bX=\bx}, \quad \forall \bx\in\calX,
    \end{align}
    where $\bX^{[-j]}=(X^{[1]},\ldots,X^{[j-1]},X^{[j+1]},\ldots,X^{[p]})$ and $\bx^{[-j]}=(x^{[1]},\ldots,x^{[j-1]},x^{[j+1]},\ldots,x^{[p]})$, and ``$\sim$" means the equivalence of two distributions. 
\end{definition}

We separate $\bX$ into $\bX^{s}\in\bbR^{p_s}$ and $\bX^{c}\in\bbR^{p_c}$, with $p_s+p_c=p$, denoting the important and unimportant variables, respectively. Without loss of generality, assume that the first $p_s$ components of $\bX$ are important. For any $\bx\in\bX$, $P_{Y|\bX=\bx}\sim P_{Y|\bX^{s}=\bx^{s}}$. Note that this definition of importance allows for correlations among the predictors. It is a general definition of sparsity and fits many commonly adopted models. For example, for the mean regression model, $\bX^{[j]}$ is unimportant if and only if $\bbE[Y|\bX^{[j]}]=\bbE[Y|\bX]$. For the quantile regression model, $\bX^{[j]}$ is unimportant at a particular quantile level $\tau$ if and only if $Q_{Y|\bX^{[j]}}(\tau)=Q_{Y|\bX}(\tau)$. For the Cox regression model, $\bX^{[j]}$ is unimportant if and only if the hazard function $\lambda({Y|\bX^{[j]}})=\lambda({Y|\bX})$. Definition \ref{def:significance} is a sufficient condition for all the above.

\subsection{Connection between the important variables and target parameters}\label{subsec:THM-connection}   

To identify the important subset $\bX^{s}$, we aim to estimate a function $g(\bX=\bx,\bZ)$ which follows $P_{Y|\bX=\bx}$ for any $\bx\in\calX$, where $\bZ$ is a known random  vector.
The noise-outsourcing lemma \citep{kallenberg2002foundations} guarantees the existence of such a function $g(\bX,\bZ)$, as long as $\mathcal{Y}$ is a standard Borel space and $\bZ\in\calZ\subseteq \bbR^{m}$ is an independent $m$-dimensional noise vector sampled from $N(\bm{0},\bm{I}_{m})$ with $m\geq 1$.
Motivated by \cite{liu2021}, the estimation of  $g$ can be achieved through matching $P_{\bX,g}$ and $P_{\bX,Y}$ using the dual form of the 1-Wasserstein distance \citep{villani2009optimal}:
\begin{align}\label{eq:W1}
    W\left(P_{\bX, g}, P_{\bX, Y}\right)=\sup _{f\in\calF_{\text{Lip}}^1}\left\{\bbE_{\bX, \bZ} f(\bX, g(\bX,\bZ))-\bbE_{\bX, Y} f(\bX, Y)\right\}.
\end{align}
where $\calF_{\text{Lip}}^1$ is the 1-Lipschitz function class.
Let $g^{\ast}$ be the minimizer of $ W\left(P_{\bX, g}, P_{\bX, Y}\right)$ over the Borel space, so that $W\left(P_{\bX, g^{\ast}}, P_{\bX, Y}\right)=0$. $g^{\ast}$ satisfies
\begin{align}\label{eq:g-star}
    g^{\ast}(\bx,\bZ)\sim P_{Y|\bX=\bx}, \forall  \bx\in\mathcal{X},
\end{align}
as it is the sufficient and necessary condition for $W\left(P_{\bX, g}, P_{\bX, Y}\right)=0$. Together with (\ref{eq:g-star}) and Definition \ref{def:significance}, $g^{\ast}(\bX,\bZ)$ depends on $\bX^s$ only. We construct an estimator of $g^{\ast}$ within a function class consisting of feedforward neural networks, 
denoted by $\calG_{\btheta_1}$ with network parameter $\btheta_1$. 
To simplify presentation, we divide the first-layer weights of $g_{\btheta_1}$ into three parts:
\begin{align}
    \bm{w}_{0}(\btheta_1)=(\bm{\mu}_1,\bm{\nu}_1,\bm{\upsilon}_1),
\end{align}
where $\bm{\mu}_1\in\mathbb{R}^{p_1\times p_s}$ connects $\bX^{s}$, $\bm{\nu}_1\in\mathbb{R}^{p_1\times p_c}$ connects $\bX^{c}$, and $\bm{\upsilon}_1\in\mathbb{R}^{p_1\times m}$ connects $\bZ$.
Then, referring to the optimal network solution 
$g_{\btheta^{\ast}_1}\in \arg\min_{g_{\btheta_1}\in\calG_{\btheta_1}}W(P_{\bX,g_{\btheta_1}},P_{\bX,Y}),$
we define a set  of $\btheta$ as  
\begin{align}\label{eq:Theta-def}
     \Theta=\{\btheta, W(P_{\bX,g_{\btheta_1}(\bX,\bZ)},P_{\bX, Y})=W(P_{\bX, g_{\btheta^{\ast}_1}(\bX,\bZ)},P_{\bX, Y})\}.
\end{align}
With proper conditions on the data structure and network class, $ g_{\btheta^{\ast}_1}$ can approach $g^{\ast}$ close enough such that, within $ g_{\btheta^{\ast}_1}$'s first-layer, the sub-components connecting $\bX^{s}$ are non-zero, and those connecting $\bX^{c}$ are zero. Then, for any $\btheta\in\Theta$ with first-layer weights $\bm{w}_0(\btheta)=(\bm{\mu},\bm{\nu},\bm{\upsilon})$, there exists a positive constant $c_{\mu}>0$ such that
$  \min_{1\leq j\leq p_s}\|\bm{\mu}^{[,j]}\|_2\geq c_{\mu}.
$
Further, if we set $\bm{\nu}=\bm{0}$,  $\btheta\in\Theta$ still holds. These properties of $ g_{\btheta^{\ast}_1}$ and $\Theta$ are given in Lemmas \ref{lem:para-g-theta-ast}-\ref{lem:Dinh2020-expand} in Appendix. 

In practice, the distribution of $(\bX,Y)$ is unknown, and the network parameter $\hat{\btheta}_1$ obtained in (\ref{eq:est-stage1}) lies outside of $\Theta$. To measure the distance between $\btheta$ and $\Theta$, we define metric $d(\btheta,\Theta)$:
\begin{align}\label{d-theta-1}
    d(\btheta,\Theta)\coloneqq \min_{\bm{\vartheta}\in\Theta}\|\btheta-\bm{\bm{\vartheta}}\|_2,
\end{align}
where $\|\cdot\|_2$ is the Euclidean norm. Subsequently, $d(\hat{\btheta}_1,\Theta)\to 0$ is sufficient for $\hat{\bm{\mu}}$ to be bounded away from zero. Thus, we can obtain the selection results via establishing the convergence rate of the proposed penalized generator network $ g_{\hat{\btheta}_1}$ in terms of its parameters.

\subsection{Convergence rate}\label{subsec:THM-P-WGAN} 
Here, we describe the main theoretical results. The assumed conditions and detailed proofs are provided in Appendix. In what follows, the asymptotic notation $A\precsim B$ means that 
$A\leq CB$ for some constant $C>0$. 

\begin{theorem}\label{thm:vary-p}
Suppose that Conditions \ref{C:XY}-\ref{C:Lojasewica} hold. If $p=o(\log^{c} n_1)$ with $0<c<1$ and ${\lambda_n}=O({n_1}^{-\frac{1}{2(p+1)}})$,  $\hat{\btheta}_1$ defined in (\ref{eq:est-stage1}) with first-layer weights $\bm{w}_0(\hat{\btheta}_1)=(\hat{\bm{\mu}}_1,\hat{\bm{\nu}}_1,\hat{\bm{\upsilon}}_1)$ satisfies 
   \begin{align*}
       \bbE[d(\hat{\btheta}_1,\Theta)]\precsim\left({n_1}^{-\frac{1}{p+1}} \log {n_1}\right)^{\frac{1}{2(a-1)}}, \quad \mathbb{E}\|\hat{\bm{\nu}}_1\|_2\precsim  {n_1}^{-\frac{1}{2(a-1)(p+1)}} \log {n_1},
   \end{align*} 
where $a>2$ is a positive constant, and $\mathbb{E}$ is taken with respect to $\{(\bX_i,Y_i)\}_{i=1}^{n_1}$ and $\{\bZ_i\}_{i=1}^{n_1}$. 
\end{theorem}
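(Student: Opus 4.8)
The plan is to chain a penalized basic inequality, a bound on the Wasserstein excess risk, and the Łojasiewicz-type geometry of Condition~\ref{C:Lojasewica}. Since $(\hat{\btheta}_1,\hat{\bphi}_1)$ is a saddle point of $l_1$, for any competitor $\btheta\in\Theta$ one has $\max_{\bphi_1}l_1(g_{\hat{\btheta}_1},f_{\bphi_1})\le\max_{\bphi_1}l_1(g_{\btheta},f_{\bphi_1})$, and each side splits into an empirical, discriminator-restricted Wasserstein term plus the group-Lasso term. I would take the competitor to be the projection of $\hat{\btheta}_1$ onto $\Theta$ with its $\bm{\nu}$-block zeroed out, which still lies in $\Theta$ by the closure property recorded in Lemmas~\ref{lem:para-g-theta-ast}--\ref{lem:Dinh2020-expand}. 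Because that competitor carries no penalty on the unimportant variables, the $\bm{\mu}$-contributions to the penalty difference nearly cancel, and the inequality reduces to: the empirical excess Wasserstein risk of $g_{\hat{\btheta}_1}$ plus $\lambda_n\|\hat{\bm{\nu}}_1\|_{2,1}$ is bounded by $\lambda_n$ times a discrepancy of the $\bm{\mu}$-blocks that is itself controlled by $d(\hat{\btheta}_1,\Theta)$. Keeping this residual from swamping the statistical error is exactly what forces $\lambda_n=O(n_1^{-1/(2(p+1))})$.

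Second, I would upgrade the empirical inequality to a bound on the population excess risk $W(P_{\bX,g_{\hat{\btheta}_1}},P_{\bX,Y})-W(P_{\bX,g_{\btheta^{\ast}_1}},P_{\bX,Y})$. This requires (i) a uniform deviation bound for the empirical Wasserstein objective around its population counterpart over the generator-by-discriminator network classes, handled by covering-number / Rademacher estimates that keep track of widths and depths growing with $n_1$ and that also absorb the gap between the empirical maximizer $\hat{\bphi}_1$ and the population one; and (ii) a discriminator-approximation term measuring how closely a clipped feedforward network reproduces the optimal $1$-Lipschitz Kantorovich potential on the compact support of $(\bX,Y)$. The generator-approximation gap $W(P_{\bX,g_{\btheta^{\ast}_1}},P_{\bX,Y})$ does not enter the rate, because $\Theta$ is anchored at $g_{\btheta^{\ast}_1}$ and Lemmas~\ref{lem:para-g-theta-ast}--\ref{lem:Dinh2020-expand} already fix its support structure. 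Choosing the architecture to balance (i) against (ii) yields an excess-risk bound of order $n_1^{-1/(p+1)}\log n_1$; here the exponent $p+1$ is the dimension of the joint vector $(\bX,Y)$ and is the price of the curse of dimensionality in Wasserstein estimation, while $p=o(\log^{c}n_1)$ with $c<1$ is precisely what keeps the accompanying $\log$- and $p$-dependent factors small enough that this bound still vanishes.

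Third, Condition~\ref{C:Lojasewica} supplies a lower bound of the form $d(\btheta,\Theta)^{2(a-1)}\precsim W(P_{\bX,g_{\btheta}},P_{\bX,Y})-W(P_{\bX,g_{\btheta^{\ast}_1}},P_{\bX,Y})$ once $\btheta$ is near $\Theta$; a preliminary consistency argument (the same machinery with the rate discarded) first gives $d(\hat{\btheta}_1,\Theta)\to0$, placing us in that regime with probability tending to one. Combined with the excess-risk bound this gives $d(\hat{\btheta}_1,\Theta)^{2(a-1)}\precsim n_1^{-1/(p+1)}\log n_1$ on a high-probability event, and passing to expectations by Jensen's inequality (concavity of $t\mapsto t^{1/(2(a-1))}$, valid since $2(a-1)>1$, plus boundedness of $d$ off that event) yields the first displayed rate. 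For $\|\hat{\bm{\nu}}_1\|_2$ I would go back to the basic inequality and retain $\lambda_n\|\hat{\bm{\nu}}_1\|_{2,1}$ on the favourable side: it is bounded by the excess-risk term plus $\lambda_n$ times the $\bm{\mu}$-block discrepancy, the latter now controlled by the rate just obtained for $d(\hat{\btheta}_1,\Theta)$; dividing through by $\lambda_n$, using $\lambda_n=O(n_1^{-1/(2(p+1))})$, and taking expectations delivers $\bbE\|\hat{\bm{\nu}}_1\|_2\precsim n_1^{-1/(2(a-1)(p+1))}\log n_1$.

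The main obstacle is the interface between the second and third steps: one must produce an excess-risk bound for the Wasserstein objective that is genuinely clean --- no uncontrolled dependence on $p$ or on $\log n_1$ --- over network classes of diverging size, quantify the network's approximation of the Kantorovich potential, and then feed that bound through the exponent $2(a-1)$ of Condition~\ref{C:Lojasewica} and the flat directions parametrizing $\Theta$ to obtain a parameter-space rate. All of this must be carried out with $\lambda_n$ tuned in the narrow window where the group penalty is weak enough not to distort the Wasserstein fit yet strong enough to pin $\hat{\bm{\nu}}_1$ down, and with care that the $\bm{\mu}$-block discrepancy is controlled without an extra dimension-dependent loss.
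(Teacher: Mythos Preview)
Your overall architecture matches the paper's, but one concrete misreading short-circuits the rate derivation. Condition~\ref{C:Lojasewica} states $d(\btheta,\Theta)^{a}\precsim$ excess Wasserstein risk with exponent $a$, not $2(a-1)$. With the correct exponent, you cannot simply combine a clean excess-risk bound of order $n_1^{-1/(p+1)}\log n_1$ with the \L ojasiewicz inequality to read off the rate. The population excess risk of $g_{\hat{\btheta}_1}$ is bounded by the statistical/approximation term \emph{plus} the penalty residual $\lambda_n\sqrt{p}\,d(\hat{\btheta}_1,\Theta)$ coming from the $\bm{\mu}$-block discrepancy you correctly identified in your first step; this makes the key inequality self-referential,
\[
d(\hat{\btheta}_1,\Theta)^{a}\ \precsim\ n_1^{-1/(p+1)}\log n_1\ +\ \lambda_n\sqrt{p}\,d(\hat{\btheta}_1,\Theta).
\]
The paper closes this loop with Young's inequality using conjugate exponents $(a,\,a/(a-1))$: $\lambda_n\sqrt{p}\,d\le d^{a}/a+C(\lambda_n\sqrt{p})^{a/(a-1)}$, and the $d^{a}/a$ is absorbed into the left side. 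For $a>2$ and $\lambda_n\asymp n_1^{-1/(2(p+1))}$, the term $(\lambda_n\sqrt{p})^{a/(a-1)}$ dominates the statistical term, giving $d^{a}\precsim(\lambda_n\sqrt{p})^{a/(a-1)}$ and hence $d\precsim(\lambda_n\sqrt{p})^{1/(a-1)}\asymp(n_1^{-1/(p+1)}\log n_1)^{1/(2(a-1))}$. So the exponent $1/(2(a-1))$ in the theorem is a \emph{derived} quantity from this balancing of the \L ojasiewicz exponent $a$ against the penalty residual, not something read off Condition~\ref{C:Lojasewica}; your route reaches the stated rate only because the misstated exponent happens to compensate for the missing Young step.

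Two smaller remarks. First, Condition~\ref{C:Lojasewica} in the paper is global, so the preliminary consistency argument you propose (to first localize $\hat{\btheta}_1$ near $\Theta$) is unnecessary. Second, the paper works in expectation throughout the error decomposition rather than via a high-probability event followed by Jensen; your route is not wrong, but it adds the burden of controlling $d$ on the complementary event, which the paper avoids.
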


\begin{corollary}\label{cor:fixed-p}
Suppose that Conditions \ref{C:XY}-\ref{C:Lojasewica} hold. If $p$ is fixed and  ${\lambda_n}=O({n_1}^{-\frac{1}{2(p+1)}})$, there exists a positive constant $a> 2$ such that 
\begin{align*}
\mathbb{E}[d(\hat{\btheta}_1,\Theta)]\precsim \left(\frac{\log n_1}{n_1}\right)^{\frac{1}{2(p+1)(a-1)}}, \quad
\mathbb{E}\|\hat{\bm{\nu}}_1\|_2\precsim  \left(\frac{\log n_1}{n_1}\right)^{\frac{1}{2(p+1)(a-1)}}.
\end{align*}
\end{corollary}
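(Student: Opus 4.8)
\medskip
\noindent\textbf{Proof plan for Corollary \ref{cor:fixed-p}.}

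The plan is to regard the corollary as the fixed-$p$ counterpart of Theorem \ref{thm:vary-p} and to re-run the theorem's proof with $p$ held constant, while tightening the logarithmic accounting: the corollary's $\log n_1$ exponents are strictly smaller than those one would get by merely substituting a constant $p$ into the theorem, so the statement is a genuine refinement rather than a pure specialization. The argument would proceed in four steps --- (i) an oracle-type inequality for the penalized min--max solution $\hat\btheta_1$ of (\ref{eq:est-stage1}); (ii) control of the error terms appearing there in terms of $p+1$, of the widths and depths of the stage-$1$ networks, of Condition \ref{C:XY}, and of the weight clipping; (iii) conversion of the excess Wasserstein risk into $d(\hat\btheta_1,\Theta)$ via the local-growth ({\L}ojasiewicz-type) Condition \ref{C:Lojasewica}; and (iv) extraction of the bound on $\|\hat{\bm{\nu}}_1\|_2$.

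For step (i), I would use that $(\hat\btheta_1,\hat\bphi_1)$ solves (\ref{eq:est-stage1}) and compare it to a reference $\bar\btheta_1\in\Theta$ whose first-layer block on $\bX^c$ vanishes; such a reference (for instance $\btheta^{\ast}_1$ itself) exists by Lemmas \ref{lem:para-g-theta-ast}--\ref{lem:Dinh2020-expand}. Writing $\calE(\btheta)=W(P_{\bX,g_{\btheta_1}},P_{\bX,Y})-W(P_{\bX,g_{\btheta^{\ast}_1}},P_{\bX,Y})\ge0$ for the excess Wasserstein risk, and noting that $\bar\btheta_1$ carries no penalty on the unimportant coordinates, the comparison yields, schematically,
\[
\calE(\hat\btheta_1)+\lambda_n\|\hat{\bm{\nu}}_1\|_2\;\precsim\;\Delta_{n_1}+\lambda_n R_{n_1},
\]
where $\Delta_{n_1}$ is the uniform deviation between the empirical objective (maximized over the discriminator network class) and the population dual Wasserstein distance (\ref{eq:W1}), and $R_{n_1}$ is the reference penalty (controlled by the first-layer width). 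In step (ii), $\Delta_{n_1}$ would be bounded using the empirical $1$-Wasserstein rate in the $(p+1)$-dimensional joint space (the exponent $p+1$ appearing because $Y\in\bbR$, consistently with the tuning $\lambda_n=O(n_1^{-1/(2(p+1))})$), the discriminator network's approximation error for the $1$-Lipschitz class, and a covering-number term for $\calG_{\btheta_1}\times\calF_{\bphi_1}$; the widths and depths would be chosen just large enough to make the generator's approximation of the noise-outsourcing map $g^{\ast}$ negligible. This is where the fixed-$p$ regime pays off: every $p$-indexed constant in these bounds is $O(1)$, so the widths needed are of (small) polylogarithmic order and all residual logarithmic factors collapse to a single power of $\log n_1$, rather than the coarser polylogarithm one has to tolerate when $p=o(\log^{c}n_1)$.

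In step (iii), I would feed the bound on the right-hand side of the display into the local-growth inequality of Condition \ref{C:Lojasewica}, which near $\Theta$ bounds $d(\btheta,\Theta)$ by a suitable power (governed by $a>2$) of $\calE(\btheta)$; after checking that $\hat\btheta_1$ lies in the relevant neighbourhood of $\Theta$ with high probability, this gives $\bbE[d(\hat\btheta_1,\Theta)]\precsim(\log n_1/n_1)^{1/(2(p+1)(a-1))}$, the tuning $\lambda_n=O(n_1^{-1/(2(p+1))})$ being precisely what calibrates the $n_1$-power. For step (iv), the same display controls $\lambda_n\|\hat{\bm{\nu}}_1\|_2$; combining this with the $d(\hat\btheta_1,\Theta)$ bound and with the stability of $\Theta$ under zeroing the $\bX^c$-block (Lemma \ref{lem:Dinh2020-expand}) --- which ties $\|\hat{\bm{\nu}}_1\|_2$ to $d(\hat\btheta_1,\Theta)$ --- gives $\bbE\|\hat{\bm{\nu}}_1\|_2\precsim(\log n_1/n_1)^{1/(2(p+1)(a-1))}$.

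The step I expect to be the main obstacle is (ii), together with the calibration in (iii). One must produce a single stage-$1$ architecture that simultaneously (a) has a first layer realizing the correct zero/nonzero pattern, so that $\bar\btheta_1$ is genuinely the near-$\Theta$ reference supplied by Lemmas \ref{lem:para-g-theta-ast}--\ref{lem:Dinh2020-expand}; (b) has a generator approximating $g^{\ast}$ and a weight-clipped discriminator approximating the $1$-Lipschitz class at rates fine enough for the two error contributions to $\Delta_{n_1}$; and (c) has complexity small enough that the covering-number contribution is negligible. Getting all the logarithmic powers to come out at exactly the claimed values --- which is the content of the fixed-$p$ improvement --- then amounts to bookkeeping the widths, the reference penalty $R_{n_1}$, and the empirical-Wasserstein logarithms against the appendix; this is routine once each $p$-dependent factor is seen to be $O(1)$, but it is the part that must be handled with care.
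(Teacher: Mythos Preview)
Your diagnosis that the corollary sharpens the logarithmic exponent relative to a naive substitution into Theorem \ref{thm:vary-p} is correct, but your explanation of where the sharpening comes from is wrong, and your step (i) as written has a genuine gap.

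The paper's proof does not re-run any part of the Theorem \ref{thm:vary-p} argument. It simply quotes the intermediate bound obtained at the end of Steps 1--3 of that proof,
\[
\bbE\bigl[d(\hat\btheta_1,\Theta)^a\bigr]\;\precsim\;n_1^{-1/(p+1)}\log n_1+(p+1)^{3/2}n_1^{-2/(p+1)}\log n_1+(\lambda_n\sqrt{p})^{a/(a-1)},
\]
and observes that for fixed $p$ the factors $(p+1)^{3/2}$ and $\sqrt p$ are constants. Substituting $\lambda_n=O(n_1^{-1/(2(p+1))})$ makes the third term $O(n_1^{-a/(2(p+1)(a-1))})$ with \emph{no} logarithm; since $a/(2(a-1))<1$ this term dominates, and raising to the power $1/a$ gives the stated rate. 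The companion bound for $\bbE\|\hat{\bm\nu}_1\|_2$ is handled identically. So the improved log exponent does \emph{not} come from any tightening of stochastic or approximation estimates, nor from ``polylogarithmic widths''; it falls out automatically once the $\sqrt p$ in the penalty contribution is $O(1)$.

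Your oracle display in step (i),
\[
\calE(\hat\btheta_1)+\lambda_n\|\hat{\bm\nu}_1\|_2\;\precsim\;\Delta_{n_1}+\lambda_n R_{n_1},
\]
with $R_{n_1}$ the full penalty at a fixed reference $\bar\btheta_1\in\Theta$, does not close under the hypotheses of the corollary. The reference penalty $R_{n_1}=\sum_{j\le p_s}\|\bar{\bm\mu}^{[,j]}\|_2$ scales like $\sqrt{p_1}$, and Condition \ref{C:G-class} forces $\calW_{\calG_{\btheta_1}}^2\calL_{\calG_{\btheta_1}}=cn_1$, so the first-layer width grows polynomially in $n_1$; for $p\ge 2$ one then has $\lambda_n R_{n_1}\to\infty$. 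Your remark that ``the widths needed are of (small) polylogarithmic order'' is therefore incompatible with the stated conditions. The paper sidesteps this entirely by comparing instead to $\hat{\bm\vartheta}_1=\arg\min_{\bm\vartheta\in\Theta}\|\hat\btheta_1-\bm\vartheta\|_2$, so that the penalty \emph{difference} is bounded by $\lambda_n\sqrt{p}\,d(\hat\btheta_1,\Theta)$; Young's inequality with exponents $(a,a/(a-1))$ then absorbs the $d(\hat\btheta_1,\Theta)$ factor back into the left-hand side $d(\hat\btheta_1,\Theta)^a$, leaving only the harmless $(\lambda_n\sqrt p)^{a/(a-1)}$ residual. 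If you replace your step (i) by this device, the rest of your plan collapses to the paper's few-line specialization.
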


In the establishment of the convergence rate, Theorem \ref{thm:vary-p} is more general, while Corollary \ref{cor:fixed-p} is more specific to the case of a finite $p$. They show that $\bbE[d(\hat{\btheta}_1,\Theta)]\to 0$, which leads to that the important variables can be identified with a high probability. It is noted that the convergence rate does not depend on $m$. 

\begin{corollary}\label{cor:threshold}
Suppose that Conditions \ref{C:XY}-\ref{C:Lojasewica} hold. If $p=o(\log^c {n_1})$ with $c\in(0,1)$ and $\lambda_n=O({n_1}^{-\frac{1}{2(p+1)}})$, there exists a positive threshold $c_{p_s} $ depending on $p_s$ such that, for any $j=1,\ldots,p_s$ and $k=1,\ldots,p_c$,
\begin{align*}
   \Pr(\{\|\hat{\bm{\mu}}_1^{[,j]}\|_2\geq c_{p_s}\} \cap \{\|\hat{\bm{\nu}}_1^{[,k]}\|_2\leq c_{p_s}\}) \to 1.
\end{align*}
\end{corollary}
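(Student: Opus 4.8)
The plan is to read the corollary off from the two convergence rates in Theorem~\ref{thm:vary-p} together with the structural description of $\Theta$ recalled in Section~\ref{subsec:THM-connection} (Lemmas~\ref{lem:para-g-theta-ast}--\ref{lem:Dinh2020-expand}). Those lemmas furnish a constant $c_\mu>0$, depending on the data structure and on $p_s$, such that every $\bm{\vartheta}\in\Theta$ with first-layer weights $\bm{w}_0(\bm{\vartheta})=(\bm{\mu},\bm{\nu},\bm{\upsilon})$ satisfies $\min_{1\le j\le p_s}\|\bm{\mu}^{[,j]}\|_2\ge c_\mu$. I would take the threshold to be $c_{p_s}:=c_\mu/2$, which is positive and depends on $p_s$ through $c_\mu$, and then show that each of the two events in the statement fails with probability tending to $0$.

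For the unimportant block, note that $\|\hat{\bm{\nu}}_1^{[,k]}\|_2\le\|\hat{\bm{\nu}}_1\|_2$ for every $k$, so Markov's inequality and the second bound of Theorem~\ref{thm:vary-p} give
\[
\Pr\bigl(\|\hat{\bm{\nu}}_1^{[,k]}\|_2> c_{p_s}\bigr)\le \frac{\bbE\|\hat{\bm{\nu}}_1\|_2}{c_{p_s}}\precsim \frac{1}{c_{p_s}}\,n_1^{-\frac{1}{2(a-1)(p+1)}}\log n_1 .
\]
Under $p=o(\log^c n_1)$ with $c\in(0,1)$ one has $p+1\le \log^c n_1$ eventually, hence $\log n_1/(p+1)\ge \log^{1-c} n_1\to\infty$; therefore $n_1^{-1/(2(a-1)(p+1))}\le \exp(-\log^{1-c} n_1/(2(a-1)))$, which decays faster than any negative power of $\log n_1$ and in particular overwhelms the $\log n_1$ factor. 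Thus $\Pr(\|\hat{\bm{\nu}}_1^{[,k]}\|_2> c_{p_s})\to 0$.

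For the important block, I would convert the bound on $\bbE[d(\hat{\btheta}_1,\Theta)]$ into a uniform lower bound on the columns $\hat{\bm{\mu}}_1^{[,j]}$. Let $\bm{\vartheta}^\star\in\Theta$ attain the minimum in $d(\hat{\btheta}_1,\Theta)=\min_{\bm{\vartheta}\in\Theta}\|\hat{\btheta}_1-\bm{\vartheta}\|_2$, with first-layer weights $(\bm{\mu}^\star,\bm{\nu}^\star,\bm{\upsilon}^\star)$. Since a first-layer column is a coordinate projection of the full parameter vector,
\[
\bigl\|\hat{\bm{\mu}}_1^{[,j]}-\bm{\mu}^{\star[,j]}\bigr\|_2\le \|\hat{\btheta}_1-\bm{\vartheta}^\star\|_2=d(\hat{\btheta}_1,\Theta),
\]
so the reverse triangle inequality and $\|\bm{\mu}^{\star[,j]}\|_2\ge c_\mu$ yield $\|\hat{\bm{\mu}}_1^{[,j]}\|_2\ge c_\mu-d(\hat{\btheta}_1,\Theta)$. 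Hence on the event $\{d(\hat{\btheta}_1,\Theta)\le c_\mu/2\}$ we have $\|\hat{\bm{\mu}}_1^{[,j]}\|_2\ge c_\mu/2=c_{p_s}$ simultaneously for all $j\le p_s$, and Markov's inequality with the first bound of Theorem~\ref{thm:vary-p} gives
\[
\Pr\bigl(d(\hat{\btheta}_1,\Theta)> c_\mu/2\bigr)\le \frac{2}{c_\mu}\,\bbE[d(\hat{\btheta}_1,\Theta)]\precsim \frac{2}{c_\mu}\bigl(n_1^{-\frac{1}{p+1}}\log n_1\bigr)^{\frac{1}{2(a-1)}}\to 0 ,
\]
the last step again using $p=o(\log^c n_1)$ exactly as above.

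Combining the two estimates by a union bound on complements gives, for any fixed $j\le p_s$ and $k\le p_c$,
\[
\Pr\bigl(\{\|\hat{\bm{\mu}}_1^{[,j]}\|_2\ge c_{p_s}\}\cap\{\|\hat{\bm{\nu}}_1^{[,k]}\|_2\le c_{p_s}\}\bigr)\ge 1-\Pr\bigl(d(\hat{\btheta}_1,\Theta)>\tfrac{c_\mu}{2}\bigr)-\Pr\bigl(\|\hat{\bm{\nu}}_1^{[,k]}\|_2> c_{p_s}\bigr)\longrightarrow 1 ,
\]
which is the claim; if one wants it to hold simultaneously over all $j\le p_s$ and $k\le p_c$, a further union bound over the at most $p^2=o(\log^{2c} n_1)$ pairs still works, since the two tail probabilities decay faster than any power of $1/\log n_1$. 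The only step that needs genuine care is precisely this interaction between the growth of $p$ and the fact that the rates of Theorem~\ref{thm:vary-p} are sub-polynomial rather than polynomial in $n_1$: it is the restriction $c<1$ in $p=o(\log^c n_1)$ that makes $\log^{1-c} n_1\to\infty$ and hence forces $\exp(-c'\log^{1-c}n_1)$ to beat both the $\log n_1$ multipliers and the union-bound count. Everything else reduces to Markov's and the triangle inequalities.
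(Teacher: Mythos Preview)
Your proposal is correct and follows essentially the same route as the paper: use Lemma~\ref{lem:Dinh2020-expand} for the lower bound $c_\mu$ on the important columns of any $\bm{\vartheta}\in\Theta$, transfer it to $\hat{\btheta}_1$ via the distance $d(\hat{\btheta}_1,\Theta)$ and the triangle inequality, then apply Markov's inequality to both rates of Theorem~\ref{thm:vary-p} and combine by a union bound. Your argument is in fact slightly cleaner than the paper's: you bound $\|\hat{\bm{\mu}}_1^{[,j]}-\bm{\mu}^{\star[,j]}\|_2$ directly by $d(\hat{\btheta}_1,\Theta)$, whereas the paper routes through $\sum_{j}\|\cdot\|_2\le\sqrt{p_s}\|\hat{\bm{\mu}}_{\btheta_1}-\hat{\bm{\mu}}_{\bm{\vartheta}_1}\|_2$, picking up an unnecessary $\sqrt{p_s}$ factor and leading to the threshold $c_\mu-\sqrt{p_s}c_0$ instead of your $c_\mu/2$.
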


Define the empirically selected variables as $\hat{\bX}^s=\{\bX^{[j]}:\|\bm{w}_0(\hat{\btheta}_1)^{[,j]}\|_2\geq  c_{p_s}, j=1,\ldots,p\}$.
Corollary \ref{cor:threshold} suggests that there exists a threshold with which the penalized WGAN can select a model that only involves the important variables with probability converging to 1.  
Then, the convergence rate of the second stage generator network $g_{\hat{\btheta}_2}$ can be established in term of the Wasserstein distance. 

\begin{theorem}\label{thm2:convergence-rate-stage2}
Suppose that Conditions \ref{C:XY}-\ref{C:Lojasewica} hold. If $p=o(\log^c {n_1})$ with $c\in(0,1)$ and $\lambda_n=O({n_1}^{-\frac{1}{2(p+1)}})$, then with probability $1-c_{p_s}{n_1}^{-\frac{1}{2(a-1)(p+1)}} \log {n_1}$,   
\begin{align}
\mathbb{E}W(P_{\hat{\bX}^s,g_{\hat{\btheta}_2}},P_{\bX^s,Y})\precsim n_2^{-\frac{2}{p_s +1}}\log n_2,
\end{align}
 where $c_{p_s}$ is a positive constant depending on $p_s$.
\end{theorem}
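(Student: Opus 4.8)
The plan is to reduce the Stage-2 convergence-rate statement to a clean nonparametric WGAN estimation bound on the low-dimensional model selected in Stage 1, and then to control the extra randomness introduced by selection via Corollary \ref{cor:threshold}. First I would condition on the favourable selection event $\calE_{n_1}=\{\hat{\bX}^s=\bX^s\}$. By Corollary \ref{cor:threshold} (applied with a union bound over the at most $p$ coordinates, which is absorbed into the constant $c_{p_s}$ since $p=o(\log^c n_1)$), $\Pr(\calE_{n_1})\geq 1-c_{p_s}n_1^{-\frac{1}{2(a-1)(p+1)}}\log n_1$, which matches the probability in the statement. On $\calE_{n_1}$, the Stage-2 problem in (\ref{eq:emp-stage2}) is \emph{exactly} an unpenalized conditional WGAN fitted on the i.i.d.\ sample $\mathcal{A}_2$ with predictor $\bX^s\in\bbR^{p_s}$ and response $Y$, and crucially $\mathcal{A}_2$ is independent of $\mathcal{A}_1$ (hence of $\calE_{n_1}$), so conditioning does not disturb the i.i.d.\ structure of the Stage-2 data.

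The core step is then a stand-alone oracle/convergence bound for the estimator $(\hat\btheta_2,\hat\bphi_2)$ of the minimax WGAN objective over feedforward-network classes $\calG_{\btheta_2}$ (generator) and the 1-Lipschitz discriminator class approximated by $\calF_{\bphi_2}$. I would decompose the Wasserstein error in the usual three pieces: (i) the generator approximation error --- how well some $g\in\calG_{\btheta_2}$ can realize a function whose conditional law matches $P_{Y|\bX^s}$, controlled by the network approximation results invoked earlier (Lemmas in the Appendix referenced around (\ref{eq:Theta-def})) with dimension now $p_s$ rather than $p$; (ii) the discriminator approximation error --- how well $\calF_{\bphi_2}$ approximates $\calF_{\text{Lip}}^1$, again a network approximation bound; and (iii) the statistical/stochastic error from replacing population expectations in (\ref{eq:W1}) by empirical averages over the $n_2$ samples and the $n_2$ noise draws, controlled by a uniform deviation (Rademacher/covering) bound over $\calG_{\btheta_2}\times\calF_{\bphi_2}$. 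Balancing the network size (width/depth growing with $n_2$) against the stochastic error, with the approximation rate for Lipschitz/Hölder targets in dimension $p_s$ scaling like (network size)$^{-2/(p_s+1)}$, yields the rate $n_2^{-\frac{2}{p_s+1}}\log n_2$, the $\log n_2$ factor coming from the metric-entropy / clipping constant in the stochastic term. This is structurally the same argument used to obtain the Stage-1 rate $n_1^{-1/(p+1)}$, so I would largely cite the Appendix machinery and just re-run it with $n_1\mapsto n_2$, $p\mapsto p_s$, and no penalty term.

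To assemble the final bound I would write, for the full (unconditional) expectation,
\begin{align*}
\bbE W(P_{\hat{\bX}^s,g_{\hat\btheta_2}},P_{\bX^s,Y}) = \bbE\big[\,W(\cdot)\,\mathbbm{1}_{\calE_{n_1}}\big] + \bbE\big[\,W(\cdot)\,\mathbbm{1}_{\calE_{n_1}^c}\big],
\end{align*}
but since the theorem is stated ``with probability $1-c_{p_s}n_1^{-\frac{1}{2(a-1)(p+1)}}\log n_1$'', it suffices to work on $\calE_{n_1}$: there $\hat{\bX}^s=\bX^s$ so $W(P_{\hat{\bX}^s,g_{\hat\btheta_2}},P_{\bX^s,Y})=W(P_{\bX^s,g_{\hat\btheta_2}},P_{\bX^s,Y})$ exactly, and the Stage-2 oracle bound (which uses only the internal randomness of $\mathcal{A}_2$, independent of the selection event) gives $\bbE_{\mathcal{A}_2} W\precsim n_2^{-2/(p_s+1)}\log n_2$. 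One technical point to handle carefully: the generator in Stage 2 receives $\hat{\bX}^s$ as input, and when $\hat{\bX}^s=\bX^s$ the effective input dimension is genuinely $p_s$, so the approximation-theoretic exponent legitimately improves from $p$ to $p_s$ --- this is precisely the ``more efficient distribution estimation'' claimed in the abstract, and it must be argued that the selected network class truly restricts to the $p_s$ coordinates rather than merely zeroing out columns of a $p$-wide first layer.

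The main obstacle I anticipate is item (iii) combined with the adversarial structure: one must show that the argmin-argmax estimator $(\hat\btheta_2,\hat\bphi_2)$ actually controls the \emph{population} Wasserstein distance, not just the empirical objective, which requires a uniform two-sided deviation bound over the \emph{product} class $\calG_{\btheta_2}\times\calF_{\bphi_2}$ and a careful handling of the fact that the discriminator is only an approximate (clipped-weight, finite-width) surrogate for $\calF_{\text{Lip}}^1$ --- the gap between $\sup_{f\in\calF_{\bphi_2}}$ and $\sup_{f\in\calF_{\text{Lip}}^1}$ must be shown not to dominate. A secondary subtlety is making the union bound over coordinates in Corollary \ref{cor:threshold} explicit so that the failure probability genuinely has the stated form with a single constant $c_{p_s}$; since $p=o(\log^c n_1)$ this is only a polylogarithmic inflation and is harmless, but it should be stated. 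Apart from these, the argument is a direct transcription of the Stage-1 analysis to a cleaner, penalty-free, lower-dimensional setting.
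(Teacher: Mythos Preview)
Your proposal is correct and follows essentially the same route as the paper: condition on the selection event $\{\hat{\bX}^s=\bX^s\}$ via Corollary~\ref{cor:threshold}, then on that event run the standard three-term decomposition (two stochastic errors plus the discriminator/generator approximation error) for the unpenalized WGAN on the independent sample $\mathcal{A}_2$ in dimension $p_s$, invoking the same lemmas as in Stage~1 with $n_1\mapsto n_2$ and $p\mapsto p_s$. Your additional remarks about the union bound over coordinates and about the product-class deviation are slightly more explicit than the paper, but the argument is the same.
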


\section{Accommodating survival data}\label{sec:P-WGAN-AFT}

Here, we extend the proposed approach to accommodate right censored survival response. Relatively, development of deep networks for survival data has been limited, especially under the WGAN framework and/or when variable selection is needed.

Let $T$ be the event time and $C$ be the censoring time. We observe $(Y,\Delta)$, where $Y=\min(T,C)$, $\Delta=\mathbbm{1}(T\leq C)$, and $\mathbbm{1}(\cdot)$ is the indicator function. Let $\bX\in\mathcal{X}\subset\bbR^p$ be the $p$-dimensional predictor vector. Coherent with the above methodological development, we propose the penalized objective function at the population level as: 
\begin{align}\label{pop-obj:AFT}
    L(g_{\btheta_1},f_{\bphi_1})=\mathbb{E}_{\bX,\bZ}f_{\bphi_1}(\bX,g_{\btheta_1}(\bX,\bZ))-\mathbb{E}_{\bX,T}f_{\bphi_1}(\bX,T)+{\lambda_n}\sum_{j=1}^p\left\|\bm{w}_{0}(\btheta_1)^{[,j]}\right\|_2.
\end{align}
The next step is to construct the corresponding empirical objective function based on the observed censored data $(\bX,Y,\Delta)$, for which we resort to the Kaplan-Meier weighting. 

Consider a random sample $\{(\bx_i,y_i,\Delta_i),i=1,\ldots,n\}$.
Studies such as \cite{stute1996} suggest that the joint distribution function of $(\bX,T)$, denoted by $F_{\bX,T}$, can be consistently estimated using the Kaplan-Meier estimator $\hat{F}_{\bX,T}$ defined as:
\begin{align}\label{eq:KM-estimator}
    \hat{F}_{\bX,T}(\bX,Y)=\sum_{i=1}^n \omega_{(i)}\mathbbm{1}\{\bX_{(i)}\leq \bx,Y_{(i)}\leq y\},
\end{align}
where $Y_{(1)}\leq \ldots\leq Y_{(n)}$ are the order statistics of $Y_i$'s, $\bX_{(1)},\ldots,\bX_{(n)}$ are the associated predictors, and $\omega_{(i)}$'s are the jumps in the Kaplan-Meier estimator and can be computed as:
\begin{align*}
\omega_{(1)}=\frac{\Delta_{(1)}}{n}, \quad \omega_{(i)}=\frac{\Delta_{(i)}}{n-i+1} \prod_{j=1}^{i-1}\left(\frac{n-j}{n-j+1}\right)^{\Delta_{(j)}},\ 
i=2, \ldots, n.
\end{align*}
Here $\Delta_{(1)},\ldots,\Delta_{(n)}$ are the associated indicators of the ordered $Y_i$'s.
Accordingly, the empirical version of (\ref{pop-obj:AFT}) is: 
\begin{align}\label{emp-obj:AFT}
        l_1(g_{\btheta_1},f_{\bphi_1})=\frac{1}{n_1}\sum_{i=1}^{n_1} f_{\bphi_1}(\bx_i,g_{\btheta_1}(\bx_i,\bz_i))-\sum_{i=1}^{n_1} \omega_{(i)} f_{\bphi_1}(\bx_{(i)},y_{(i)})+{\lambda_n}\sum_{j=1}^d\left\|\bm{w}_{0}(\btheta_1)^{[,j]}\right\|_2,
\end{align}
where $\mathbb{E}_{\bX,T}f_{\bphi_1}(\bX,T)$ is estimated by the $\omega_{(ni)}$-weighted mean. Here notations have similar implications as in the previous section. The identification of the important input variables can be conducted in the same manner as above. 

In the second stage of network retraining, we consider the objective function:
\begin{align*}
    l_2(g_{\btheta_2},f_{\bphi_2})=
        \frac{1}{n_2}\sum_{i=1}^{n_2}f_{\bphi_2}(\hat{\bx}_{n_1+i}^s,g_{\btheta_2}(\hat{\bx}_{n_1+i}^s,\bz_i))-\sum_{i=1}^{n_2}\omega_{(i)} f_{\bphi_2}(\hat{\bx}_{n_1+i}^s,y_{n_1+i}).
\end{align*}
The training process is slightly different from the previous section. Details are provided in Algorithm \ref{alg:survival} in Appendix.

To establish convergence rate as in Section \ref{sec:thm}, we first establish Proposition \ref{prop:surivival} to provide the upper bound of the Kaplan-Meier estimator $\hat{F}_{\bX,T}$ and the distribution of interest $F_{\bX,T}$ in terms of the 1-Wasserstein distance.

\begin{proposition}\label{prop:surivival}
Suppose that Conditions \ref{C6:censor-indep}-\ref{C7:integrability} hold and that $(\bX,T)$ satisfies Condition \ref{C:XY}.
Let $\hat{F}_{\bX,T}(\bx,t)=\sum_{i=1}^n w_i\mathbbm{1}\{\bX_{(i)}\leq \bx,Y_{(i)}\leq t\}$ be the corresponding empirical measure. Then, there exists a constant $C$ depending on $M_3$ such that 
   \begin{align}\label{eq:survival-rate}
       \mathbb{E}W(\hat{F}_{\bX,T},F_{\bX,T})\leq C n^{-\frac{1}{p+1}}.
   \end{align}
\end{proposition}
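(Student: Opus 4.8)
The plan is to read the $1$-Wasserstein distance through a dyadic partition of the (bounded) support, to linearize the Kaplan--Meier mass of each small rectangle by Stute's i.i.d.\ representation, and then to sum over scales in the style of Fournier--Guillin. By Condition \ref{C:XY} the support of $(\bX,T)$ lies in a bounded box of $\bbR^{p+1}$; for (sub-)probability measures $\mu,\nu$ supported there one has the standard partition bound
\[
W(\mu,\nu)\precsim 2^{-K}+\sum_{k=0}^{K}2^{-k}\sum_{Q\in\mathcal{Q}_k}\bigl|\mu(Q)-\nu(Q)\bigr| ,
\]
where $\mathcal{Q}_k$ is the set of dyadic cubes of side $2^{-k}$ meeting the support, so $|\mathcal{Q}_k|\precsim 2^{k(p+1)}$. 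Applying this with $\mu=\hat{F}_{\bX,T}$ and $\nu=F_{\bX,T}$ and taking expectations reduces the problem to controlling $\bbE\bigl|\hat{F}_{\bX,T}(Q)-F_{\bX,T}(Q)\bigr|$ for each dyadic rectangle $Q$, with the right dependence on $F_{\bX,T}(Q)$.

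For a fixed rectangle, $\hat{F}_{\bX,T}(Q)=\sum_i\omega_{(i)}\mathbbm{1}_Q(\bX_{(i)},Y_{(i)})$ is a Kaplan--Meier integral of the bounded function $\mathbbm{1}_Q$, so Stute's i.i.d.\ representation \citep{stute1996}, valid under the censoring independence of Condition \ref{C6:censor-indep} and the integrability of Condition \ref{C7:integrability}, gives $\hat{F}_{\bX,T}(Q)=\frac1n\sum_{j=1}^n\psi_Q(\bX_j,Y_j,\Delta_j)+R_n(Q)$, with $\psi_Q$ the influence function and $R_n(Q)$ the higher-order (Duhamel) remainder. I would then use: (i) $\bbE[\psi_Q(\bX,Y,\Delta)]=F_{\bX,T}(Q)$, so the linear term is exactly centred; (ii) since $\psi_Q$ carries the reciprocal censoring-survival weight, Condition \ref{C6:censor-indep} lets one convert $(Y,\Delta)$-integrals back into $T$-integrals and Condition \ref{C7:integrability} then yields $\bbE[\psi_Q^2]\precsim M_3^{2}\,F_{\bX,T}(Q)$, whence $\bbE\bigl|\tfrac1n\sum_j(\psi_Q-\bbE\psi_Q)\bigr|\le\sqrt{\mathrm{Var}(\psi_Q)/n}\precsim M_3\sqrt{F_{\bX,T}(Q)/n}$; and (iii) the remainder is bounded, uniformly over all rectangles, by a fixed multiple of $M_3^{2}\bigl(\sup_t|\hat{G}(t)-G(t)|\bigr)^{2}$, so $\bbE\sup_Q|R_n(Q)|\precsim M_3^{2}n^{-1}$ by the Dvoretzky--Kiefer--Wolfowitz inequality. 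Combining, $\bbE\bigl|\hat{F}_{\bX,T}(Q)-F_{\bX,T}(Q)\bigr|\precsim M_3\sqrt{F_{\bX,T}(Q)/n}+M_3^{2}n^{-1}$.

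For the assembly, Cauchy--Schwarz with $\sum_{Q\in\mathcal{Q}_k}F_{\bX,T}(Q)\le 1$ gives $\sum_{Q\in\mathcal{Q}_k}\sqrt{F_{\bX,T}(Q)}\le|\mathcal{Q}_k|^{1/2}\precsim 2^{k(p+1)/2}$, so the level-$k$ contribution is $\precsim 2^{-k}\bigl(M_3\,2^{k(p+1)/2}n^{-1/2}+M_3^{2}\,2^{k(p+1)}n^{-1}\bigr)$, which for $p\ge 2$ is increasing in $k$; summing over $k\le K$ is dominated by $k=K$, and balancing with the residual $2^{-K}$ via $2^{K}\asymp n^{1/(p+1)}$ makes every piece of order $n^{-1/(p+1)}$, with the constant depending only on $M_3$ and the support diameter. (For $p=1$ the usual logarithmic factor of the two-dimensional empirical-measure rate appears; the substantive case is $p\ge 2$.) The main obstacle is steps (ii)--(iii): one must show that the Kaplan--Meier estimator of a rectangle of small probability has variance of order $F_{\bX,T}(Q)/n$ rather than the crude $1/n$ --- only this binomial-type localization upgrades the naive $n^{-1/(2(p+2))}$ covering rate to the optimal $n^{-1/(p+1)}$ --- and it is exactly here that Condition \ref{C7:integrability} (through $M_3$) is used; controlling the Duhamel remainder uniformly in $Q$ is the secondary point.
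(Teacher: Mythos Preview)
Your route is genuinely different from the paper's. The paper's proof is essentially two citations: it records from Theorem~1.1 of \cite{stute1996} the pointwise bound $\bbE|\hat{F}_{\bX,T}(\bx,t)-F_{\bX,T}(\bx,t)|\le 2F_{\bX,T}(\bx,t)$, notes from Condition~\ref{C:XY} that $M_3:=\bbE\|(\bX,T)\|^3<\infty$, and then invokes Theorem~3.1 of \cite{lei2020} as a black box to obtain the Wasserstein rate $Cn^{-1/(p+1)}$. Your Fournier--Guillin dyadic partition combined with Stute's i.i.d.\ representation is more self-contained and exposes the mechanism behind the rate; the paper trades that transparency for brevity by deferring the analytic work to an external reference.

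That said, two points in your outline need repair. First, Condition~\ref{C:XY} is a first-moment tail condition (sub-Gaussian is cited only as a special case), not a bounded-support assumption; your opening claim that the support lies in a bounded box is simply false as stated, and the partition bound therefore needs an explicit truncation step (restrict to a box of radius $\asymp\log n$ and control the tail contribution via the moment condition), which you omit. Second, you have misidentified $M_3$: in this paper $M_3=\bbE\|(\bX,T)\|^3$ is a third moment of the data, whereas the constants that govern $\bbE[\psi_Q^2]$ in Stute's representation come from $\gamma_0,\gamma_1,\gamma_2$ and the integrability in Condition~\ref{C7:integrability}, not from $M_3$. Your variance bound $\bbE[\psi_Q^2]\precsim M_3^2\,F_{\bX,T}(Q)$ conflates two unrelated constants; the localized $\sqrt{F(Q)/n}$ behaviour is plausible, but the dependence on $M_3$ that the proposition asserts enters through the moment/truncation step, not through the censoring weights. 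The skeleton (partition bound, localize variance, sum over scales, balance $2^K\asymp n^{1/(p+1)}$) is sound, but both issues must be fixed before the argument goes through.
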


Proposition \ref{prop:surivival} is developed based on Proposition 3.1 in \cite{NEURIPS2020_Lu}, which is for data without censoring. Comparable to the result in \cite{NEURIPS2020_Lu}, (\ref{eq:survival-rate}) indicates that the rate, at which the Kaplan-Meier estimator converges to the truth in distribution, is the same as the empirical distribution estimator. Finally, with Proposition \ref{prop:surivival}, Theorem \ref{thm:vary-p} and Theorem \ref{thm2:convergence-rate-stage2} hold for the weighted penalized WGAN for survival data analysis.

\section{Simulation}\label{sec:simulation}

We conduct simulation to gauge performance of the proposed approach. 
For comparison, we consider the following highly relevant alternatives using neural networks: least squares with group Lasso penalization \citep[P-LS,][]{yuan2006model}, deep feature selection with a selection layer \citep[DFS,][]{chen2021nonlinear}, LassoNet \citep{lemhadri2021lassonet}, and the group concave regularization method \citep[GCRNet,][]{luo2023sparseinput}. Specifically, with P-LS, neural network is adopted for nonparametric estimation, and the group Lasso penalization is applied to the first-layer weights. GCRNet applies the minimax concave penalty (MCP) to the first layer-weights for feature selection. Both DFS and LassoNet have special network architectures. In particular, DFS adds an one-to-one selection layer to the network. LassoNet adds a single residual layer to the network, connecting predictors and response, and selection is achieved by allowing a feature to participate in any hidden layer only if its residual-layer representative is active.
Among the alternatives, P-LS can be applied to survival data by changing the least squares objective function to the weighted form corresponding to the AFT model. LassoNet and GCRNet can be applied to survival data by building on the Cox regression. DFS cannot be directly applied to survival data. 
We acknowledge that the field is moving fast and there are other applicable alternatives. The above ones are argubly more popular, have competitive performance, and have been adopted in multiple studies. We implement the proposed method in Pytorch and use the stochastic gradient descent algorithm RMSprop in the training process. For the alternatives, computation follows the original developments. We simulate data from the following models:

\begin{model}\label{M1} A linear model with an additive standard normal error:
$Y=\bX^{\top}\bm{\beta}+\epsilon,\quad \text{ where }\bm{\beta}=(\bm{1}^{\top}_{30},\bm{0}^{\top}_{p_c})^{\top}\text{ and }\epsilon\sim N(0,1).$ 
\end{model}

\begin{model}\label{M2}
A linear model with an additive heavy-tail error:
$Y=\bX^{\top}\bm{\beta}+\epsilon,\quad \text{ where }\bm{\beta}=(\bm{1}^{\top}_{30},\bm{0}^{\top}_{p_c})^{\top}\text{ and }\epsilon\sim t(1).$
\end{model}

\begin{model}\label{M3}
A nonlinear model with an additive standard normal error:
    $Y=\bX^{\top}\bm{\beta}+\exp(X_2+\frac{1}{3}X_3)+\sin(X_4X_5)\epsilon,\quad \text{ where }\bm{\beta}=(\bm{1}^{\top}_{30},\bm{0}^{\top}_{p_c})^{\top}\text{ and }\epsilon\sim N(0,1).$
\end{model}

\begin{model}\label{M4}
    A nonlinear model in the form of a fully connected network with a product standard normal error:
    $Y=(\bm{w}_1\sigma(\bm{w}_0\bX+\bm{b}_0)+b_1)\epsilon,$
    where function $\sigma$ is component-wise defined as  $\sigma(X)=\max(0.01X,X)$, $\bm{w}_0\in\bbR^{16\times d}$ is a fixed matrix with the first 30 columns non-zero, $\bm{w}_1\in\bbR^{16}$, $\bm{b}_0\in\bbR^{16}$ and $b_1\in\bbR$ are fixed vectors -- specifically, they are generated from a standard normal distribution and then kept fixed, and $\epsilon\sim N(0,1)$.
\end{model}

\begin{model}\label{M5}
A nonlinear AFT (accelerated failure time) model with a predictor-dependent censoring mechanism:
$ T=|\bX^{\top}\bm{\beta}(1-\bX^{\top}\bm{\beta})|^{1/2}+|\bX^{\top}\bm{\beta}||\epsilon|,$    
where $C=4\exp(\bX^{\top}\bm{\beta})$, $\epsilon\sim N(0,1)$, and $\bm{\beta}=(\bm{1}^{\top}_{30}, \bm{0}^{\top}_{p_c})^{\top}$.
\end{model}

\begin{model}\label{M6}
    A nonlinear AFT model with a predictor-dependent censoring mechanism:
    \begin{align*}
        T=\left\{
        \begin{array}{lcl}
            \exp(\sqrt{0.1|\bX^{\top}\bm{\beta}}|)-1.1+0.3|\epsilon|  & & 2\bX^{\top}\bm{\beta} \in(-\infty, -6.5] \\
            |0.7X_1^3+0.2X_2^2+0.3X_3+\epsilon| & & 2\bX^{\top}\bm{\beta}\in(-6.5 ,0] \\
            \exp(0.4\bX^{\top}\bm{\beta}+\epsilon) & & 2\bX^{\top}\bm{\beta}\in(0,6.5 ]\\
            |\log(3\bX^{\top}\bm{\beta}+\epsilon)| & & 2\bX^{\top}\bm{\beta}\in(6.5,\infty )
        \end{array}
        \right.,
    \end{align*}
   where $C=4\exp(\bX^{\top}\bm{\beta})$, $\epsilon\sim N(0,1)$, and $\bm{\beta}=(\bm{1}^{\top}_{30}, \bm{0}^{\top}_{p_c})^{\top}$.
\end{model}

For all the models, the predictors are generated from a multivariate standard normal distribution with dimension $p\in\{100,300,500,1000\}$. The censoring rates for M\ref{M5} and M\ref{M6} are about 45$\%$ and 40$\%$, respectively. Both the generator network and discriminator network have two hidden layers with widths (64, 32). The noise vector $\bZ\sim N(\bm{0},\bm{I}_{5})$. 
For the training datasets, we generate 1,000 samples for M\ref{M1} and M\ref{M3}, 10,000 samples for M\ref{M2} and M\ref{M4}, and 5,000 samples for M\ref{M5} and M\ref{M6}. The differences in sample size have been motivated by the differences in model complexity. The sizes of the validation datasets (which are used for network parameter tuning) and testing datasets (which are used for prediction performance evaluation) are 100 and 1,000, respectively. For each model, we simulate 100 replicates. 

Variable selection performance is evaluated using true positive rate (TPR) and false positive rate (FPR) which are defined as:
$\text{TPR} = \frac{|\hat{S}\cap S^{\star}|}{|S^{\star}|}, \text{ and } \text{FPR} = \frac{|\hat{S}\cap S^{c}|}{|S^{c}|},$
where $|\cdot|$ is the cardinality of a set, $\hat{S}$, $S^{\star}$, and $S^{c}$ are the sets of the empirically selected variables, the important variables, and the unimportant variables, respectively. 
Prediction is evaluated using the mean square error between the predicted and observed responses on the testing data, denoted by MSE. For continuous response, for example for the proposed approach, it is computed as $T^{-1}\sum_{i=1}^{T}\|Y_i-J^{-1}\sum_{j=1}^Jg_{\hat{\btheta}_2}(\bX_i,\bZ_j)\|_2^2$, where $g_{\hat{\btheta}_2}$ is the generator network function trained in the second stage, { $T$} is the size of the testing data, and $J$ is the size of the noise samples $\bz$.
For survival response, prediction is evaluated using the Concordance index (C-idx) computed as:
$    \text{C-idx} = \frac{\sum_{i\neq j} \Delta_i \mathbbm{1}(\hat{Y}_i\leq \hat{Y}_j)\{1-2\mathbbm{1}(Y_i\leq Y_i)\} }{\sum_{i\neq j} \Delta_i \mathbbm{1}(Y_i\leq Y_j)},$
where $\hat{Y}_i$ is the predicted survival time calculated as $\hat{Y}_i=J^{-1}\sum_{j=1}^{J}g_{\hat{\btheta}_2}(\bX_i,\bZ_j)$ with $J=50$.

\begin{table}[t]
    \centering
    \caption{Simulation results for variable selection and prediction with $p_s =30$.}
    \begin{threeparttable}
    \resizebox{\textwidth}{!}{\begin{minipage}{\textwidth}
    \begin{tabular}{cc ccc c ccc c ccc c ccc c ccc }
      \hline
      &&\multicolumn{3}{c}{Proposed} & &\multicolumn{3}{c}{P-LS}& & \multicolumn{3}{c}{DFS} & & \multicolumn{3}{c}{LassoNet}& & \multicolumn{3}{c}{GCRNet}\\
      \cline{3-5}\cline{7-9} \cline{11-13} \cline{15-17} \cline{19-21}
      & $p$ & MSE/C-idx & TPR & FPR && MSE/C-idx & TPR & FPR & & MSE/C-idx & TPR & FPR  && MSE/C-idx & TPR & FPR  && MSE/C-idx & TPR & FPR\\
      \hline
      \multirow{4}{*}{M\ref{M1}} & 100 & 1.50(0.06) & 1.00 & 0.00 & & 1.13(0.05) & 1.00 & 0.00 & & 1.61(0.47) & 0.99 & 0.00& & 1.08(0.04) & 1.00 & 0.00 & & 1.59(0.09) & 1.00 & 0.11 \\
      & 300 & 1.51(0.13) & 1.00 & 0.00 & & 1.23(0.12) & 1.00 & 0.00 & & 1.71(0.52) & 0.98 & 0.00& & 1.13(0.06) & 1.00 & 0.00 & & 1.62(0.10) & 1.00 & 0.05\\
      & 500 & 1.54(0.24) & 1.00 & 0.00 & & 1.35(0.08)& 1.00 & 0.00 & & 1.85(0.55) & 0.98 & 0.00& & 1.17(0.04)& 1.00 & 0.00 & & 1.66(0.08) & 1.00 & 0.03\\
      & 1000 & 1.57(0.16) & 1.00 & 0.00 & & 1.42(0.17) & 1.00 & 0.00 & &1.98(0.57) & 0.97 & 0.00& & 1.18(0.05)& 1.00 & 0.00 & & 1.68(0.09) & 1.00 & 0.01\\
      \hline
      \multirow{4}{*}{M\ref{M2}} & 100 & - & 1.00 & 0.00 & & - & 0.58 & 0.46 & & - & 0.36 & 0.27 & & - & 1.00 & 0.32 & & - & 0.06 & 0.05 \\
      & 300 & -  & 0.98 & 0.03 & & - & 0.49 & 0.48 & & - & 0.18 & 0.09 & & - & 1.00 & 0.12 & & - & 0.09 & 0.08 \\
      & 500 & -  & 0.96 & 0.01 & & - & 0.45 & 0.50 & & - & 0.13 & 0.01 & & - & 1.00 & 0.19 & & - & 0.00 & 0.00\\
      & 1000 & - & 0.95 & 0.02 & & - & 0.60 & 0.48 & & - & 0.04 & 0.03 & & - & 1.00 & 0.25 & & - & 0.00 & 0.00 \\
      \hline
      \multirow{4}{*}{M\ref{M3}} & 100 & 1.46(0.27) & 1.00 & 0.00 & & 0.61(0.28) & 1.00 & 0.00 & & 1.55(0.22)& 1.00 & 0.00 & & 1.07(0.49) & 1.00 & 0.00 & & 1.67(0.36) & 1.00 & 0.07 \\
      & 300 & 1.44(0.27)  & 1.00 & 0.00 & & 1.11(0.80) & 1.00 & 0.00 & & 1.55(0.27) & 0.98 & 0.00 & & 1.25(0.65) & 1.00 & 0.00 & & 1.76(0.45) & 1.00 & 0.03\\
      & 500 & 1.45(0.32) & 1.00 & 0.00 & & 1.32(0.69) & 1.00 & 0.00 & & 1.72(0.48) & 0.97 & 0.00 & &1.29(0.53) & 1.00 & 0.00 & & 1.99(0.64) & 1.00 & 0.01\\
      & 1000 & 1.49(0.36) & 1.00 & 0.00 & & 1.45(0.49) & 1.00 & 0.00 & &1.96(0.46) & 0.97 & 0.00 & &1.66(0.40)& 1.00 & 0.00 & & 2.10(0.67) & 1.00 & 0.00 \\
      \hline
      \multirow{4}{*}{M\ref{M4}} & 100 & 8.69(1.03) & 1.00 & 0.00 & & 8.53(1.04) & 0.47 & 0.48 & & 8.97(0.91) & 0.28 & 0.31 & & 8.51(0.87) & 1.00 & 0.84 & & 8.47(1.02) & 0.00 & 0.00\\
      & 300 & 8.66(0.93) & 1.00 & 0.00 & & 7.55(0.67) & 0.52 & 0.51 & & 8.95(0.83) & 0.13 & 0.10 & & 8.71(0.89) & 1.00 & 1.00 & & 8.53(0.81) & 0.00 & 0.00\\
      & 500 & 7.72(0.73) & 0.98 & 0.00 & & 9.21(0.81) & 0.48 & 0.43 & & 8.82(0.93) & 0.07 & 0.06 & & 8.57(0.86) & 1.00 & 1.00 & & 8.94(0.92) & 0.00 & 0.00 \\
      & 1000 & 8.01(0.76) & 0.99 & 0.00 & & 9.65(0.97) & 0.52 & 0.50 & & 8.17(0.66) & 0.05 & 0.03 & & 9.72(0.84) & 1.00 & 1.00 & & 9.13(0.72) & 0.00 & 0.00 \\
    \hline
    \multirow{4}{*}{M\ref{M5}} & 100 & 0.90(0.01) & 1.00 & 0.00 & &  0.91(0.01) & 1.00 & 0.00 & & - & - &  - & & 0.83(0.01) & 0.95 & 0.00 & & 0.79(0.01) & 1.00 & 0.66\\
    & 300 & 0.91(0.01) & 1.00 & 0.00 & & 0.91(0.01) & 1.00 & 0.00 & &  - & - &  - & & 0.82(0.01) & 0.97 & 0.01 && 0.79(0.01) & 1.00 & 0.52\\
    & 500 & 0.91(0.01) & 1.00 & 0.00 & & 0.91(0.01) & 1.00 & 0.00 & & - & - &  - & & 0.82(0.01) & 0.96 & 0.00 && 0.79(0.01) & 1.00 & 0.35 \\
    & 1000 & 0.91(0.01) & 1.00 & 0.00 & &  0.91(0.01) & 1.00 & 0.00 & &  - & - &  - & & 0.80(0.02) & 1.00 & 0.63 && 0.79(0.01) & 1.00 & 0.17\\
    \hline
    \multirow{4}{*}{M\ref{M6}} & 100 & 0.83(0.01) & 0.99 & 0.00 & & 0.79(0.02) & 0.85 & 0.00 & & - & - &  - & & 0.71(0.01) & 0.63 & 0.08 & & 0.49(0.01) & 0.00 & 0.00 \\
    & 300 & 0.83(0.01) & 0.99 & 0.00 & & 0.78(0.02) & 0.81 & 0.06 & & - & - &  - & & 0.68(0.01) & 0.54 & 0.01 & & 0.50(0.00) & 0.00 & 0.00\\
    & 500 & 0.82(0.01) & 0.99 & 0.00 & & 0.78(0.02) & 0.77 & 0.09 & & - & - &  - & & 0.67(0.01) & 0.44 &0.27 & & 0.50(0.01) & 0.00 & 0.00 \\
    & 1000 & 0.82(0.01) & 0.99 & 0.00 & & 0.77(0.01) & 0.73 & 0.01 && - & - &  - & & 0.67(0.01) & 0.45 & 0.48 & & 0.50(0.00) & 0.00 & 0.00\\
    \hline
    \end{tabular}
    Note: 
    The standard errors are given in parentheses. With DFS, the number of variables to be selected is set as 30, which is required during the training process. 
    \end{minipage}}
    \end{threeparttable}
    \label{tab:d30}
\end{table}

The variable selection and prediction results are summarized in Table \ref{tab:d30}. It is observed that the proposed approach has satisfactory variable selection performance with high TPR and low FPR values, while the alternatives fail with Models M\ref{M2}, M\ref{M4}, and M\ref{M6}. In particular, for Model M\ref{M6} which is especially complicated, the proposed approach is the only one that can satisfactorily identify the important variables under all dimensions of $\bX$. In terms of prediction, it has smaller MSEs than DFS and GCRNet, but may have slightly larger values than P-LS and LassoNet in some cases. This observation aligns with expectation, as the objective functions of the latter methods primarily aim at minimizing mean squared error, whereas the proposed emphasizes distribution matching. The differences diminish as the dimension of $\bX$ increases, suggesting that the proposed approach is less affected by dimensionality. This can be attributed to the proposed two-stage strategy. 
It is noted that MSE for M\ref{M2} is not reported because the mean of $t(1)$ is not well defined. Additional numerical results are provided in Appendix, including the evaluation of the distribution estimation and the results with $\bX^{s}\in \bbR^5$.

\section{Data Analysis}

To demonstrate practical utility, we analyze three datasets, namely The Cancer Genome Atlas Lung Adenocarcinoma (TCGA-LUAD) data, the Medical Information Mart for Intensive Care III (MIMIC-III) data \citep{johnson2016mimic}, and the HIV-1 data \citep{rhee2006genotypic}. In the TCGA-LUAD data, both the predictors and response are continuous. In the MIMIC-III data, we conduct two analyses: the first is to identify important biomarkers for albumin level, where both the predictors and response are continuous, and the second is to identify important biomarkers for survival (which is subject to right censoring). In the HIV-1 data, the predictors are binary, and the response is continuous. For the proposed and alternative methods, the implementation details are summarized in Section XX in Appendix.

\subsection{TCGA-LUAD data}\label{subsec: TCGA}
TCGA is an authoritative cancer omics database. LUAD is a major subtype of lung cancer. This dataset contains 460 samples. For the response variable, we first extract 226 features of tumor pathological images, which describe tumors’ micro properties and microenviroment. We then conduct principal component analysis (PCA) and extract the first PC, which can describe the most important pathological properties. For predictors, we consider 21,080 gene expressions and apply marginal screening to reduce the dimension to 900. Overall, this analysis examines how tumor properties are affected by gene expressions. 

Nine genes are identified by the proposed approach, which are {\it ST6GAL1, ISL2, OR10G8, WDR89, OR4N5, TRIM16L, C1orf126, TMEM159, PIK3C2A}. Data is also analyzed using the alternatives considered in simulation. It is observed that different methods lead to different selection results. Out of the four compared methods, P-LS and LassoNet have an overlap in their selection of important genes with the proposed approach. More details are available in Table \ref{tab:TCGA-selection} in Appendix. To evaluate the stability of selection, we resort to a random sampling-based approach with 30 replications. In each replication, we randomly pick 360 samples for selection/estimation and the rest 100 samples for testing. For the proposed approach, P-LS, and LassoNet, several genes are frequently identified, while DFS selects highly different genes, across the replications. GCRNet identifies zero important genes in 27 replications. The details of the selected genes with their frequencies are given in Table \ref{tab:TCGA-gene} in Appendix. Additionally, we also evaluate prediction performance. With the estimation results (using the 360 samples), prediction performance is evaluated for the 100 testing samples using MSE. The results are summarized in Table \ref{tab:real-data-prediction}. It is observed that the proposed approach has the best prediction performance. 

\begin{table}[]
    \centering
    \caption{Data analysis: Prediction performance.}
    \begin{threeparttable}
    \resizebox{\textwidth}{!}{\begin{minipage}{\textwidth}
    \begin{tabular}{c|ccccc}
    \hline
         &Proposed & P-LS & DFS & LassoNet & GCRNet\\
         \hline
       TCGA & 0.917(0.355) & 1.061(0.354) & 1.513(0.090) & 1.516(0.102) & 2.278(0.538)\\
       MIMIC-III (Albumin) & 0.082(0.018) & 0.104(0.020) & 0.084(0.020)& 0.363(0.058)& 0.378(0.054) \\
       MIMIC-III (Survival) & 0.646(0.014) & 0.646(0.022) & - & 0.630(0.029) & 0.563(0.013) \\
        HIV-1 (APV) & 0.462(0.049) & 0.503(0.055) & 0.430(0.042) & 0.484(0.049) & 0.529(0.063)\\
      HIV-1 (SQV) & 0.667(0.059) & 0.662(0.056) & 0.674(0.064)& 0.676(0.053) & 0.731(0.066)\\
       \hline
    \end{tabular}  
    Note: Standard errors are given in parentheses. For MIMIC-III (survival data), c-idx is computed instead of MSE. DFS is not applicable to survival data.
    \end{minipage}}
    \end{threeparttable}
    \label{tab:real-data-prediction}
\end{table}

\subsection{MIMIC-III data on albumin level}\label{subsec: MIMIC-III-albumin}
MIMIC-III dataset is a large openly available electronic health records database, comprising rich information related to patients admitted to critical care units between 2001 and 2012.  It contains data on 38,597 distinct adult patients and covers 12,587 charted observations in total. We refer to the literature \citep{deng2020optimal} for information on data processing. For response, we consider albumin level, which is associated with many diseases such as liver cirrhosis. For predictors, we consider basic information of patients such as gender as well as the charted observations. For each charted observation, the mean and max values are considered (as longitudinal observations are available). The total dimensionality is 155. Those samples with missing response and predictors with missing rates exceeding 40$\%$ are excluded from analysis. The remaining missing values are imputed using the sample means. After this processing, the sample size is 2,233. 

The proposed approach identifies seven important variables, including one on basic patient information (number of admissions) and six charted observations. Among the charted observations, three pertain to different aspects of white blood cells, which are closely related to albumin level \citep{tate2019albumin}. Among the remaining three, two represent the mean and max values of Platelets,  and the third one is Differential-laymphs. In certain situations such as the diagnosis of liver and kidney diseases, Platelets and Differential-laymphs are considered in conjunction with albumin level \citep{jorgensen1980inhibitory}. Among the seven identified variables by the proposed approach, DFS identifies five, P-LS identifies four, and LassoNet and GCRNet identify two. More details are provided in Table \ref{tab:MIMIC-selection} in Appendix. The same evaluations as for the TCGA data are conducted. The splitted sets have sizes 1,933 and 300, respectively. The same four variables are consistently identified by the proposed approach across replications, suggesting a high level of stability. The frequently identified variables of each method are given in Table \ref{tab:high-MIMIC-selected} in Appendix. Table \ref{tab:real-data-prediction} shows that the prediction errors of the proposed approach and DFS are comparable. Additionally, the proposed approach exhibits a prediction error smaller than that of P-LS and significantly smaller than those of LassoNet and GCRNet.

\subsection{MIMIC-III data on survival}\label{subsec: MIMIC-III survival}
The same data processing as above is conducted. The analyzed dataset contains 8,323 patients, and the survival times of 5,517 patients are censored. There are a total of 379 predictors, which describe the basic information of the patients as well as the min, mean, and max values of the charted observations. The proposed approach identifies 11 variables, and LassoNet identifies 15 variables. All of those variables are the charted observations. P-LS identifies 14 variables, among which one is on the patients' basic information. GCRNet does not identify any variable. Additional results are provided in Table \ref{tab:MIMIC-III-survival-selection} in Appendix. In the evaluations, the two splitted sets have sizes 5,323 and 3,000, respectively. Table \ref{tab:real-data-prediction} shows that the c-idx values of the proposed approach and P-LS are the same and slightly larger than LassoNet. For GCRNet, as no significant variable is identified, we use all the variables to compute c-idx, which is much worse than the others.

\subsection{HIV-1 drug resistance data}\label{subsec: HIV}
This dataset contains information on 16 drugs from three classes. We specifically consider two drugs that belong to the protease inhibitors class to identify mutations that are associated with them. The response variable is the log-transformed drug resistance level. Each component of the predictor vector $\bX$ is a binary variable, representing the presence or absence of a mutation. The samples with missing drug resistance information and the mutations that appear less than three times are removed from analysis. The dimension of $\bX$ is 205. There are 768 samples for Amprenavir (APV) and 826 samples for Saquinavir (SQV).

For both APV and SQV, the proposed approach identifies 24 mutations. P-LS and DFS identify 25 mutations. LassoNet identifies 23 mutations for APV and 25 mutations for SQV. GCRNet identifies 18 mutations for APV and 31 mutations for SQV. The detailed list of the identified mutations can be found in Table \ref{tab:HIV-selection} in Appendix. To evaluate identification accuracy, we compare against the expert panel given in \cite{rhee2006genotypic}. It can be seen from Table \ref{tab:HIV} that the proposed approach identifies a larger number of truly important mutations while having the lowest false discoveries in most cases. The prediction results given in Table \ref{tab:real-data-prediction} show that the prediction error of the proposed approach is comparable to or slightly smaller than most of the others.



\bibliography{Ref-main}

\clearpage

\setcounter{subsection}{0}

\renewcommand\thetable{A.\arabic{table}}
\renewcommand\thesubsection{A.\arabic{subsection}}
\renewcommand\thefigure{A.\arabic{figure}}

\section*{Appendix}

\subsection{Conditions}
Denote the class of FNNs with input dimension $p_0$, width $\calW$, and depth $\calL$ by $\mathcal{NN}(p_0,\calW,\calL)$.  Let $\bbN_0$ and $\bbN^{+}$ be the set of non-negative integers and of positive integers, respectively. For a positive constant $c$, $\lfloor c\rfloor$ denotes the largest integer strictly smaller than $c$.

\subsubsection{Assumed Conditions}
\begin{condition}\label{C:XY}
    Let $\bm{V}=(\bX,Y)\sim P_{\bX,Y}$. For some $\delta>0$, $\bm{V}$ satisfies the first moment tail condition $\bbE\|\bm{V}\| \mathbbm{1}\{\|\bm{V}\|>\log t\}=O(t^{-(\log t)^\delta /(p+1)} )$, for any $t \geq 1$.
\end{condition}
\begin{condition}\label{C:significant}
    The dimension of important  features $p_s$ is fixed, and there exists a constant $c_s>0$ such that 
    $\min_{1\leq j\leq p_s}|\bbE_{\bX,Y}[Y|\bX^{[j]}(0)]-\bbE_{\bX,Y}[Y|\bX]|\geq c_s,$ where $\bX^{[j]}(0)=(X^{[1]},\ldots,X^{[j-1]},0,X^{[j+1]},\ldots,X^{[p]})$.
\end{condition}
{
\begin{condition}\label{C:g-smooth}
    The function $g^{\ast}$ defined in (\ref{eq:g-star}) belongs to the H\"older class $\mathcal{H}^{\beta}(\mathcal{X}\times{\calZ} ,B_0)$ for a given $\beta>0$ and a finite constant $B_0>0$, which is defined as
    \begin{align*}
& \mathcal{H}^\beta\left(\mathcal{X}\times{\calZ}, B_0\right)=\left\{f:\mathcal{X}\times{\calZ} \rightarrow \mathbb{R}, \max _{\|\alpha\|_1 \leq\lfloor\beta\rfloor}\left\|\partial^\alpha f\right\|_{\infty} \leq B_0, \max _{\|\alpha\|_1=\lfloor\beta\rfloor} \sup _{ \bm{r} \neq \bm{s}} \frac{\left|\partial^\alpha f(\bm{r})-\partial^\alpha f(\bm{s})\right|}{\|\bm{r}-\bm{s}\|_2^{\beta-\lfloor\beta\rfloor}} \leq B_0\right\} \\
& \text {where} \partial^\alpha=\partial^{\alpha} \cdots \partial^{\alpha_{m+p}} \text { with } \alpha=\left(\alpha, \ldots, \alpha_{m+p}\right)^{\top} \in \mathbb{N}_0^{m+p}.
    \end{align*}
\end{condition}

\begin{condition}\label{C:G-class}
    The generator class $\calG_{\btheta_1}$ is a class of FNNs with width $\calW^2_{\calG_{\btheta_1}}\geq 7(p+m)+1$, depth $\calL_{\calG_{\btheta_1}}\geq 2$, and $\calW^2_{\calG_{\btheta_1}}\calL_{\calG_{\btheta_1}}=cn_1$ with $12\leq c\leq 384$.
    Its parameter $\btheta\in[-B_2,B_2]^{\mathcal{S}_{\btheta}}$,
where $0<B_2<\infty$ is a positive constant and $\mathcal{S}_{\btheta}$ is the size of network $g_{\btheta_1}$.
\end{condition}
\begin{condition}\label{C:F-class}
    The class for network function $f_{\bphi_1}$ is  $\calF_{\bphi_1}\equiv\mathcal{NN}(p+1,\calW_{\calF_{\bphi_1}},\calL_{\calF_{\bphi_1}})$, whose width $\calW_{\calF_{\bphi_1}}=152 (p+1)^2 3^{p+1}N_{f_1}\lceil\log_2(8N_{f_1})\rceil$, depth $\calL_{\calF_{\bphi_1}}=84M_{f_1}\lceil \log_2 (8M_{f_1})\rceil+2(p+1)$, with $N_{f_1},M_{f_1}\in\mathbb{N}^{+}$ and $N_{f_1}M_{f_1}\geq \sqrt{n_1}$.
\end{condition}
}

\begin{condition}\label{C:Lojasewica}
There exists a positive constant $a\geq2$ such that 
$$ d(\btheta,\Theta)^{a}\precsim\{ W(P_{\bX,g_{\btheta_1}(\bX,\bZ)},P_{\bX, Y})-W(P_{\bX, g_{\btheta^{\ast}_1}(\bX,\bZ)},P_{\bX, Y})\}.$$ 
\end{condition}

Condition \ref{C:XY} is a technical condition for dealing with the case where the support of $P_{\bX,Y}$ is an unbounded subset of $\bbR^{p+1}$ and is commonly satisfied. A popular special case is sub-Gaussian.
Condition \ref{C:significant} is a condition on  minimal detected signal with respect to Definition \ref{def:significance}. Similar definitions have been commonly used in the variable selection literature \citep{huang2010,He2018}.
Condition \ref{C:g-smooth} is a smoothness condition for the target function $g^{\ast}$ in (\ref{eq:g-star}), which is commonly required in the approximation theory of neural networks \citep{Schmidt-Hieber2020,shen2021robust}.
Conditions \ref{C:G-class}-\ref{C:F-class} describe the size of the network $g_{\btheta}$ and $f_{\bphi}$, respectively.
Condition \ref{C:Lojasewica} is a technical condition and also has roots in the literature.  
For a fixed network, this condition holds when the network's activation function is analytic, and can be demonstrated through \L ojasewica's inequality. Several activation functions satisfy this condition, including the classic ones such as the linear function, tanh function, and sigmoid function, as well as the newly developed ReLU-type activation functions such as GeLU, ELU, and PELU.
In general, $a$ can be upper-bounded by some positive constant, such as in a polynomial function class, which has been proved by \cite{bolte2010characterizations} and \cite{Lojasiewicz2012}.
This condition is sensible, given the close relationship between the polynomial function class and the network class \citep{Li2019BetterAO,li2020powernet, shen2022estimation}.

\subsubsection{Conditions for survival analysis}

Let $F_T$, $F_C$, and $F_Y$ be the distribution functions of the event time $T$, the censoring time $C$, and the observed time $Y$, respectively.
Let $\tau_T$, $\tau_C$, and $\tau_Y$ be the endpoints of the support of $T$, $C$, and $Y$, respectively.
Then, $1-F_{Y}(y)=(1-F_T(y))(1-F_C(y))$, resulting from the independence assumption of $T$ and $C$. In light of $F_{\bX,T}$, the joint distribution of $(\bX,T)$, denote 
\begin{align*}
    \tilde{F}_{\bX,T}(\bx, t)= \begin{cases}F_{\bX,T}(\bx, t), & t<\tau_Y, \\ F_{\bX,T}\left(\bx, \tau_Y-\right)+F_{\bX,T}\left(\bx, \tau_Y\right) 1\left\{\tau_Y \in \Omega\right\}, & t \geqslant \tau_Y,\end{cases}
\end{align*}
with $\Omega$ denoting the set of atoms of $F_Y$. 
Then, two subdistribution functions are defined as follows:
\begin{align*}
    \tilde{F}_{\bX,Y}^{1}(\bx, y)  =P(\bX \leq \bx, Y \leq y, \Delta=1), \quad
\tilde{F}_Y^0(y)  =P(Y \leq y, \Delta=0).
\end{align*}
For $j=1,\ldots,p$, let
\begin{align*}
    &\bgamma_0(y)=\exp \left\{\int_0^{y-} \frac{\tilde{F}_Y^0(d u)}{1-F_Y(u)}\right\},\\
    &\bgamma_{1,j}(y)=\frac{1}{1-F_{Y}(y)}\int \mathbbm{1}\{y<u\}f(\bx,u)\bgamma_0(u)\tilde{F}_{\bX,Y}^1(d\bx,dz),\\
    & \bgamma_{2,j}(y)=\int\int\frac{\mathbbm{1}\{\eta<y,\eta<z\}f(\bx,z)\bgamma_0(z)}{[1-F_Y(\eta)]^2}\tilde{F}_Y^0(d\eta)\tilde{F}_{\bX,Y}^1(d\bx,dz).
\end{align*}
The following conditions are  assumed.

\begin{condition}\label{C6:censor-indep}
    $T$ and $C$ are independent and $P(T\leq C|T,\bX)=P(T\leq C|\bX)$.
\end{condition}
\begin{condition}\label{C7:integrability} $(X,Y,\Delta)$ satisfies
    \begin{align*}
        &\mathbb{E}_{X,T}[f(\bX,Y)\bgamma_0(Y)\Delta]^2<\infty,\\
        \int|f(\bx,z)|K^{1/2}(u) & \tilde{F}_{\bX,T}(d\bx,du)<\infty, \text{ where } K(y)=\int_{0}^{y^{-}}\frac{F_C(du)}{[(1-F_Y(u))(1-F_C(u))]}.
    \end{align*}
\end{condition}
These two conditions have been commonly assumed.

\clearpage
\subsection{Additional Supporting Lemmas}

\begin{lemma}\label{lem:stochastic-bound}
    Suppose that Conditions \ref{C:XY} and \ref{C:G-class} hold. 
    Then, for  $g_{\hat{\btheta}_1}\in\calG_{\btheta_1}$ obtained in (\ref{eq:est-stage1}),  
    \begin{align}\label{eq:stoch-err-bound}
        \bbE\sup_{f\in\calF^1_\text{Lip}}\left\{\bbE_{\bX, \bZ} f(\bX,  g_{\hat{\btheta}_1}(\bX,\bZ))-\frac{1}{n_1}\sum_{i=1}^{n_1} f(\bX_i,  g_{\hat{\btheta}_1}(\bX_i,\bZ_i))\right\}\precsim{n_1}^{-1/(p+1)} \log {n_1},       
    \end{align}
    where $\mathbb{E}$ is the expectation taken with respect to $\{(\bX_i,Y_i)\}_{i=1}^{n_1}$ and $\{\bZ_i\}_{i=1}^{n_1}$.    
\end{lemma}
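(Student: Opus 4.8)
The quantity on the left-hand side of \eqref{eq:stoch-err-bound} is, by the dual representation of the $1$-Wasserstein distance in \eqref{eq:W1}, exactly $\mathbb{E}\, W(P_{\bX,g_{\hat\btheta_1}},\, \hat P^{(n_1)}_{\bX,g_{\hat\btheta_1}})$, where $\hat P^{(n_1)}_{\bX,g_{\hat\btheta_1}}$ denotes the empirical measure of the sample $\{(\bX_i,g_{\hat\btheta_1}(\bX_i,\bZ_i))\}_{i=1}^{n_1}$ and $P_{\bX,g_{\hat\btheta_1}}$ its population law. So the plan is to bound the expected Wasserstein distance between an empirical measure and its population counterpart, uniformly over the generator class $\calG_{\btheta_1}$, since $\hat\btheta_1$ is data-dependent. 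First I would pass to the uniform bound
\[
\mathbb{E}\,\sup_{g_{\btheta_1}\in\calG_{\btheta_1}} W\!\left(P_{\bX,g_{\btheta_1}},\, \hat P^{(n_1)}_{\bX,g_{\btheta_1}}\right),
\]
which dominates the target; this is legitimate because the same randomness $\{(\bX_i,Y_i,\bZ_i)\}$ drives both the estimator and the empirical measure.

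The core estimate is a moment bound for the Wasserstein distance between an empirical measure on $\bbR^{p+1}$ and the truth. For a distribution with bounded support of diameter $D$ in $\bbR^{p+1}$, the classical rate (Dudley / Fournier--Guillin) gives $\mathbb{E}\,W(P,\hat P^{(n)}) \precsim D\, n^{-1/(p+1)}$ once $p+1\geq 3$. The support of $(\bX,g_{\btheta_1}(\bX,\bZ))$ is unbounded in general, so I would truncate at radius $R=R_{n_1}\asymp \log n_1$: split $\bbR^{p+1}$ into the ball $B(0,R)$ and its complement. On $B(0,R)$ apply the bounded-support rate with $D\asymp R$, giving a contribution $\precsim R\, n_1^{-1/(p+1)} \asymp n_1^{-1/(p+1)}\log n_1$; on the complement use Condition~\ref{C:XY} (the first-moment tail condition) together with the fact that $g_{\btheta_1}$, being an FNN with parameters bounded in $[-B_2,B_2]^{\mathcal S_\btheta}$ and $\calW^2_{\calG}\calL_{\calG}=cn_1$, is Lipschitz in its inputs with a controllable constant, so the tail of $\|g_{\btheta_1}(\bX,\bZ)\|$ is controlled by the tails of $\|\bX\|$ and $\|\bZ\|$; the chosen truncation level makes the tail contribution $o(n_1^{-1/(p+1)})$. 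The dependence on the noise dimension $m$ washes out because $\bZ\sim N(\bm 0,\bm I_m)$ is sub-Gaussian and only its contribution to $\|(\bX,g_{\btheta_1}(\bX,\bZ))\|$ matters, which sits in the fixed $(p+1)$-dimensional output space.

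The delicate point is the supremum over $g_{\btheta_1}\in\calG_{\btheta_1}$: the generator network width and depth grow with $n_1$ (Condition~\ref{C:G-class}), so a naive union bound over a covering of $\calG_{\btheta_1}$ costs a factor depending on the network size $\mathcal S_\btheta$, which is polynomial (indeed roughly linear) in $n_1$. I expect this to be the main obstacle. I would handle it by a covering/chaining argument: cover $\{g_{\btheta_1}:\btheta_1\in[-B_2,B_2]^{\mathcal S_\btheta}\}$ in sup-norm by $\epsilon$-balls; the log-covering number is $\precsim \mathcal S_\btheta \log(B_2/\epsilon)$. For two generators within sup-distance $\epsilon$, the corresponding push-forward measures are within $\epsilon$ in Wasserstein distance (in fact the empirical and population versions both move by at most $\epsilon$), so $W(P_{\bX,g},\hat P_{\bX,g})$ as a function of $g$ is $1$-Lipschitz in the sup-metric. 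Combining the pointwise high-probability bound (which comes from concentration of $W(P,\hat P)$ around its mean — a bounded-differences inequality after truncation, with increments $O(R/n_1)$) with the union bound over the net, and choosing $\epsilon \asymp n_1^{-1/(p+1)}$, the covering term contributes an extra factor $\sqrt{\mathcal S_\btheta \log n_1 / n_1}$, which is of lower order than $n_1^{-1/(p+1)}\log n_1$ because $\mathcal S_\btheta \precsim n_1$ forces $\sqrt{\mathcal S_\btheta/n_1}\precsim 1$ and the true bottleneck is the dimension-driven $n_1^{-1/(p+1)}$ term (here one uses $p+1\geq 3$, consistent with the standing assumption $p=o(\log^c n_1)$ that keeps $p$ from interfering with the $\log n_1$ factor). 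Assembling the three pieces — bounded-support rate, tail truncation via Condition~\ref{C:XY}, and the covering-number union bound — yields $\mathbb{E}\,\sup_{g}W(P_{\bX,g},\hat P^{(n_1)}_{\bX,g})\precsim n_1^{-1/(p+1)}\log n_1$, which gives \eqref{eq:stoch-err-bound} since the left-hand side is dominated by this supremum.
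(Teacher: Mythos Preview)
Your proposal has a genuine gap in the union-bound step over the generator class. You claim the covering term contributes an additive $\sqrt{\mathcal S_\btheta \log n_1/n_1}$ and that this is lower order than $n_1^{-1/(p+1)}\log n_1$ because ``$\mathcal S_\btheta\precsim n_1$ forces $\sqrt{\mathcal S_\btheta/n_1}\precsim 1$.'' But $\precsim 1$ is not $o(1)$: under Condition~\ref{C:G-class} one has $\calW_{\calG_{\btheta_1}}^2\calL_{\calG_{\btheta_1}}=cn_1$, so $\mathcal S_\btheta\asymp n_1$ and your fluctuation term (including the truncation radius $R\asymp\log n_1$ that multiplies the bounded-differences deviation) is of order $R\sqrt{\mathcal S_\btheta\log n_1/n_1}\asymp(\log n_1)^{3/2}$, which \emph{diverges}. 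The target rate $n_1^{-1/(p+1)}\log n_1$ tends to zero, so this term cannot be absorbed. The generator class is simply too rich---its sup-norm log-covering number is $\asymp n_1\log n_1$---for a net-plus-union-bound argument. (A secondary issue: the input-Lipschitz constant of $g_{\btheta_1}$ is of order $(B_2\calW_{\calG_{\btheta_1}})^{\calL_{\calG_{\btheta_1}}}$, not ``controllable,'' so your tail argument for the $g_{\btheta_1}(\bX,\bZ)$ coordinate is also fragile.)

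The paper sidesteps any covering of $\calG_{\btheta_1}$. After symmetrization it bounds the conditional Rademacher complexity $\mathcal R_{n_1}(\calF^1_{\text{Lip}}\mid g)$ by Dudley chaining on the \emph{discriminator} class, using the sup-norm metric entropy $\log N(\delta,\calF^1_{\text{Lip}},\|\cdot\|_\infty)\precsim(\log n_1/\delta)^{p+1}$ on the truncated domain. The key point is that this chaining bound depends only on the domain (hence on $p$ and the truncation level) and on $n_1$, not on the evaluation points $(\bX_i,g(\bX_i,\bZ_i))$; it is therefore automatically uniform in $g$, and the supremum over $\calG_{\btheta_1}$ costs nothing. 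Optimizing the entropy integral at $\delta\asymp n_1^{-1/(p+1)}\log n_1$ then gives the rate directly. So the missing idea is to put the chaining on $\calF^1_{\text{Lip}}$---whose complexity both drives the $n_1^{-1/(p+1)}$ rate and is generator-independent---rather than on $\calG_{\btheta_1}$.
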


\begin{proof}
The proof will be done in two steps.

Let $\{(\bX'_i,Y'_i)\}_{i=1}^{n_1}$ and $\{\bZ'_i\}_{i=1}^{n_1}$ be ${n_1}$ i.i.d. copies of the random vector $(\bX,Y)$ and $\bZ$, respectively.
Let $\bm{\xi}=(\xi_1,\cdots,\xi_{n_1})$ be i.i.d. Rademacher variables, i.e., uniform $\{-1,1\}$.
With the symmetrization technique, we can bound the right hand side of (\ref{eq:stoch-err-bound}) as
\begin{align*}
    &\bbE\sup_{f\in\calF^1_\text{Lip}}\left\{\bbE_{\bX, \bZ} f(\bX,  g_{\hat{\btheta}_1}(\bX,\bZ))-\frac{1}{n_1}\sum_{i=1}^{n_1} f(\bX_i,  g_{\hat{\btheta}_1}(\bX_i,\bZ_i))\right\}\\
    =&\bbE\sup_{f\in\calF^1_\text{Lip}}\left\{\bbE_{\bX', \bZ'} \frac{1}{n_1}\sum_{i=1}^{n_1} f(\bX'_i,  g_{\hat{\btheta}_1}(\bX'_i,\bZ'_i))-\frac{1}{n_1}\sum_{i=1}^{n_1} f(\bX_i,  g_{\hat{\btheta}_1}(\bX_i,\bZ_i))\right\}\\
    \leq& \bbE_{\bX,\bZ,\bX',\bZ'}\left[\sup_{f\in\calF^1_\text{Lip}}\frac{1}{n_1}\sum_{i=1}^{n_1} \left\{f(\bX'_i,  g_{\hat{\btheta}_1}(\bX'_i,\bZ'_i))- f(\bX_i,  g_{\hat{\btheta}_1}(\bX_i,\bZ_i))\right\}\right]\\
    =&\bbE_{\bX,\bZ,\bX',\bZ',\bm{\xi}}\left[\sup_{f\in\calF^1_\text{Lip}}\frac{1}{n_1}\sum_{i=1}^{n_1} \xi_i\left\{f(\bX'_i,  g_{\hat{\btheta}_1}(\bX'_i,\bZ'_i))- f(\bX_i,  g_{\hat{\btheta}_1}(\bX_i,\bZ_i))\right\}\right]\\
    \leq&\bbE_{\bX,\bZ,\bm{\xi}}\left\{\sup_{f\in\calF^1_\text{Lip}}\frac{1}{n_1}\sum_{i=1}^{n_1} \xi_i f(\bX_i,  g_{\hat{\btheta}_1}(\bX_i,\bZ_i))\right\}+\bbE_{\bX',\bZ',\bm{\xi}'}\left\{\sup_{f\in\calF^1_\text{Lip}}\frac{1}{n_1}\sum_{i=1}^{n_1} \xi_i f(\bX'_i,  g_{\hat{\btheta}_1}(\bX'_i,\bZ'_i))\right\}\\
    =&2\bbE_{\bX,\bZ}\bbE_{\bm{\xi}}\left\{\sup_{f\in\calF^1_\text{Lip}}\frac{1}{n_1}\sum_{i=1}^{n_1} \xi_i f(\bX_i,  g_{\hat{\btheta}_1}(\bX_i,\bZ_i))\right\}\\
    \coloneqq&2\bbE_{\bX,\bZ}\mathcal{R}_n(\calF^1_{\text{Lip}}|g_{\hat{\btheta}_1}),
\end{align*}
where $\mathcal{R}_n(\calF^1_{\text{Lip}})$ is the conditional Rademacher complexity of $\calF^1_{\text{Lip}}$ given $g_{\hat{\btheta}_1}$.

To bound $\mathcal{R}_n(\calF^1_{\text{Lip}})$, the chaining method is applied. 
Let $\alpha_0=B$ and $\alpha_t=2^{-t}B$ for any $t\in\bbN^{+}$. 
For each $t$, let $\mathcal{C}_t$ be a $\alpha_t$ cover of $\calF^1_{\text{Lip}}$ with respect to $\|\cdot\|_{2}$ such that, for any $f\in\calF^1_{\text{Lip}}$, there exists a function $\hat{f}_t\in\mathcal{C}_t$ satisfying $\|f-\hat{f}_t\|_{2}\leq\alpha_t$. 
Let $\hat{f}_0\equiv0$, and for any $T\in\mathbb{N}^{+}$, we have the following chaining expression for $f$:
\begin{align*}
    f=f-\hat{f}_T+\sum_{t=1}^{T}(\hat{f}_t-\hat{f}_{t-1}).
\end{align*}
Hence, for any $T\in\mathbb{N}^{+}$, we decompose the Rademacher complexity as
\begin{align}\label{eq:stoch-err-bound-1}
    \mathcal{R}_n(\calF^1_{\text{Lip}}|g_{\hat{\btheta}_1})\leq& \bbE_{\bm{\xi}}\sup_{f\in\calF^1_\text{Lip}}\frac{1}{n_1}\sum_{i=1}^{n_1} \xi_i \left\{f(\bX_i,  g_{\hat{\btheta}_1}(\bX_i,\bZ_i))-\hat{f}_T(\bX_i,  g_{\hat{\btheta}_1}(\bX_i,\bZ_i))\right\}\\
    \nonumber&+\bbE_{\bm{\xi}}\sup_{f\in\calF^1_\text{Lip}}\frac{1}{n_1}\sum_{i=1}^{n_1} \xi_i \left[\sum_{t=1}^{T}\left\{\hat{f}_t(\bX_i,  g_{\hat{\btheta}_1}(\bX_i,\bZ_i))-\hat{f}_{t-1}(\bX_i,  g_{\hat{\btheta}_1}(\bX_i,\bZ_i)))\right\}\right].
\end{align}
By Cauchy-Schwarz inequality, the first term in (\ref{eq:stoch-err-bound-1}) can be bounded by
\begin{align*}
    \frac{1}{n_1}\sum_{i=1}^{n_1} \xi_i \left\{f(\bX_i,  g_{\hat{\btheta}_1}(\bX_i,\bZ_i))-\hat{f}_T(\bX_i,  g_{\hat{\btheta}_1}(\bX_i,\bZ_i))\right\}\leq\|\bm{\xi}\|_2\|f-\hat{f}_T\|_2\leq \alpha_T.
\end{align*}
Then, the second term in (\ref{eq:stoch-err-bound-1}) is the summation of the empirical Rademacher chaos w.r.t. the function classes $\mathcal{C}_t-\mathcal{C}_{t-1}=\{f_t-f_{t-1}:f_t\in\mathcal{C}_{t},f_{t-1}\in\mathcal{C}_{t-1}\}$, $t=1,2,\ldots,T$. 
By triangle inequality,
\begin{align*}
    \|\hat{f}_t-\hat{f}_{t-1}\|_2\leq \|\hat{f}_t-f\|_2+\|f-\hat{f}_{t-1}\|_2\leq \alpha_{t}+\alpha_{t-1}=3\alpha_t.
\end{align*}
Thus, applying Massart's lemma to $\mathcal{C}_t-\mathcal{C}_{t-1}$, $t=1,2,\ldots,T$, we can obtain 
\begin{align*}
    \bbE_{\bm{\xi}}\sup_{f\in\calF^1_\text{Lip}}\frac{1}{n_1}\sum_{i=1}^{n_1} \xi_i \left\{\hat{f}_t(\bX_i,  g_{\hat{\btheta}_1}(\bX_i,\bZ_i))-\hat{f}_{t-1}(\bX_i,  g_{\hat{\btheta}_1}(\bX_i,\bZ_i)))\right\}
    \leq\frac{3\alpha_t\sqrt{2\log(|\mathcal{C}_t||\mathcal{C}_{t-1}|)}}{\sqrt{n_1}}.
\end{align*}
Therefore, for any $T\in\bbN^{+}$, 
\begin{align*}
    \mathcal{R}_{n_1}(\calF^1_{\text{Lip}}|g_{\hat{\btheta}_1})\leq &\alpha_T+3\sum_{t=1}^{T}\frac{ \alpha_t\sqrt{2\log(|\mathcal{C}_t| |\mathcal{C}_{t-1}|)}}{\sqrt{n_1}}\\
    \leq &\alpha_T+6\sum_{t=1}^{T}\frac{ \alpha_t\sqrt{\log(|\mathcal{C}_t|)}}{\sqrt{n_1}}\\
    = &\alpha_T+6\sum_{t=1}^{T}\frac{ (\alpha_t-\alpha_{t-1})\sqrt{\log(|\mathcal{C}_t|)}}{\sqrt{n_1}}\\
    \leq &\alpha_T+12\int_{\alpha_{T+1}}^{\alpha_0}\sqrt{\frac{\log N(\delta,\calF^1_\text{Lip},\|\cdot\|_2)}{n_1}}d\delta,
\end{align*}
where $N(\delta,\calF^1_\text{Lip},\|\cdot\|_2)$ is the covering number of $\calF^1_\text{Lip}$ under $\|\cdot\|_2$.
For any arbitrary small $\delta>0$, we can choose $T$ such that $\alpha_{T+1}\leq\delta<\alpha_{T}$.
Thus,
\begin{align*}
     \mathcal{R}_{n_1}(\calF^1_{\text{Lip}}|g_{\hat{\btheta}_1})\leq &2\delta+12\int_{\delta/2}^{B}\sqrt{\frac{\log N(\delta,\calF^1_\text{Lip},\|\cdot\|_2)}{n_1}}d\delta\\
     \leq&\inf_{0<\delta<B}\left(4\delta+12\int_{\delta}^{B}\sqrt{\frac{\log N(\delta,\calF^1_\text{Lip},\|\cdot\|_2)}{n_1}}d\delta\right)\\
     \leq&\inf_{0<\delta<B}\left(4\delta+12\int_{\delta}^{B}\sqrt{\frac{\log N(\delta,\calF^1_\text{Lip},\|\cdot\|_{\infty})}{n_1}}d\delta\right).
\end{align*}
To find the upper bound of the covering number, we utilize Theorem 2.7.1 in \cite{van1996weak}. Under Condition \ref{C:XY},  there exists a positive constant $C$ such that
\begin{align*}
    \log N(\delta,\calF^1_\text{Lip}(\calX\times\mathcal{Y}),\|\cdot\|_{\infty})\leq C\left(\frac{\log {n_1} }{\delta}\right)^{p+1}.
\end{align*}
Finally, we can establish the upper bound in (\ref{eq:stoch-err-bound}) as
\begin{align*}
    \bbE_{\bX,\bZ}\mathcal{R}_n(\calF^1_{\text{Lip}}|g_{\hat{\btheta}_1})\leq & 4\delta +   \frac{12C}{\sqrt{n_1}}\int_{\delta}^{B}\left(\frac{\log {n_1} }{\epsilon}\right)^{\frac{p+1}{2}}d\epsilon\\
    \precsim& \delta + {n_1}^{-\frac{1}{2}}(\log {n_1})^{\frac{p+1}{2}}\delta^{1-\frac{p+1}{2}}\\
    = &{n_1}^{-\frac{1}{p+1}}\log {n_1} + {n_1}^{-\frac{1}{2}}(\log {n_1})^{\frac{p+1}{2}}({n_1}^{-\frac{1}{p+1}}\log {n_1})^{1-\frac{p+1}{2}}\\
    =&2{n_1}^{-1/(p+1)} \log {n_1},
\end{align*}
where the third line holds by taking $\delta=C{n_1}^{-\frac{1}{p+1}}\log {n_1}$. The proof is accomplished.
\end{proof}

\begin{lemma}\label{lem:stoch-f-XY}
    Suppose that Condition \ref{C:XY} holds. Then, 
    \begin{align}\label{eq:stoch-err-bound-XY}
        \bbE_{\bX_i,Y_i}\sup_{f\in\calF^1_\text{Lip}}\left\{\bbE_{\bX, \bZ} f(\bX,  Y)-\frac{1}{n}\sum_{i=1}^n f(\bX_i,  Y_i)\right\}\precsim \sqrt{p+1}n^{-1/(p+1)}.
    \end{align} 
\end{lemma}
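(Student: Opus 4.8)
The plan is to follow the same symmetrization-plus-chaining template used in Lemma \ref{lem:stochastic-bound}, but now the empirical process is indexed by the fixed function class $\calF^1_{\text{Lip}}$ evaluated on the genuine i.i.d.\ pairs $(\bX_i,Y_i)$, so no auxiliary generator network is involved and the bound can be sharpened in its dependence on the ambient dimension $p+1$. First I would introduce i.i.d.\ copies $\{(\bX'_i,Y'_i)\}$ and Rademacher variables $\bm{\xi}=(\xi_1,\dots,\xi_n)$, and apply the standard symmetrization inequality to replace the centered process by $2\,\bbE_{\bX,Y}\bbE_{\bm{\xi}}\sup_{f\in\calF^1_{\text{Lip}}}\frac1n\sum_{i=1}^n\xi_i f(\bX_i,Y_i)$, i.e.\ twice the (conditional) Rademacher complexity of $\calF^1_{\text{Lip}}$ on the sample.

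Next I would bound that Rademacher complexity by a Dudley-type entropy integral, exactly as in the chaining argument of Lemma \ref{lem:stochastic-bound}: pick the dyadic scales $\alpha_t=2^{-t}B$, build $\alpha_t$-covers $\mathcal{C}_t$ of $\calF^1_{\text{Lip}}$ in $\|\cdot\|_2$, telescope $f=f-\hat f_T+\sum_{t=1}^T(\hat f_t-\hat f_{t-1})$, control the tail term by $\alpha_T$ via Cauchy--Schwarz, apply Massart's lemma to each difference class $\mathcal{C}_t-\mathcal{C}_{t-1}$, and take $T\to\infty$ to get
\begin{align*}
\bbE_{\bm{\xi}}\sup_{f\in\calF^1_{\text{Lip}}}\frac1n\sum_{i=1}^n\xi_i f(\bX_i,Y_i)\precsim \inf_{0<\delta<B}\left(\delta+\int_\delta^B\sqrt{\frac{\log N(\epsilon,\calF^1_{\text{Lip}},\|\cdot\|_\infty)}{n}}\,d\epsilon\right).
\end{align*}
For the covering number I would again invoke Theorem 2.7.1 of \cite{van1996weak} under Condition \ref{C:XY}, which on the (effectively) $(p+1)$-dimensional support of $(\bX,Y)$ gives $\log N(\epsilon,\calF^1_{\text{Lip}},\|\cdot\|_\infty)\precsim (\log n/\epsilon)^{p+1}$. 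Substituting this into the entropy integral and optimizing over $\delta$ — here the natural choice is $\delta\asymp n^{-1/(p+1)}$ up to logarithmic factors — produces a bound of the stated order; the factor $\sqrt{p+1}$ emerges from tracking the dimension-dependent constant in Massart's lemma / the entropy bound rather than absorbing it into $\precsim$, which is why the statement keeps it explicit while suppressing logarithmic factors.

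The main obstacle, and the only substantive point beyond mechanically transcribing Lemma \ref{lem:stochastic-bound}, is handling the \emph{unboundedness} of the support of $P_{\bX,Y}$: the $1$-Lipschitz functions are not uniformly bounded, so one cannot directly cover $\calF^1_{\text{Lip}}$ on all of $\bbR^{p+1}$. The fix is a truncation argument — restrict attention to the ball $\{\|\bm V\|\le \log t\}$ (with $t$ a growing radius tied to $n$), cover $\calF^1_{\text{Lip}}$ restricted to that ball, and bound the contribution of the complement using the first-moment tail condition $\bbE\|\bm V\|\mathbbm 1\{\|\bm V\|>\log t\}=O(t^{-(\log t)^\delta/(p+1)})$ from Condition \ref{C:XY}; this tail is negligible relative to $n^{-1/(p+1)}$ for the appropriate choice of $t$. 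This is exactly the device already implicit in the covering-number statement borrowed in Lemma \ref{lem:stochastic-bound}, so once that is in place the rest is the routine chaining calculation and I would not grind through it in detail.
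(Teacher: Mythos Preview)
Your chaining route is \emph{not} the one the paper takes, and it also will not land on the stated bound. The paper's proof is a two-line reduction: it observes that the left-hand side of \eqref{eq:stoch-err-bound-XY} is exactly $\bbE W_1(P_n,P_{\bX,Y})$, cites Proposition~3.1 of \cite{NEURIPS2020_Lu} (a Fournier--Guillin--type rate for the expected Wasserstein distance between a measure and its empirical version, valid once the third moment is finite), and then uses Condition~\ref{C:XY} together with Markov's inequality to verify $\bbE\|\bm V\|^3<\infty$. No symmetrization, no chaining, no entropy integral.

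Your symmetrization-plus-Dudley argument, carried out exactly as in Lemma~\ref{lem:stochastic-bound}, would yield $n^{-1/(p+1)}\log n$ rather than $\sqrt{p+1}\,n^{-1/(p+1)}$: the covering-number input $\log N(\epsilon,\calF^1_{\text{Lip}},\|\cdot\|_\infty)\precsim(\log n/\epsilon)^{p+1}$ already carries the $\log n$ (from truncating the unbounded support to radius $\log n$), and the optimization over $\delta$ reproduces the very calculation at the end of Lemma~\ref{lem:stochastic-bound}. Your claim that the $\sqrt{p+1}$ factor ``emerges from tracking the dimension-dependent constant in Massart's lemma'' is not how that factor arises; in the cited optimal-transport bounds it comes from a dyadic partition of the \emph{space} $\bbR^{p+1}$ (diameters of cubes), not from chaining over the function class. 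For the downstream use in Theorem~\ref{thm:vary-p} either bound suffices---the $n^{-1/(p+1)}\log n$ term is already present from Lemma~\ref{lem:stochastic-bound} and dominates $(p+1)^{1/2}n^{-1/(p+1)}$ when $p=o(\log^c n)$---but as a proof of the lemma as stated, your plan does not deliver the asserted right-hand side.
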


\begin{proof}
When the third moment of $(\bX,Y)$ exists, the result of (\ref{eq:stoch-err-bound-XY}) can be directly obtained from Proposition 3.1 in \cite{NEURIPS2020_Lu}. Let $\bm{V}=(\bX,Y)\sim P_{\bX,Y}$. Suppose that Condition \ref{C:XY} holds. 
By Markov inequality,
\begin{align*}
         \Pr(\|\bm{V}\|>\log n) \leq \frac{\bbE\|V\| \mathbbm{1}\{\|\bm{V}\|>\log n\}}{\log n}=O\left(n^{-\frac{(\log n)^\delta}{p+1}} / \log n\right).
\end{align*}
Thus,
\begin{align*}
\bbE|\bm{V}|^3=\int_{0}^{\infty}3t^2P(|\bm{V}|>t)dt= \int_{0}^{\infty} O(1)3t\exp\left(-\frac{t^{1+\delta}}{d+1}\right)dt<\infty.
\end{align*}

\end{proof}

\begin{lemma}[Theorem 2.1 in \cite{Yang2021}]\label{lem:g-approx-err}
    Suppose that Conditions \ref{C:XY}-\ref{C:G-class} hold. Then,
    \begin{align}\label{eq:g-approx-err}
        \min_{g_{\btheta_1}\in\calG_{\btheta_1}} W(P_{\bX, g_{\btheta_1}},P_{\bX,Y})\precsim2^{1/(p+m)}{n_1}^{-1/(p+m)}.
    \end{align}
\end{lemma}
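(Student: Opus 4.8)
Since Lemma~\ref{lem:g-approx-err} is stated as Theorem~2.1 of \cite{Yang2021}, the proof reduces to checking that Conditions~\ref{C:XY}--\ref{C:G-class} furnish exactly the hypotheses of that theorem; the plan is to outline this verification and recall the mechanism that produces the stated rate.

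\textbf{Step 1 (transport reduction).} By the noise-outsourcing construction, $g^{\ast}$ satisfies $g^{\ast}(\bx,\bZ)\sim P_{Y\mid\bX=\bx}$ for every $\bx$, and since $\bZ$ is independent of $\bX$ this gives $P_{\bX,g^{\ast}}=P_{\bX,Y}$. Hence for any $g_{\btheta_1}\in\calG_{\btheta_1}$,
\[
W\!\bigl(P_{\bX,g_{\btheta_1}},P_{\bX,Y}\bigr)=W\!\bigl(P_{\bX,g_{\btheta_1}},P_{\bX,g^{\ast}}\bigr)\le\bbE_{\bX,\bZ}\bigl|g_{\btheta_1}(\bX,\bZ)-g^{\ast}(\bX,\bZ)\bigr|,
\]
because pushing $(\bX,\bZ)$ through the two maps is an admissible coupling whose first $p$ coordinates coincide. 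So it suffices to exhibit a network in $\calG_{\btheta_1}$ that is $L^{1}(P_{\bX,\bZ})$-close to $g^{\ast}$ at the claimed rate.

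\textbf{Step 2 (truncation).} Because $\calX\times\calZ$ may be unbounded, fix a radius $R_{n}\asymp\log n_1$ and split $\bbE_{\bX,\bZ}|g_{\btheta_1}-g^{\ast}|$ over $\{\|(\bX,\bZ)\|\le R_{n}\}$ and its complement. On the tail, $\|g^{\ast}\|_{\infty}\le B_0$ by Condition~\ref{C:g-smooth} and $\|g_{\btheta_1}\|_{\infty}$ is controlled by the box constraint $\btheta\in[-B_2,B_2]^{\mathcal{S}_{\btheta}}$ of Condition~\ref{C:G-class}, while $\Pr(\|(\bX,\bZ)\|>R_n)$ decays faster than any polynomial in $n_1$ by the first-moment tail bound of Condition~\ref{C:XY} (inherited by $\bX$) together with the Gaussian tail of $\bZ$; thus the tail contribution is $o(n_1^{-1/(p+m)})$.

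\textbf{Step 3 (network approximation on the core, and combination).} On the ball of radius $R_n$, rescaled to the unit cube in $\bbR^{p+m}$, invoke the ReLU-network approximation bound for H\"older maps underlying \cite{Yang2021} (in the spirit of \cite{Schmidt-Hieber2020}): a network of width $\calW_{\calG_{\btheta_1}}$ and depth $\calL_{\calG_{\btheta_1}}$ with $\calW^{2}_{\calG_{\btheta_1}}\calL_{\calG_{\btheta_1}}=cn_1$ approximates $g^{\ast}$ uniformly on the core with error $\precsim B_0\,2^{1/(p+m)}n_1^{-1/(p+m)}$, and its weights can be arranged to lie in $[-B_2,B_2]^{\mathcal{S}_{\btheta}}$, so it belongs to $\calG_{\btheta_1}$. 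Adding the Step-2 and Step-3 bounds and minimizing over $\calG_{\btheta_1}$ gives $\min_{g_{\btheta_1}\in\calG_{\btheta_1}}W(P_{\bX,g_{\btheta_1}},P_{\bX,Y})\precsim 2^{1/(p+m)}n_1^{-1/(p+m)}$. The main obstacle is Step~3: matching the capacity budget $\calW^{2}\calL\asymp n_1$ of Condition~\ref{C:G-class} to the exponent $1/(p+m)$ while keeping every weight inside $[-B_2,B_2]$, and ensuring the truncation error of Step~2 is not dominant. This is precisely what Theorem~2.1 of \cite{Yang2021} delivers, so in the paper the lemma follows by citing that result once Conditions~\ref{C:XY}--\ref{C:G-class} are recognized as its assumptions.
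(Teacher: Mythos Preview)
Your proposal is correct and matches the paper's approach: the paper does not give any proof of this lemma at all, but simply records it as Theorem~2.1 of \cite{Yang2021} and moves on. You correctly recognize this and then go further than the paper by sketching the underlying mechanism (coupling through the common $(\bX,\bZ)$ to reduce to an $L^1$ approximation problem, truncating to a compact core, and invoking the ReLU H\"older approximation bound on the core), which is helpful exposition the paper omits.

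One minor point of care in your Step~2: saying that $\|g_{\btheta_1}\|_\infty$ is ``controlled by the box constraint'' of Condition~\ref{C:G-class} is a bit loose, since a network with bounded weights but growing width and depth need not have uniformly bounded output. This is harmless here because you are \emph{constructing} a particular approximating network (which can be arranged to stay bounded, e.g.\ by clipping the output or because the Yang--type construction already does so), not bounding an arbitrary element of $\calG_{\btheta_1}$; but it would be worth phrasing it that way.
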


\begin{lemma}\label{lem:f-approx-err}
    Suppose that Conditions \ref{C:XY}-\ref{C:F-class} hold. Then,
    \begin{align}\label{eq:f-approx-err}
    \sup_{f\in\calF^1_\text{Lip}}\inf_{f_{\bphi_1}\in\calF_{\bphi_1}}\|f-f_{\bphi_1}\|_{\infty}\leq 76(p+1)^{3/2}{n_1}^{-1/(p+1)}\log {n_1}.
\end{align}
\end{lemma}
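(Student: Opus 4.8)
The final statement to prove is Lemma~\ref{lem:f-approx-err}, which asserts a uniform approximation bound for the $1$-Lipschitz function class $\calF^1_{\text{Lip}}$ on $\calX\times\mathcal{Y}\subseteq\bbR^{p+1}$ by the ReLU network class $\calF_{\bphi_1}=\mathcal{NN}(p+1,\calW_{\calF_{\bphi_1}},\calL_{\calF_{\bphi_1}})$ whose width and depth are specified in Condition~\ref{C:F-class}.

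\textbf{Overall approach.} The plan is to invoke a known quantitative approximation theorem for Lipschitz (equivalently, $\mathcal{H}^1$-Hölder) functions by deep ReLU networks, and then to reconcile the network-size parameters $(N_{f_1},M_{f_1})$ appearing in Condition~\ref{C:F-class} with the error rate. The most suitable tool is the Shen--Yang--Zhang / Lu--Shen--Yang--Zhang type bound (as also used in Lemma~\ref{lem:g-approx-err} via \cite{Yang2021}): for any Lipschitz function $f$ on $[0,1]^{d}$ (here $d=p+1$) with Lipschitz constant $1$, there is a ReLU network with width $C_1 d^{3/2} N \lceil \log_2(8N)\rceil$ and depth $C_2 M\lceil\log_2(8M)\rceil + 2d$ such that
\begin{align*}
\|f - f_{\bphi_1}\|_{L^\infty([0,1]^d)} \;\lesssim\; \sqrt{d}\,(NM)^{-2/d}.
\end{align*}
The exact constants in that theorem are $152$ (times $(p+1)^2 3^{p+1}$, absorbing a factor handling the unbounded domain) for the width and $84$ for the depth, which is precisely how Condition~\ref{C:F-class} is stated; so the architecture there is exactly the one guaranteed by the approximation theorem with the given $N_{f_1},M_{f_1}$.

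\textbf{Key steps in order.} First I would reduce to a bounded domain: since under Condition~\ref{C:XY} the mass of $P_{\bX,Y}$ outside the cube $[-\log n_1,\log n_1]^{p+1}$ is negligible (cf.\ the truncation argument already used in Lemma~\ref{lem:stoch-f-XY}), it suffices to approximate $f$ uniformly on a cube of side $O(\log n_1)$; rescaling to $[0,1]^{p+1}$ turns a $1$-Lipschitz function into a $(\log n_1)$-Lipschitz function, which accounts for the extra $\log n_1$ factor in the final bound and for the $3^{p+1}$-type blow-up absorbed into the width constant. Second, I would apply the cited ReLU-approximation theorem on $[0,1]^{p+1}$ with the dimension $d=p+1$, obtaining a network in $\calF_{\bphi_1}$ with error $\lesssim \sqrt{p+1}\,(N_{f_1}M_{f_1})^{-2/(p+1)}$. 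Third, I would plug in the constraint $N_{f_1}M_{f_1}\geq \sqrt{n_1}$ from Condition~\ref{C:F-class}, giving $(N_{f_1}M_{f_1})^{-2/(p+1)}\leq n_1^{-1/(p+1)}$. Fourth, I would undo the rescaling: the $L^\infty$ error on the original cube picks up the $\log n_1$ factor from the Lipschitz-constant inflation, and collecting the explicit constants (including the factor $76$ and the $(p+1)^{3/2}$ from $\sqrt{p+1}$ times the rescaling constant) yields $\sup_{f\in\calF^1_{\text{Lip}}}\inf_{f_{\bphi_1}\in\calF_{\bphi_1}}\|f-f_{\bphi_1}\|_\infty \leq 76(p+1)^{3/2} n_1^{-1/(p+1)}\log n_1$. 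Finally, I would note that the $\sup$ over $f\in\calF^1_{\text{Lip}}$ is harmless because the network size needed in the approximation theorem depends only on $d=p+1$, the target accuracy, and the Lipschitz constant, all uniform over the class.

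\textbf{Main obstacle.} The genuinely delicate point is the unbounded-domain handling: $\calX\times\mathcal{Y}$ need not be compact, so one cannot directly apply a cube-based approximation theorem. The resolution is to truncate using Condition~\ref{C:XY} and observe that a $1$-Lipschitz function has at most linear growth, so its contribution outside $[-\log n_1,\log n_1]^{p+1}$ is controlled by $\bbE\|\bm V\|\mathbbm 1\{\|\bm V\|>\log n_1\}$, which is $o(n_1^{-1/(p+1)})$ and hence negligible relative to the stated rate; the network built on the truncated cube, when extended by its natural ReLU extrapolation, remains a valid element of $\calF_{\bphi_1}$. Tracking how the side length $\log n_1$ propagates through the Lipschitz-constant rescaling into both the width constant ($3^{p+1}$ factor) and the final error bound ($\log n_1$ factor), while keeping the constants matching Condition~\ref{C:F-class} exactly, is the bookkeeping-heavy core of the argument; everything else is a direct citation.
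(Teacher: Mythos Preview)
Your proposal is correct and follows essentially the same route as the paper: cite the deep ReLU approximation theorem for Lipschitz functions on $[0,1]^{p+1}$ (the paper uses Corollary~3.1 of \cite{jiao2023AOS}, which has exactly the constants $152$, $84$, $76$ and the width/depth form appearing in Condition~\ref{C:F-class}), then invoke Condition~\ref{C:XY} to truncate to a cube of side $\log n_1$ (yielding the extra $\log n_1$ factor) and plug in $N_{f_1}M_{f_1}\geq\sqrt{n_1}$ to convert $(N_{f_1}M_{f_1})^{-2/(p+1)}$ to $n_1^{-1/(p+1)}$. The paper's proof is terser---it simply writes ``when Condition~\ref{C:XY} holds'' to justify the $\log n_1$ factor---whereas you spell out the truncation/rescaling mechanism explicitly, but the argument is the same.
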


\begin{remark}
Lemmas \ref{lem:g-approx-err}-\ref{lem:f-approx-err} establish the approximation error of the generator network $g_{\btheta_1}$ and the discriminator network $f_{\phi_1}$, respectively. It is noted that they are non-asymptotic. 
That is,  (\ref{eq:g-approx-err}) and (\ref{eq:f-approx-err}) are valid for arbitrary networks $g_{\btheta_1}$ and $f_{\bphi_1}$ satisfying Condition \ref{C:G-class} and Condition \ref{C:F-class}, regardless of $n\to\infty$.
\end{remark}

\begin{proof}
The left hand side of (\ref{eq:f-approx-err}) is the approximation error of discriminator network $f_{\bphi_1}$, which gets smaller with network class $\calF_{\bphi_1}$ getting larger, intuitively. Refer to Corollary 3.1 in \cite{jiao2023AOS}, if the Lipschitz continuous function $f$ is defined on the bounded support $\calX\times\mathcal{Y}=[0,1]^{p+1}$. Then, for any $N_{f_1},M_{f_1}\in \bbN^{+} $, there exists a function $f_{\bphi_1}$ implemented by a ReLu network with width 
$\calW_{\bphi_1}=152 (p+1)^2 3^{p+1}N_{f_1}\lceil\log_2(8N_{f_1})\rceil$ and depth $\calL_{\bphi_1}=84M_{f_1}\lceil \log_2 (8M_{f_1})\rceil+2(p+1)$
such that
\begin{align*}
    \|f-f_{\bphi_1}\|_{\infty}\leq 76 (p+1)^{3/2}(N_{f_1}M_{f_1})^{-2/(p+1)}.
\end{align*}
Based on this result, when Condition \ref{C:XY} holds and network class $\calF_{\bphi_1}$ satisfies Condition \ref{C:F-class},
\begin{align}\label{eq:f-approx-err}
    \sup_{f\in\calF^1_\text{Lip}}\inf_{f_{\bphi_1}\in\calF_{\bphi_1}}\|f-f_{\bphi_1}\|_{\infty}\leq 76(p+1)^{3/2}(N_{f_1}M_{f_1})^{-2/(p+1)}\log n_1.
\end{align}
\end{proof}

\begin{lemma}\label{lem:para-g-theta-ast}
     Under Conditions \ref{C:XY}-\ref{C:F-class}, when $p=o(\log n_1)$, for $ g_{\btheta^{\ast}_1}\in\arg\min_{g_{\btheta_1}\in\calG_{\btheta_1}}W(P_{\bX,g_{\btheta_1}},P_{\bX,Y})$, (i) its first-layer weights $\bm{w}_{0}(\btheta^{\ast})=(\bm{\mu}^{\ast},\bm{\nu}^{\ast},\bm{\upsilon}^{\ast})$ satisfies that $\|\bm{\mu}^{*[,j]}\|_2\neq 0, \ \forall j=1,\ldots,p_s$; (ii) there exists a subset of $g_{\btheta^{\ast}_1}$, whose first-layer parameters $\bm{w}_{0}(\btheta^{\ast})=(\bm{\mu}^{\ast},\bm{0},\bm{\upsilon}^{\ast}).$
\end{lemma}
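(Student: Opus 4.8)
\emph{Proof proposal.} My plan is to handle the two parts separately after one reduction. By (\ref{eq:g-star}) and Definition \ref{def:significance}, $g^{\ast}$ depends on $\bX$ only through $\bX^{s}$; I would write $g^{\ast}(\bx,\bz)=\tilde{g}^{\ast}(\bx^{s},\bz)$, so that $\tilde{g}^{\ast}\in\mathcal{H}^{\beta}(\mathcal{X}^{s}\times\calZ,B_0)$ by Condition \ref{C:g-smooth} ($\mathcal{X}^{s}$ denoting the support of $\bX^{s}$), and set $\beta_{n_1}:=\min_{g_{\btheta_1}\in\calG_{\btheta_1}}W(P_{\bX,g_{\btheta_1}},P_{\bX,Y})=W(P_{\bX,g_{\btheta^{\ast}_1}},P_{\bX,Y})$. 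By Lemma \ref{lem:g-approx-err} and $p=o(\log^{c}n_1)$ with $c<1$, $\beta_{n_1}\precsim n_1^{-1/(p+m)}\to 0$.

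\emph{Part (ii).} I would first build an order-optimal network in which the $\bX^{c}$ inputs are disconnected. Apply the ReLU Hölder-approximation construction underlying Lemma \ref{lem:g-approx-err} to the \emph{reduced} target $\tilde{g}^{\ast}$ on $\mathcal{X}^{s}\times\calZ$: since the prescriptions of Condition \ref{C:G-class} enter only through $\calW_{\calG_{\btheta_1}}^{2}\ge 7(p+m)+1\ge 7(p_s+m)+1$, the identity $\calW_{\calG_{\btheta_1}}^{2}\calL_{\calG_{\btheta_1}}=cn_1$, and the bound $B_2$, the same widths and depths are admissible in $p_s+m$ inputs, yielding a network $\tilde{g}^{0}$ on $\bbR^{p_s+m}$ with $\bbE_{\bX^{s},\bZ}\lvert\tilde{g}^{0}(\bX^{s},\bZ)-\tilde{g}^{\ast}(\bX^{s},\bZ)\rvert$ of the optimal order ($\precsim n_1^{-1/(p_s+m)}$, up to logs). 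Embed $\tilde{g}^{0}$ in $\calG_{\btheta_1}$ by padding its first-layer weight matrix with a zero block in the columns linking $\bX^{c}$, and call the result $g^{0}$, with first-layer weights $(\bm{\mu}^{0},\bm{0},\bm{\upsilon}^{0})$. To carry the bound to the joint law I would use $\bZ\perp\bX$ and, from Definition \ref{def:significance}, $Y\perp\bX^{c}\mid\bX^{s}$: then the conditional law of the last coordinate of $(\bX,Y)$ given $\bX$ is $P_{Y\mid\bX^{s}}=\mathrm{law}(\tilde{g}^{\ast}(\bX^{s},\bZ))$, that of $(\bX,g^{0}(\bX,\bZ))$ is $\mathrm{law}(\tilde{g}^{0}(\bX^{s},\bZ))$, and coupling the two joints through a common $\bX$ and a conditionally optimal coupling of the last coordinates gives
\begin{align*}
W(P_{\bX,g^{0}},P_{\bX,Y})\ \le\ \bbE_{\bX^{s}}\!\left[W\!\big(\mathrm{law}(\tilde{g}^{0}(\bX^{s},\bZ)),\mathrm{law}(\tilde{g}^{\ast}(\bX^{s},\bZ))\big)\right]\ \le\ \bbE_{\bX^{s},\bZ}\big\lvert\tilde{g}^{0}(\bX^{s},\bZ)-\tilde{g}^{\ast}(\bX^{s},\bZ)\big\rvert,
\end{align*}
the last step by $W(\mathrm{law}(U),\mathrm{law}(V))\le\bbE\lvert U-V\rvert$ on $\bbR$. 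Hence $W(P_{\bX,g^{0}},P_{\bX,Y})$ has the same order as $\beta_{n_1}$, so $g^{0}$ is an order-optimal network whose first-layer weights connecting $\bX^{c}$ vanish --- this is the asserted ``subset'' with first-layer weights $(\bm{\mu}^{\ast},\bm{0},\bm{\upsilon}^{\ast})$.

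\emph{Part (i).} I would argue by contradiction: suppose $\lVert\bm{\mu}^{\ast[,j]}\rVert_{2}=0$ for some $1\le j\le p_s$. In an FNN the $j$-th input acts only through the $j$-th column of the first-layer weights, so $g_{\btheta^{\ast}_1}(\bx,\bz)$, and hence its conditional mean $m_{\ast}(\bx):=\bbE_{\bZ}[g_{\btheta^{\ast}_1}(\bx,\bZ)]$, would not depend on $x^{[j]}$. On the other hand $W(P_{\bX,g_{\btheta^{\ast}_1}},P_{\bX,Y})=\beta_{n_1}\to0$; testing this bound against the $1$-Lipschitz functions $R^{-1}\psi(\bx)\big((y\vee(-R))\wedge R\big)$ with $\lVert\psi\rVert_{\infty}\le1$, $\lVert\psi\rVert_{\mathrm{Lip}}\le1$ and a truncation level $R=R(n_1)\to\infty$, and controlling the truncation remainders by Condition \ref{C:XY} (the $Y$-tail) and a matching light-tail estimate for $g_{\btheta^{\ast}_1}(\bX,\bZ)$ (valid since $\lVert\btheta^{\ast}_1\rVert_{\infty}\le B_2$ and $\bZ$ is Gaussian), one obtains that $m_{\ast}$ is close, in a bounded-Lipschitz sense, to $m_{Y}(\bx):=\bbE[Y\mid\bX=\bx]$ with error $\precsim R\beta_{n_1}+o(1)=o(1)$ --- here $p=o(\log^{c}n_1)$ makes $\beta_{n_1}$ decay faster than any power of $\log n_1$. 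But Condition \ref{C:significant} forces $m_{Y}$ to change by at least $c_s$ when the $j$-th coordinate is set to $0$, for every important $j$; since $m_{\ast}$ is unchanged by that operation, the two facts give $c_s\le o(1)$, impossible for large $n_1$. Thus $\lVert\bm{\mu}^{\ast[,j]}\rVert_{2}>0$ for all $j\le p_s$, and the identical argument applied to the $g^{0}$ of Part (ii) gives $\lVert\bm{\mu}^{0[,j]}\rVert_{2}>0$.

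\emph{Main obstacle.} The hardest step, I expect, is the transfer in Part (i) from $W$-closeness of the \emph{joint} laws to usable closeness of the \emph{conditional-mean functions}: the natural test functions $\psi(\bx)y$ are not $1$-Lipschitz on the unbounded support of $(\bX,Y)$, so the argument must truncate at a growing level $R(n_1)$, simultaneously control the $Y$-tail (Condition \ref{C:XY}) and the network-output tail, keep $R(n_1)\beta_{n_1}\to0$ (precisely where $p=o(\log^{c}n_1)$ is used), and then reconcile the resulting bounded-Lipschitz estimate with the exact form of Condition \ref{C:significant}. A milder point, in Part (ii), is that the statement ``$g^{0}\in\Theta$'' asks for the minimum of $W$ over $\calG_{\btheta_1}$ to be \emph{attained} inside the subclass $\{\bm{\nu}=\bm{0}\}$, whereas both my construction and Lemma \ref{lem:g-approx-err} control only the \emph{order} of that minimum; closing this gap --- or, as suffices for the downstream estimation analysis, settling for an order-optimal network with $\bm{\nu}=\bm{0}$ and bounded group-penalty --- is where the last bit of care goes.
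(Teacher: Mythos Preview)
Your Part (i) is in the same spirit as the paper's: both argue by contradiction from $\|\bm{\mu}^{\ast[,j]}\|_2=0$, exploit that the network then ignores $x^{[j]}$, and derive a violation of Condition \ref{C:significant}. The paper is less delicate than you are: it simply tests the Wasserstein distance against the single function $\tilde f(\bx,y)=\bx+y$ to pass directly to $\bbE|g_{\btheta^{\ast}_1}(\bX,\bZ)-g^{\ast}(\bX,\bZ)|$, then replaces $\bX$ by $\bX^{[j]}(0)$ inside $g_{\btheta^{\ast}_1}$, adds and subtracts $g^{\ast}(\bX^{[j]}(0),\bZ)$, and invokes Condition \ref{C:significant}. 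Your truncation machinery is more careful about the Lipschitz constraint on the unbounded support (the paper's one-line passage from a sup of differences of expectations to an expectation of an absolute difference is, as written, loose), but the underlying mechanism is the same.

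Your Part (ii) is where you diverge from the paper, and the divergence creates the very gap you flag at the end. The paper does \emph{not} build a fresh approximator on $\mathcal X^{s}\times\calZ$ and pad with zeros; instead it takes any exact minimizer $g_{\btheta^{\ast}_1}$, sets its $\bm\nu^{\ast}$-block to $\bm 0$ to obtain $\tilde g_{\btheta^{\ast}_1}$, and observes $\tilde g_{\btheta^{\ast}_1}(\bX,\bZ)=g_{\btheta^{\ast}_1}((\bX^{s},\bm 0),\bZ)$. Since $g^{\ast}$ also depends on $\bX$ only through $\bX^{s}$, one has
\[
W(P_{\bX,\tilde g_{\btheta^{\ast}_1}},P_{\bX,g^{\ast}})\ \le\ \bbE_{\bX,\bZ}\bigl|g_{\btheta^{\ast}_1}((\bX^{s},\bm 0),\bZ)-g^{\ast}(\bX,\bZ)\bigr|\ \le\ W(P_{\bX,g_{\btheta^{\ast}_1}},P_{\bX,g^{\ast}}),
\]
which, combined with $\tilde g_{\btheta^{\ast}_1}\in\calG_{\btheta_1}$ and the minimality of $g_{\btheta^{\ast}_1}$, forces equality. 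Thus $\tilde g_{\btheta^{\ast}_1}$ is an \emph{exact} minimizer with $\bm\nu=\bm 0$, not merely an order-optimal one. This is precisely what the downstream Lemma \ref{lem:Dinh2020-expand}(ii) needs (``if we set its $\bm\nu$-components to zero, it still belongs to $\Theta$''), and it closes your gap for free. Your construction via Hölder approximation on $p_s+m$ inputs is correct and even gives a sharper rate, but it only yields $W(P_{\bX,g^{0}},P_{\bX,Y})\precsim \beta_{n_1}$, which need not equal $\beta_{n_1}$; so the ``subset of $g_{\btheta^{\ast}_1}$'' claim is not established. Swapping in the paper's zero-out trick removes this difficulty entirely.
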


\begin{remark}
The proof of (i) is carried out through a contradiction.
    We first assume that there exists a $\|\bm{\mu}^{\ast[,j]}\|_2=0$ with $j\in\{1,\ldots,p_s\}$. 
    Then, the proof is accomplished by showing that the approximation error of $g_{\btheta^{\ast}_1}$ with such parameters cannot converge to zero, which contradicts Lemma \ref{lem:g-approx-err}.
    The proof of (ii) follows a constructive approach.
\end{remark}

\begin{proof}
Recall that the noise-outsourcing lemma guarantees that there exists a measurable function $g^{\ast}$ such that $W(P_{\bX,g^{\ast}},P_{\bX,Y})=0$.
Hence, by the definition of $g_{\btheta^{\ast}_1}$ and Lemma \ref{lem:g-approx-err}, under Conditions \ref{C:XY}-\ref{C:G-class}, 
\begin{align}\label{eq:W-dist-g-theta-star}    W(P_{\bX,g_{\btheta^{\ast}_1}},P_{\bX,g^{\ast}})=W(P_{\bX,g_{\btheta^{\ast}_1}},P_{\bX,Y})\precsim2^{1/(p+m)}{n_1}^{-1/(p+m)}\to0,
\end{align}
when $p=o(\log n_1)$ and $n_1\to\infty$.
Then, we prove two statements  sequentially.

(i).  Suppose that there exists a $g_{\btheta^{\ast}_1}$ whose first-layer weights $\bm{w}_{0}(\btheta^{\ast})=(\bm{\mu}^{\ast},\bm{\nu}^{\ast},\bm{\upsilon}^{\ast})$ satisfy $\|\bm{\mu}^{*[,j]}\|_2=0$ for at least one $j\in\{1,\ldots,p_s\}$.
We introduce a random variable $V$, and for any random vector $\bX\in\calX$, we construct the vector $\bX^{[j]}(0)=(X^{[1]},\ldots,X^{[j-1]},0,X^{[j+1]},\ldots,X^{[p]})$, so that $g_{\btheta_1^{\ast}}(\bX^{[j]}(0),\bZ)=g_{\btheta_1^{\ast}}(\bX,\bZ)$.
Under Condition \ref{C:significant}, for the Wasserstein distance between $P_{\bX,g_{\btheta^{\ast}_1}}$ and $P_{\bX,g^{\ast}}$, 
\begin{align*}
    &W(P_{\bX,g_{\btheta^{\ast}_1}},P_{\bX,g^{\ast}})\\
    =&\sup _{f\in\calF_{\text{Lip}}^1}\left\{\bbE_{\bX, \bZ} f(\bX, g_{\btheta^{\ast}_1}(\bX,\bZ))-\bbE_{\bX, \bZ} f(\bX, g^{\ast}(\bX,\bZ))\right\}\\
    \geq&\bbE_{\bX,\bZ}|\tilde{f}(\bX,g_{\btheta^{\ast}_1}(\bX,\bZ))-\tilde{f}(\bX,g^{\ast}(\bX,\bZ))|\\
    =&\bbE_{\bX,\bZ}|g_{\btheta^{\ast}_1}(\bX,\bZ)-g^{\ast}(\bX,\bZ)|\\
    \geq&|\bbE_{\bX,\bZ}g_{\btheta^{\ast}_1}(\bX^{[j]}(0),\bZ)-\bbE_{\bX,\bZ}g^{\ast}(\bX,\bZ)|\\
     =&|\bbE_{\bX,\bZ}g_{\btheta^{\ast}_1}(\bX^{[j]}(0),\bZ)-\bbE_{\bX,\bZ}g^{\ast}(\bX^{[j]}(0),\bZ)+\bbE_{\bX,\bZ}g^{\ast}(\bX^{[j]}(0),\bZ)-\bbE_{\bX,\bZ}g^{\ast}(\bX,\bZ)|\\
     \geq &\left||\bbE_{\bX,\bZ}g_{\btheta^{\ast}_1}(\bX^{[j]}(0),\bZ)-\bbE_{\bX,\bZ}g^{\ast}(\bX^{[j]}(0),\bZ)|-|\bbE_{\bX,\bZ}g^{\ast}(\bX^{[j]}(0),\bZ)-\bbE_{\bX,\bZ}g^{\ast}(\bX,\bZ)|\right|\\
     \geq&c_s,
\end{align*}
where $\tilde{f}:\calX\times\mathcal{Y}\to\mathbb{R}$ is defined as $\tilde{f}(\bX,Y)=\bX+Y$, belonging to $\calF_{\text{Lip}}^1$.  
$c_s>0$ is a positive constant defined in Condition \ref{C:significant}. The fifth line holds by Condition \ref{C:XY}.  It contradicts (\ref{eq:W-dist-g-theta-star}).
Thus, for any $j=1,\ldots,p_s$, $\|\bm{\mu}^{\ast[,j]}\|\neq 0$.

(ii). Pick any $g_{\btheta^{\ast}_1}$ with first-layer weights $\bm{w}_{0}(\btheta^{\ast})=(\bm{\mu}^{\ast},\bm{\nu}^{\ast},\bm{\upsilon}^{\ast})$, shift $\bm{\nu}^{\ast}=\bm{0}$, and denote the shifted function by $\tilde{g}_{\btheta^{\ast}_1}$.
Then, we have 
\begin{align*}
   W(P_{\bX,\tilde{g}_{\btheta^{\ast}_1}},P_{\bX,g^{\ast}})=&\sup _{f\in\calF_{\text{Lip}}^1}\left\{\bbE_{\bX, \bZ} f(\bX, \tilde{g}_{\btheta^{\ast}_1}(\bX,\bZ))-\bbE_{\bX, \bZ} f(\bX, g^{\ast}(\bX,\bZ))\right\}\\
   \leq &\bbE_{\bX, \bZ}\|(\bX,\tilde{g}_{\btheta^{\ast}_1}(\bX,\bZ))^{\top}-(\bX,g^{\ast}(\bX,\bZ))^{\top}\|_2  \\
   =&\bbE_{\bX, \bZ}|\tilde{g}_{\btheta^{\ast}_1}(\bX,\bZ)-g^{\ast}(\bX,\bZ)|\\
   =&\bbE_{\bX, \bZ}|\tilde{g}_{\btheta^{\ast}_1}(\bX,\bZ)-{g}_{\btheta^{\ast}_1}((\bX^s,\bm{0}),\bZ)+{g}_{\btheta^{\ast}_1}((\bX^s,\bm{0}),\bZ)-g^{\ast}(\bX,\bZ)|\\
   \leq&\bbE_{\bX, \bZ}|\tilde{g}_{\btheta^{\ast}_1}(\bX,\bZ)-{g}_{\btheta^{\ast}_1}((\bX^s,\bm{0}),\bZ)|+\bbE_{\bX, \bZ}|{g}_{\btheta^{\ast}_1}((\bX^s,\bm{0}),\bZ)-g^{\ast}(\bX,\bZ)|,
\end{align*}
where the second line holds as $f$ is a 1-Lipschitz function.
The first term in the right hand side equals zero as
\begin{align*}
    &\bbE_{\bX, \bZ}|\tilde{g}_{\btheta^{\ast}_1}(\bX,\bZ)-{g}_{\btheta^{\ast}_1}((\bX^s,\bm{0}),\bZ)|\\
    =&\bbE_{\bX, \bZ}\left|\bm{w}_{\calL_{\calG_{\btheta_1}}}\sigma(\ldots\bm{w}_1\sigma(\bm{\mu}^{\ast}\bX^{s}+\bm{0}\bX^c+\bm{\upsilon}^{\ast}\bZ+\bm{b}_0)+\bm{b}_1)+\ldots)+\bm{b}_{\calL_{\calG_{\btheta_1}}}\right.\\
    & \left.-\bm{w}_{\calL_{\calG_{\btheta_1}}}\sigma(\ldots\bm{w}_1\sigma(\bm{\mu}^{\ast}\bX^{s}+\bm{\nu}^{\ast}\bm{0}+\bm{\upsilon}^{\ast}\bZ+\bm{b}_0)+\bm{b}_1)+\ldots)+\bm{b}_{\calL_{\calG_{\btheta_1}}}\right|=0.
\end{align*}
Next, consider the second term. Similarly for $\tilde{f}(\cdot)$, 
\begin{align*}
    &\bbE_{\bX, \bZ}|{g}_{\btheta^{\ast}_1}((\bX^s,\bm{0}),\bZ)-g^{\ast}(\bX,\bZ)|\\
    =&\bbE_{\bX, \bZ}\left|{g}_{\btheta^{\ast}_1}((\bX^s,\bX^c),\bZ)-g^{\ast}((\bX^s,\bX^c),\bZ)\mid \bX^c=\bm{0} \right|\\
    \leq&\bbE_{\bX, \bZ}|{g}_{\btheta^{\ast}_1}(\bX,\bZ)-g^{\ast}(\bX,\bZ)|\\
    =&\bbE_{\bX,\bZ}|\tilde{f}(\bX,g_{\btheta^{\ast}_1}(\bX,\bZ))-\tilde{f}(\bX,g^{\ast}(\bX,\bZ))|\\
    \leq &\sup _{f\in\calF_{\text{Lip}}^1}\left\{\bbE_{\bX, \bZ} f(\bX, g_{\btheta^{\ast}_1}(\bX,\bZ))-\bbE_{\bX, \bZ} f(\bX, g^{\ast}(\bX,\bZ))\right\}\\
    =&W(P_{\bX,g_{\btheta^{\ast}_1}},P_{\bX,g^{\ast}})\\
    \to& 0, \quad \text{as }n_1\to\infty,
\end{align*}
where the first line holds because {$g^{\ast}(\bX,\bZ)$ is trivial to $\bX^c$} by Definition \ref{def:significance}.
\end{proof}

\begin{lemma}\label{lem:Dinh2020-expand} 
When Conditions \ref{C:XY}-\ref{C:F-class} hold and $p=o(\log n_1)$, $\Theta$ has the following two properties.
        (i)  For all $\btheta\in\Theta$ and $j=1,\ldots,p_s $, there exists $c_{\mu}>0$ such that $\|\bm{\mu}^{[,j]}\|_2\geq c_{\mu}$. 
        (ii) For any $\btheta\in\Theta$, if we set its $\bm{\nu}$-components to zero, it still belongs to $\Theta$.
\end{lemma}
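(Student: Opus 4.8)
The plan is to deduce both claims from Lemma~\ref{lem:para-g-theta-ast} together with the approximation guarantee of Lemma~\ref{lem:g-approx-err}. The key observation is that $\Theta$ is exactly the set of minimizers of $\btheta\mapsto W(P_{\bX,g_{\btheta_1}},P_{\bX,Y})$ over $\calG_{\btheta_1}$: by definition every $\btheta\in\Theta$ attains the value $W(P_{\bX,g_{\btheta^{\ast}_1}},P_{\bX,Y})=\min_{g_{\btheta_1}\in\calG_{\btheta_1}}W(P_{\bX,g_{\btheta_1}},P_{\bX,Y})$, which by Lemma~\ref{lem:g-approx-err} is $O(n_1^{-1/(p+m)})$ and hence vanishes as $n_1\to\infty$ whenever $p=o(\log n_1)$. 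Since $W(P_{\bX,g^{\ast}},P_{\bX,Y})=0$, for any $\btheta\in\Theta$ we also have $W(P_{\bX,g_{\btheta_1}},P_{\bX,g^{\ast}})=W(P_{\bX,g_{\btheta_1}},P_{\bX,Y})\to 0$. Thus every $\btheta\in\Theta$ approximates $g^{\ast}$ as well as $g_{\btheta^{\ast}_1}$ does, and the arguments proving Lemma~\ref{lem:para-g-theta-ast} can be re-run with an arbitrary $\btheta\in\Theta$ in place of $\btheta^{\ast}$.

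For part (i), fix $\btheta\in\Theta$ with first-layer block $\bm{\mu}$ connecting $\bX^{s}$. If some column $\bm{\mu}^{[,j]}$ with $1\le j\le p_s$ were exactly zero, then $g_{\btheta_1}(\bX^{[j]}(0),\bZ)=g_{\btheta_1}(\bX,\bZ)$, and the chain of inequalities in the proof of Lemma~\ref{lem:para-g-theta-ast}(i) (using the test function $\tilde f(\bx,y)=\sum_k x^{[k]}+y\in\calF^1_{\text{Lip}}$, Condition~\ref{C:significant}, and Condition~\ref{C:XY}) would force $W(P_{\bX,g_{\btheta_1}},P_{\bX,g^{\ast}})\ge c_s>0$, contradicting the display above. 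To upgrade this to a strictly positive \emph{uniform} lower bound $c_\mu$, I would make the estimate quantitative: since the later-layer weights of $g_{\btheta_1}$ are bounded (Condition~\ref{C:G-class}), the map $x^{[j]}\mapsto g_{\btheta_1}(\bx,\bz)$ is Lipschitz with constant at most $L\,\|\bm{\mu}^{[,j]}\|_2$, so $\bbE|g_{\btheta_1}(\bX,\bZ)-g_{\btheta_1}(\bX^{[j]}(0),\bZ)|\le L\|\bm{\mu}^{[,j]}\|_2\,\bbE|X^{[j]}|$; feeding this into the same chain yields $c_s - L\|\bm{\mu}^{[,j]}\|_2\bbE|X^{[j]}|\le W(P_{\bX,g_{\btheta_1}},P_{\bX,g^{\ast}})=o(1)$, hence $\|\bm{\mu}^{[,j]}\|_2\ge c_\mu$. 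Equivalently, one may argue by compactness: $\Theta$ is a closed subset of the box $[-B_2,B_2]^{\mathcal{S}_{\btheta}}$ on which $\btheta\mapsto\min_{j\le p_s}\|\bm{\mu}^{[,j]}\|_2$ is continuous and, by the previous step, strictly positive, so it attains a positive minimum.

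For part (ii), fix $\btheta\in\Theta$ and let $\tilde\btheta$ be obtained by replacing the block $\bm{\nu}$ (connecting $\bX^{c}$) by $\bm{0}$, with associated network $\tilde g_{\btheta_1}\in\calG_{\btheta_1}$ (zeroing weights keeps the parameter in the box of Condition~\ref{C:G-class}). Membership in the network class gives $W(P_{\bX,\tilde g_{\btheta_1}},P_{\bX,Y})\ge\min_{g_{\btheta_1}\in\calG_{\btheta_1}}W(P_{\bX,g_{\btheta_1}},P_{\bX,Y})=W(P_{\bX,g_{\btheta^{\ast}_1}},P_{\bX,Y})$. Conversely, the computation in the proof of Lemma~\ref{lem:para-g-theta-ast}(ii) applies verbatim with $\btheta$ in place of $\btheta^{\ast}$: one has $\tilde g_{\btheta_1}(\bX,\bZ)=g_{\btheta_1}((\bX^{s},\bm{0}),\bZ)$ identically (the first hidden layer receives $\bm{\mu}\bX^{s}+\bm{\upsilon}\bZ+\bm{b}_0$ in both cases), and $g^{\ast}$ does not depend on $\bX^{c}$ by Definition~\ref{def:significance}, so the $1$-Lipschitz bound plus the triangle inequality give $W(P_{\bX,\tilde g_{\btheta_1}},P_{\bX,g^{\ast}})\le W(P_{\bX,g_{\btheta_1}},P_{\bX,g^{\ast}})$, i.e.\ $W(P_{\bX,\tilde g_{\btheta_1}},P_{\bX,Y})\le W(P_{\bX,g_{\btheta_1}},P_{\bX,Y})=W(P_{\bX,g_{\btheta^{\ast}_1}},P_{\bX,Y})$ because $\btheta\in\Theta$. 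Combining the two inequalities yields equality, i.e.\ $\tilde\btheta\in\Theta$.

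The main obstacle is the \emph{uniformity} (in $n_1$ and over $\Theta$) of the constant $c_\mu$ in part (i): because the width and depth of $\calG_{\btheta_1}$ grow with $n_1$, the crude Lipschitz constant $L$ above grows with $n_1$, so the naive quantitative bound delivers a $c_\mu$ that may shrink with $n_1$ -- too weak for the thresholding in Corollary~\ref{cor:threshold}, which needs $d(\hat\btheta_1,\Theta)$ to fall below a fixed positive threshold. Resolving this requires more than bare weight-boundedness -- for instance that every minimizer in $\Theta$ stays close to the fixed $\beta$-smooth target $g^{\ast}$ (Condition~\ref{C:g-smooth}) so that its effective input-sensitivity is controlled independently of $n_1$, or a direct argument that a vanishing $\|\bm{\mu}^{[,j]}\|_2$ is incompatible with Condition~\ref{C:significant} at a rate slower than the approximation error -- and this is the step I would treat most carefully.
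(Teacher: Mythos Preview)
Your approach is essentially the paper's own. For (i) you argue by contradiction, showing that a vanishing $\|\bm{\mu}^{[,j]}\|_2$ would force $g_{\btheta_1}$ to ignore the $j$-th coordinate and hence prevent $W(P_{\bX,g_{\btheta_1}},P_{\bX,g^{\ast}})\to 0$, contradicting Condition~\ref{C:significant}; this is exactly what the paper does (it negates the claim, extracts a sequence $\tilde\btheta_1\in\Theta$ with $\|\tilde{\bm{\mu}}^{[,j]}\|_2\to 0$, deduces $|g_{\tilde\btheta_1}(\bx,\bz)-g_{\tilde\btheta_1}(\bx^{[-j]},\bz)|\to 0$, and combines with $g_{\tilde\btheta_1}\to g^{\ast}$ to contradict importance of $X^{[j]}$). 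The uniformity issue you flag---that the naive Lipschitz constant $L$ of the network with respect to $x^{[j]}$ grows with $n_1$, so $\|\bm{\mu}^{[,j]}\|_2\to 0$ does not obviously give $g_{\btheta_1}(\bx,\bz)-g_{\btheta_1}(\bx^{[j]}(0),\bz)\to 0$---is genuine, and the paper's proof does \emph{not} address it either: it simply asserts the implication. So you are not missing anything relative to the paper; you are being more careful about a step the paper leaves unjustified. Your compactness alternative is fine for each fixed $n_1$ but, as you note, does not by itself give an $n_1$-independent $c_\mu$.

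For (ii) there is a small structural difference worth noting. The paper invokes Lemma~\ref{lem:para-g-theta-ast}(ii) to pick one particular minimizer $\btheta^{\ast}$ already satisfying $\bm{\nu}^{\ast}=\bm{0}$, and then chains equalities
\[
W(P_{\bX,g_{\btheta_1}},P_{\bX,Y})=W(P_{\bX,g_{\btheta^{\ast}_1}},P_{\bX,Y})=W(P_{\bX,g_{\btheta^{\ast}_1}((\bX^s,\bm{0}),\cdot)},P_{\bX,Y})=W(P_{\bX,g_{\btheta_1}((\bX^s,\bm{0}),\cdot)},P_{\bX,Y})=W(P_{\bX,g_{\kappa(\btheta_1)}},P_{\bX,Y}).
\]
Your route is instead a direct sandwich: the trivial lower bound $W(P_{\bX,\tilde g_{\btheta_1}},P_{\bX,Y})\ge \min_{\calG_{\btheta_1}}W$ together with the Lemma~\ref{lem:para-g-theta-ast}(ii) computation rerun at $\btheta$ to get the reverse inequality. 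Both rely on the same identity $\tilde g_{\btheta_1}(\bX,\bZ)=g_{\btheta_1}((\bX^s,\bm{0}),\bZ)$ and on $g^{\ast}$ being trivial in $\bX^c$; your version avoids the third equality in the paper's chain, which is not explicitly justified there.
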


\begin{proof}
Under the same conditions of Lemma \ref{lem:para-g-theta-ast}, for any network function $g_{\btheta_1}$ with $\btheta\in\Theta$,
\begin{align*}
     W(P_{\bX, g_{\btheta_1}},P_{\bX,g^{\ast}})\precsim  (p+1)^{3/2}{n_1}^{-2/(p+1)}\log n_1+2^{1/(p+m)}{n_1}^{-1/(p+m)},
\end{align*}
where $g^{\ast}(\bx,\bz)\sim P_{Y|\bX=\bx}$ for any $\bx\in\calX$ given by the noise-outsourcing lemma.
When $p=o(\log {n_1})$ and ${n_1}\to\infty$, the Wasserstein distance $W(P_{\bX, g_{\btheta_1}},P_{\bX,g^{\ast}})\to0$, implying that $g_{\btheta_1}(\bx,\bz)\to g^{\ast}(\bx,\bz)$, $\forall \bx\in\calX,\bz\in\calZ$.

(i) Assume that no such $c_{\mu}$ exists. Thus, there exists $\tilde{\btheta}_1\in\Theta$ such that its first-layer weights $\|\tilde{\bm{\mu}}^{[,j]}\|_{2}\to0$ as $n_1\to\infty$ for a $j\in\{1,\ldots,p_s\}$.
Since the first $p_s$ components of $\bX$ are assumed to be important, for such $j$, $(\bx^{s[-j]},\bx^{c})=\bx^{[-j]}$. Then, we have
\begin{align*}
     |g_{\tilde{\btheta}_1}(\bx,\bz)-g_{\tilde{\btheta}_1}(\bx^{[-j]},\bz)|\to0, \forall \bx\in\calX,\bz\in\calZ.
\end{align*}
As $\tilde{\btheta}_1\in\Theta$,
\begin{align*}
     g_{\btheta_1}(\bx,\bz)\to g^{\ast}(\bx,\bz),\text{ and }  g_{\btheta_1}(\bx^{[-j]},\bz)\to g^{\ast}(\bx^{[-j]},\bz), \quad \forall \bx\in\calX,\bz\in\calZ.
\end{align*}
Combining the above two expressions leads to the conclusion that 
\begin{align*}
    g^{\ast}(\bx,\bz)=g^{\ast}(\bx^{[-j]},\bz),
\end{align*}
indicating that $P_{Y|\bX=\bx}\sim P_{Y|\bX^{[-j]}=\bx^{[-j]}}$. 
According to Definition \ref{def:significance}, it contradicts the fact that $\tilde{\bm{\mu}}$ connects all the important variables. 

(ii) 
Let $\kappa(\btheta_1)$ be the function that set the $\bm{\nu}$-component of $\btheta$ as a zero matrix.
When the conditions of Lemma \ref{lem:para-g-theta-ast} hold, 
there exists a $g_{\btheta^{\ast}_1}$ with first-layer weights $\bm{w}_{0}(\btheta^{\ast})=(\bm{\mu}^{\ast},\bm{0},\bm{\upsilon}^{\ast}).$
Thus, 
\begin{align*}
    &W(P_{\bX,g_{\btheta_1}(\bX,\bZ)},P_{\bX,Y})=W(P_{\bX,g_{\btheta^{\ast}_1}(\bX,\bZ)},P_{\bX,Y})=W(P_{\bX,g_{\btheta^{\ast}_1}((\bX^s,\bm{0}),\bZ)},P_{\bX,Y})\\
    =&W(P_{\bX,g_{\btheta_1}((\bX^s,\bm{0}),\bZ)},P_{\bX,Y})=W(P_{\bX,g_{\kappa(\btheta_1)}(\bX,\bZ)},P_{\bX,Y}),
\end{align*}
leading to the conclusion that $\kappa(\btheta_1)\in\Theta$.
\end{proof}

\clearpage
\subsection{Proof of Theorem \ref{thm:vary-p}}

The proof is accomplished in four steps. 

{\it Step 1.} Error analysis of $\bbE[d(\hat{\btheta}_1,\Theta)]$.

First, we construct a generator network $g_{\hat{\bm{\vartheta}}_1}$ with parameter $\hat{\bm{\vartheta}}_1$ obtained by 
\begin{align*}
    \hat{\bm{\vartheta}}_1=\arg\min_{\bm{\vartheta}\in\Theta}\|\hat{\btheta}_1-\bm{\vartheta}\|_2,
\end{align*}    
such that $\bbE[d(\hat{\btheta}_1,\Theta)]=\|\hat{\btheta}_1-\hat{\bm{\vartheta}}_1\|_2$.

Under Condition \ref{C:Lojasewica}, there exists a positive constant $a$ such that 
\begin{align}\label{eq:error-analysis}
  d(\btheta,\Theta)^{a} \precsim & \{ W(P_{\bX,g_{\btheta_1}(\bX,\bZ)},P_{\bX, Y})-W(P_{\bX, g_{\btheta^{\ast}_1}(\bX,\bZ)},P_{\bX, Y})\}\\
  \nonumber= & \{ W(P_{\bX,g_{\btheta_1}(\bX,\bZ)},P_{\bX, Y})-W(P_{\bX, g_{\hat{\bm{\vartheta}}_1}(\bX,\bZ)},P_{\bX, Y})\}\\
  \nonumber=& \bbE\left[\sup_{f\in\calF^1_\text{Lip}}\left\{\bbE_{\bX, \bZ} f(\bX,  g_{\hat{\btheta}_1}(\bX,\bZ))-\bbE_{\bX, Y} f(\bX, Y)\right\}\right.\\
  \nonumber &\quad \left.-\sup_{f\in\calF^1_\text{Lip}}\left\{\bbE_{\bX, \bZ } f(\bX,  g_{\hat{\bm{\vartheta}}_1}(\bX,\bZ))-\bbE_{\bX, Y} f(\bX, Y)\right\}\right],
\end{align}
where $\calF_{\text{Lip}}^1=\{f:\mathbb{R}^{p+1}\to\mathbb{R},|f(\bm{u})-f(\bm{v})|\leq \|\bm{u}-\bm{v}\|_2, \bm{u},\bm{v}\in\mathbb{R}^{p+1}\}$ is the 1-Lipschitz function class, and the second line holds by the definition of $\Theta$. For the convenience of presentation, we introduce the following notations: 
\begin{align*}
   &R(\btheta;f)\coloneqq \bbE_{\bX,\bZ} f(\bX,g_{\btheta_1}(\bX,\bZ))-\bbE_{\bX,Y} f(\bX,Y),\\
   &R_{n_1}(\btheta;f)\coloneqq\frac{1}{n_1}\sum_{i=1}^{n_1} f(\bX_i,g_{\btheta_1}(\bX_i,\bZ_i))-\frac{1}{n_1}\sum_{i=1}^{n_1}f(\bX_i,Y_i).
\end{align*}
Then, 
\begin{small}
\begin{align*}
    &d(\btheta,\Theta)^{a}\precsim  \bbE\left[\sup_{f\in\calF^1_\text{Lip}}R(\hat{\btheta}_1;f)-\sup_{f\in\calF^1_\text{Lip}}R(\hat{\bm{\vartheta}}_1;f)\right]\\
        \leq&\bbE\left[\sup_{f\in\calF^1_\text{Lip}}\left\{R(\hat{\btheta}_1;f)-R_{n_1}(\hat{\btheta}_1;f)\right\}+\sup_{f\in\calF^1_\text{Lip}}R_{n_1}(\hat{\btheta}_1;f)-\sup_{f_{\bphi_1}\in\calF_{\bphi_1}}R_{n_1}(\hat{\btheta}_1;f_{\bphi_1})+\sup_{f_{\bphi_1}\in\calF_{\bphi_1}}R_{n_1}(\hat{\btheta}_1;f_{\bphi_1})\right.\\
    &-\left. \sup_{f_{\bphi_1}\in\calF_{\bphi_1}}R_{n_1}(\hat{\bm{\vartheta}}_1;f_{\bphi_1})+\sup_{f_{\bphi_1}\in\calF_{\bphi_1}}R_{n_1}(\hat{\bm{\vartheta}}_1;f_{\bphi_1})-\sup_{f\in\calF^1_\text{Lip}}R_{n_1}(\hat{\bm{\vartheta}}_1;f)+\sup_{f\in\calF^1_\text{Lip}}\left\{R_{n_1}(\hat{\bm{\vartheta}}_1;f)-R(\hat{\bm{\vartheta}}_1;f)\right\}
    \right]\\
    \leq&\bbE\left[\sup_{f\in\calF^1_\text{Lip}}\left\{R(\hat{\btheta}_1;f)-R_{n_1}(\hat{\btheta}_1;f)\right\}+\sup_{f\in\calF^1_\text{Lip}}\inf_{f_{\bphi_1}\in\calF_{\bphi_1}}\left\{R_{n_1}(\hat{\btheta}_1;f)-R_{n_1}(\hat{\btheta}_1;f_{\bphi_1})\right\}+R_{n_1}(\hat{\btheta}_1;\hat{f}_{\bphi_1})\right.\\
    & \left.-R_{n_1}(\hat{\bm{\vartheta}}_1;\hat{f}_{\bphi_1})+\sup_{f_{\bphi_1}\in\calF_{\bphi_1}}\inf_{f\in\calF^1_\text{Lip}}\left\{R_{n_1}(\hat{\bm{\vartheta}}_1;f_{\bphi_1})-R_{n_1}(\hat{\bm{\vartheta}}_1;f)\right\}+\sup_{f\in\calF^1_\text{Lip}}\left\{R_{n_1}(\hat{\bm{\vartheta}}_1;f)-R(\hat{\bm{\vartheta}}_1;f)\right\}\right]\\
    \leq & 2\bbE\left[\sup_{f\in\calF^1_\text{Lip}}\left\{R(\hat{\btheta}_1;f)-R_{n_1}(\hat{\btheta}_1;f)\right\}\right]+2\sup_{f\in\calF^1_\text{Lip}}\inf_{f_{\bphi_1}\in\calF_{\bphi_1}}\|f-f_{\bphi_1}\|_{\infty}+\bbE\left[R_{n_1}(\hat{\btheta}_1;\hat{f}_{\bphi_1})-R_{n_1}(\hat{\bm{\vartheta}}_1;\hat{f}_{\bphi_1})\right]\\
    \coloneqq & \mathcal{E}_1 + \mathcal{E}_2 + \mathcal{E}_3.
\end{align*}
\end{small}

{\it Step 2.} Bound the above three errors.

First, 
$\mathcal{E}_1$ is the stochastic error, which can be bounded by Lemmas \ref{lem:stochastic-bound} and \ref{lem:stoch-f-XY},
and $\mathcal{E}_2$ is the approximation error of the nuisance functional parameter $f$, whose upper bound is given in { (\ref{eq:f-approx-err})}.
Therefore, when Conditions \ref{C:XY} and \ref{C:F-class} hold and $\calF_{\bphi_1}$ further satisfies that $N_{f_1}M_{f_1}\geq\sqrt{n_1}$, we have
\begin{align}\label{eq:thm1-err-result-p1}
    \bbE[d(\hat{\btheta}_1,\Theta)^{a}]\precsim {n_1}^{-\frac{1}{p+1}} \log {n_1} +(p+1)^{\frac{1}{2}}{n_1}^{-\frac{1}{p+1}}+(p+1)^{\frac{3}{2}}{n_1}^{-\frac{2}{p+1}}\log n_1 +\mathcal{E}_3.
\end{align}

Next, consider $\mathcal{E}_3$.
Denote the first-layer weights of $g_{\hat{\btheta}_1}$ and $g_{\hat{\bm{\vartheta}}_1}$ by $\bm{w}_0(\hat{\btheta}_1)=(\hat{\bm{\mu}}_{\btheta_1},\hat{\bm{\nu}}_{\btheta_1},\hat{\bm{\upsilon}}_{\btheta_1})$ and $\bm{w}_0(\hat{\bm{\vartheta}}_1)=(\hat{\bm{\mu}}_{\bm{\vartheta}_1},\hat{\bm{\nu}}_{\bm{\vartheta}_1},\hat{\bm{\upsilon}}_{\bm{\vartheta}_1})$, respectively.
Due to the fact that $(\hat{\btheta}_1,\hat{\bphi_1})=\arg\min_{\btheta}\max_{\bphi_1}l_1(g_{\btheta_1},f_{\bphi_1})$, 
\begin{align}\label{eq:thm1-err-E3-p1}
    \mathcal{E}_3 = \bbE[R_{n_1}(\hat{\btheta}_1;f_{\hat{\bphi}_1})-R_{n_1}(\hat{\bm{\vartheta}}_1;f_{\hat{\bphi}_1})]\leq {\lambda_n} \bbE [J(\hat{\bm{\vartheta}}_1)-J(\hat{\btheta}_1)],
\end{align}
where $J(\btheta_1)= \sum_{j=1}^{p}\|\bm{w}_{0}(\btheta_1)^{[,j]}\|_2$.
Recall the left hand side of (\ref{eq:error-analysis}) $d(\hat{\btheta}_1,\Theta)^{a}=\|\hat{\btheta}_1-\hat{\bm{\vartheta}}_1\|^{a}_2$, where $a$ is a positive constant. We can bound ${\lambda_n}\bbE[J(\hat{\bm{\vartheta}_1})-J(\hat{\btheta}_1)]$ as follows:
 \begin{align*}
       {\lambda_n}\bbE[J(\hat{\bm{\vartheta}_1})-J(\hat{\btheta}_1)]=&{\lambda_n}\bbE\left[\left\{\sum_{j=1}^{p_s}\|\hat{\bm{\mu}}_{\bm{\vartheta}_1}^{[,j]}\|_2+\sum_{k=1}^{p_c}\|\hat{\bm{\nu}}_{\bm{\vartheta}_1}^{[,k]}\|_2\right\}-\left\{\sum_{j=1}^{p_s}\|\hat{\bm{\mu}}_{\btheta_1}^{[,j]}\|_2+\sum_{k=1}^{p_c}\|\hat{\bm{\nu}}_{\btheta_1}^{[,k]}\|_2\right\}\right]\\
         \leq &{\lambda_n}\bbE\left\{\sum_{j=1}^{p_s}\|\hat{\bm{\mu}}_{\bm{\vartheta}_1}^{[,j]}-\hat{\bm{\mu}}_{\btheta_1}^{[,j]}\|_2+\sum_{k=1}^{p_c}\|\hat{\bm{\nu}}_{\bm{\vartheta}_1}^{[,k]}-\hat{\bm{\nu}}_{\btheta_1}^{[,k]}\|_2\right\}\\
        \leq & {\lambda_n}\bbE [\sqrt{p_s}\|\hat{\bm{\mu}}_{\bm{\vartheta}_1}-\hat{\bm{\nu}}_{{\btheta}_1}\|_2+\sqrt{p_c}\|\hat{\bm{\mu}}_{\bm{\vartheta}_1}-\hat{\bm{\nu}}_{{\btheta}_1}\|_2]\\
        \leq& \sqrt{2}{\lambda_n} \bbE [\sqrt{p}\|\hat{\btheta}_1-\hat{\bm{\vartheta}}_1\|_2],
    \end{align*}
where the first inequality holds by the triangle inequality, and the second inequality holds by Cauchy-Schwarz inequality.
Then, as $a>2$, through Young's inequality
\begin{align}\label{eq:thm1-err-E3-p2}
    \bbE [{\lambda_n} \sqrt{p}\|\hat{\btheta}_1-\hat{\bm{\vartheta}}_1\|_2]\leq \bbE\left\{\frac{\|\hat{\btheta}_1-\hat{\bm{\vartheta}}_1\|_2^{a}}{a}+\frac{({\lambda_n} \sqrt{p})^{a/(a-1)}}{(a-1)/a}\right\}.
\end{align}

Finally, combining (\ref{eq:thm1-err-result-p1}), (\ref{eq:thm1-err-E3-p1}) and (\ref{eq:thm1-err-E3-p2}),
\begin{align*}
    \bbE\|\hat{\btheta}_1-\hat{\bm{\vartheta}}_1\|_2^{a}\precsim& {n_1}^{-\frac{1}{p+1}} \log {n_1} +(p+1)^{\frac{1}{2}}{n_1}^{-\frac{1}{p+1}}+(p+1)^{\frac{3}{2}}{n_1}^{-\frac{2}{p+1}}\log n_1 \\
    &+\bbE\left\{\frac{\|\hat{\btheta}_1-\hat{\bm{\vartheta}}_1\|_2^{a}}{a}+\frac{({\lambda_n} \sqrt{p})^{a/(a-1)}}{(a-1)/a}\right\},
\end{align*}
leading to the conclusion
\begin{align*}
    \bbE[d(\hat{\btheta}_1,\Theta)^{a}]\precsim &{n_1}^{-\frac{1}{p+1}} \log {n_1} +(p+1)^{\frac{1}{2}}{n_1}^{-\frac{1}{p+1}}+(p+1)^{\frac{3}{2}}{n_1}^{-\frac{2}{p+1}}\log n_1 +({\lambda_n}\sqrt{p})^{\frac{a}{a-1}},\\
    \precsim& {n_1}^{-\frac{1}{p+1}} \log {n_1} +(p+1)^{\frac{3}{2}}{n_1}^{-\frac{2}{p+1}}\log n_1 +({\lambda_n}\sqrt{p})^{\frac{a}{a-1}}.
\end{align*}
The last line holds because {the second term is dominated} when $p=o(\log n_1)$, as
\begin{align*}
    \frac{{n_1}^{-\frac{1}{p+1}} \log {n_1}}{{n_1}^{-\frac{1}{p+1}}(p+1)^{\frac{1}{2}}}=\frac{\log n_1}{(p+1)^{\frac{1}{2}}}=\frac{\log n_1}{(o(\log n_1)+1)^{1/2}}\to\infty, \quad \text{as }n_1\to\infty. 
\end{align*}

{\it Step 3.} Bound $\bbE\|\hat{\bm{\nu}}_{\btheta_1}\|_2$. 

Let $\kappa(\hat{\bm{\vartheta}}_1)$ be the function that set the $\bm{\nu}$-component of $\bm{\vartheta}_1$ as a zero matrix.
Because $\hat{\bm{\vartheta}}_1\in\Theta$ and $p=o(\log n_1)$, by
Lemma \ref{lem:Dinh2020-expand}, 
$\kappa(\hat{\bm{\vartheta}}_1)$ belongs to $\Theta$ as well.
Thus,
\begin{align*}
    \bbE[R_{n_1}(\hat{\btheta}_1;f_{\hat{\bphi}_1})-R_{n_1}(\kappa(\hat{\bm{\vartheta}}_1);f_{\hat{\bphi}_1})]\leq {\lambda_n} \bbE [J(\kappa(\hat{\bm{\vartheta}}_1))-J(\hat{\btheta}_1)].
\end{align*}
Since $J(\kappa(\hat{\bm{\vartheta}}_1))=\sum_{j=1}^{p_s}\|\hat{\bm{\mu}}_{\bm{\vartheta}_1}^{[,j]}\|_2$, following the same error analysis in Step 1, we obtain
\begin{align*}
    \bbE\left[\sum_{k=1}^{p_c}\|\hat{\bm{\nu}}_{\btheta_1}^{[,k]}\|_2 \right]
   \leq & {\lambda_n}^{-1}\bbE[R_{n_1}(\kappa(\hat{\bm{\vartheta}}_1);f_{\hat{\bphi}_1})-R_{n_1}(\hat{\btheta}_1;f_{\hat{\bphi}_1})] +\bbE\left[\sum_{j=1}^{p_s}\|\hat{\bm{\mu}}_{\bm{\vartheta}_1}^{[,j]}\|_2-\sum_{j=1}^{p_s}\|\hat{\bm{\mu}}_{\btheta}^{[,j]}\|_2\right]\\
   \precsim& {\lambda_n}^{-1}\{{n_1}^{-\frac{1}{p+1}} \log {n_1}+(p+1)^{\frac{3}{2}}{n_1}^{-\frac{2}{p+1}}\log n_1\} +  \bbE [\sqrt{p}\|\hat{\btheta}_1-\hat{\bm{\vartheta}}_1\|_2]\\
   =&{\lambda_n}^{-1}\{{n_1}^{-\frac{1}{p+1}} \log {n_1} +(p+1)^{\frac{3}{2}}{n_1}^{-\frac{2}{p+1}}\log n_1\} +  \sqrt{p} \bbE [d(\hat{\btheta}_1,\Theta)].
\end{align*}

Combining the results of the previous three steps, we reach the conclusion that, under Conditions \ref{C:XY}-\ref{C:Lojasewica},  
there exists a positive constant $a> 2$ such that 
\begin{align*}
    &\bbE[d(\hat{\btheta}_1,\Theta)]\precsim \left\{{n_1}^{-\frac{1}{p+1}} \log {n_1}+(p+1)^{\frac{3}{2}}{n_1}^{-\frac{2}{p+1}}\log n_1 +({\lambda_n}\sqrt{p})^{\frac{a}{a-1}}\right\}^{\frac{1}{a}},\\
    &\bbE\|\hat{\bm{\nu}}_1\|_2\precsim{\lambda_n}^{-1}\{{n_1}^{-\frac{1}{p+1}} \log {n_1}+(p+1)^{\frac{3}{2}}{n_1}^{-\frac{2}{p+1}}\log n_1\} +  \sqrt{p} \bbE[d(\hat{\btheta}_1,\Theta)].
\end{align*}

{\it Step 4.} Determine tuning parameter $\lambda$. 

Note that the convergence rate of the first two terms in the right hand side of the above inequalities cannot be determined when $p=o(\log n_1)$, because 
\begin{align}\label{eq:domate-1}
     \log\left(\frac{{n_1}^{-\frac{1}{p+1}} \log {n_1}}{(p+1)^{\frac{3}{2}}{n_1}^{-\frac{2}{p+1}}\log n_1}\right)
    =\log\left(\frac{{n_1}^{\frac{1}{p+1}}}{(p+1)^{\frac{3}{2}}}\right)=\frac{\log {n_1}}{p+1}-\frac{3}{2}\log(p+1),
\end{align}
where the two terms both converge to positive infinity as $n_1\to\infty$.
To simplify calculation, we assume that $p=o(\log^{c} n_1)$ with $0<c<1$, so that (\ref{eq:domate-1}) goes to positive infinity as
\begin{align*}
    \frac{\log {n_1}}{p+1}-\frac{3}{2}\log(p+1)=&\frac{\log n_1}{o(1)\log^c n_1+1}-\frac{3}{2}\log(o(\log^c n_1)+1)\\
    =&\frac{\log^{1-c} n_1}{o(1)+\log^{-c}n_1}-\frac{3}{2}\log(o(1)\log^c n_1+1)\to\infty, \text{ as }n_1\to\infty,
\end{align*}
where the last equality holds because $\log^{1-c}n_1\gg\log\log n_1$ for any $c\in(0,1)$.
Thus, the result can be reduced to 
\begin{align*}
    \bbE[d(\hat{\btheta}_1,\Theta)]\precsim \left\{{n_1}^{-\frac{1}{p+1}} \log {n_1} +({\lambda_n}\sqrt{p})^{\frac{a}{a-1}}\right\}^{\frac{1}{a}},\ \bbE\|\hat{\bm{\nu}}_1\|_2\precsim{\lambda_n}^{-1}{n_1}^{-\frac{1}{p+1}} \log {n_1} +  \sqrt{p} \bbE[d(\hat{\btheta}_1,\Theta)],
\end{align*}
since the second term can be dominated by ${n_1}^{-\frac{1}{p+1}} \log {n_1}$.
Further, when $\lambda=O({n_1}^{-\frac{1}{2(p+1)}})$,
\begin{align*}
    \bbE[d(\hat{\btheta}_1,\Theta)]\precsim& \left\{{n_1}^{-\frac{1}{p+1}} \log {n_1} + {n_1}^{-\frac{a}{2(p+1)(a-1)}}\log^{\frac{ac}{2(a-1)}} n_1\right\}^{\frac{1}{a}}\\
    \precsim& \left({n_1}^{-\frac{1}{p+1}} \log^c {n_1}\right)^{\frac{1}{2(a-1)}},\\
    \precsim& \left({n_1}^{-\frac{1}{p+1}} \log {n_1}\right)^{\frac{1}{2(a-1)}}\\
    \to&0, \text{ as }n_1\to\infty, 
\end{align*}  
where the third line holds because $\frac{a}{2(a-1)}<1$ when $a>2$, and the last line holds because
 its logarithm converges to negative infinity when $c\in(0,1)$, which is 
\begin{align*}
    \log \left({n_1}^{-\frac{1}{p+1}} \log {n_1}\right)=&\left\{\log\log n_1 - \frac{\log n_1 }{p+1}\right\}\precsim\left\{\log\log n_1 - \frac{\log n_1 }{o(1)\log^{c} n_1+1}\right\}\\ \precsim&\left\{\log\log n_1 - \frac{\log^{1-c} n_1}{o(1)+\log^{-c}n_1}\right\} \to -\infty, \text{ as }n_1\to\infty.
\end{align*}
Similarly, we can obtain
   \begin{align*}
    \bbE\|\hat{\bm{\nu}}_1\|_2\precsim&{\lambda_n}^{-1}{n_1}^{-\frac{1}{p+1}} \log {n_1} +  \sqrt{p} \bbE[d(\hat{\btheta}_1,\Theta)]\\
    \precsim& {n_1}^{\frac{1}{2(p+1)}}{n_1}^{-\frac{1}{p+1}} \log {n_1}+ \log ^{c/2} n_1 \left({n_1}^{-\frac{1}{p+1}} \log^c {n_1}\right)^{\frac{1}{2(a-1)}}\\
    \precsim & {n_1}^{-\frac{1}{2(p+1)}} \log {n_1} + {n_1}^{-\frac{1}{2(a-1)(p+1)}}\log n_1\\
    \precsim& {n_1}^{-\frac{1}{2(a-1)(p+1)}} \log {n_1}.
   \end{align*}
It converges to zero since its logarithm converges to negative infinity as well, which is
\begin{align*}
\log \left({n_1}^{-\frac{1}{2(a-1)(p+1)}} \log {n_1}\right)  =&\log\log n_1 - \frac{\log n_1}{2(a-1)(p+1)}\precsim \log\log n_1 - \frac{1}{2(a-1)}\frac{\log n_1 }{o(1)\log^{c} n_1+1} \\
\precsim& \log\log n_1 - \frac{1}{2(a-1)}\frac{\log^{1-c} n_1}{o(1)+\log^{-c}n_1} \to -\infty, \text{ as }n_1\to\infty.
\end{align*}

\clearpage
\subsection{Proof of Corollary \ref{cor:fixed-p}}

When $p$ is fixed and $\lambda_n=O(n_1^{-\frac{1}{2(p+1)}})$, 
\begin{align*}
   \bbE[d(\hat{\btheta}_1,\Theta)]\precsim& \left\{{n_1}^{-\frac{1}{p+1}} \log {n_1}  +(p+1)^{\frac{3}{2}}{n_1}^{-\frac{2}{p+1}}\log n_1 +({\lambda_n}\sqrt{p})^{\frac{a}{a-1}}\right\}^{\frac{1}{a}}\\
   \precsim &\left\{{n_1}^{-\frac{1}{p+1}} \log {n_1} +{\lambda_n}^{\frac{a}{a-1}}\right\}^{\frac{1}{a}}\\
   =&\left\{{n_1}^{-\frac{1}{p+1}} \log {n_1} +{n_1^{-\frac{a}{2(p+1)(a-1)}}}\right\}^{\frac{1}{a}}\\
   \precsim& \left(\frac{\log n_1}{n_1}\right)^{\frac{1}{2(p+1)(a-1)}}.
\end{align*}
Similarly, we can obtain 
\begin{align*}
    \bbE\|\hat{\bm{\nu}}_1\|_2\precsim &{\lambda_n}^{-1}{n_1}^{-\frac{1}{p+1}} \log {n_1} + \bbE[d(\hat{\btheta}_1,\Theta)]\\
    \precsim & {n_1}^{\frac{1}{2(p+1)}}{n_1}^{-\frac{1}{p+1}} \log {n_1} + \left(\frac{\log n_1}{n_1}\right)^{\frac{1}{2(p+1)(a-1)}}\\
    = & {n_1}^{-\frac{1}{2(p+1)}} \log {n_1} + \left(\frac{\log n_1}{n_1}\right)^{\frac{1}{2(p+1)(a-1)}}\\
    \precsim&\left(\frac{\log n_1}{n_1}\right)^{\frac{1}{2(p+1)(a-1)}}.
\end{align*}

\clearpage
\subsection{Proof of Corollary \ref{cor:threshold}}
Recall that for the estimate $\hat{\btheta}_1$ obtained in (\ref{eq:emp-stage1}), $\bbE[d(\hat{\btheta}_1,\Theta)]=\|\hat{\btheta}_1-\hat{\bm{\vartheta}}_1\|_2$, where $\hat{\bm{\vartheta}}_1=\arg\min_{\bm{\vartheta}\in\Theta}\|\hat{\btheta}_1-\bm{\vartheta}\|_2$.
Let $\bm{w}_0(\hat{\btheta}_1)=(\hat{\bm{\mu}}_{\btheta_1},\hat{\bm{\nu}}_{\btheta_1},\hat{\bm{\upsilon}}_{\btheta_1})$ and $\bm{w}_0(\hat{\bm{\vartheta}}_1)=(\hat{\bm{\mu}}_{\bm{\vartheta}_1},\hat{\bm{\nu}}_{\bm{\vartheta}_1},\hat{\bm{\upsilon}}_{\bm{\vartheta}_1})$ be the first-layer weights of the generator networks $g_{\hat{\btheta}_1}$ and $g_{\hat{\bm{\vartheta}}_1}$, respectively.
Then, from Corollary \ref{thm:vary-p}, when  ${\lambda_n}=O({n_1}^{-\frac{1}{2(p+1)}})$ and $p=o(\log^c n_1)$ with $c\in(0,1)$,
\begin{align*}
   \bbE \|\hat{\bm{\mu}}_{\btheta_1}-\hat{\bm{\mu}}_{\bm{\vartheta}_1}\|_2\leq \bbE\|\hat{\btheta}_1-\hat{\bm{\vartheta}}_1\|_2=\bbE[d(\hat{\btheta}_1,\Theta)]\precsim \left({n_1}^{-\frac{1}{p+1}} \log {n_1}\right)^{\frac{1}{2(a-1)}}. 
\end{align*}
Lemma \ref{lem:Dinh2020-expand} shows that there exists a positive constant $c_{\mu}>0$ such that $\|\hat{\bm{\mu}}_{\bm{\vartheta}}^{[,j]}\|_2\geq c_{\mu}$ for any $j=1,\ldots,p_s$ by the fact $\hat{\bm{\vartheta}}\in\Theta$.
In other words, $\min_{1\leq j\leq p_s}\|\hat{\bm{\mu}}_{\bm{\vartheta}}^{[,j]}\|_2\geq c_{\mu}$.
Then, for each $j=1,\ldots,p_s$,
\begin{align*}
    &c_{\mu}  -\|\hat{\bm{\mu}}_{\btheta_1}^{[,j]}\|_2\leq  \min_{1\leq k\leq p_s}\|\hat{\bm{\mu}}_{\bm{\vartheta}}^{[,k]}\|_2-\|\hat{\bm{\mu}}_{\btheta_1}^{[,j]}\|_2\leq\|\hat{\bm{\mu}}_{\bm{\vartheta}}^{[,j]}\|_2-\|\hat{\bm{\mu}}_{\btheta_1}^{[,j]}\|_2\leq \left|\|\hat{\bm{\mu}}_{\btheta_1}^{[,j]}\|_2-\|\hat{\bm{\mu}}_{\bm{\vartheta}_1}^{[,j]}\|_2\right|\\
    \leq& \|\hat{\bm{\mu}}_{\btheta_1}^{[,j]}-\hat{\bm{\mu}}_{\bm{\vartheta}_1}^{[,j]}\|_2
    \leq\sum_{j=1}^{p_s}\|\hat{\bm{\mu}}_{\btheta_1}^{[,j]}-\hat{\bm{\mu}}_{\bm{\vartheta}_1}^{[,j]}\|_2\leq \sqrt{p_s}\|\hat{\bm{\mu}}_{\btheta_1}-\hat{\bm{\mu}}_{\bm{\vartheta}_1}\|_2      
   \leq \sqrt{p_s}\|\hat{\btheta}_1-\hat{\bm{\vartheta}}_1\|_2,
\end{align*}
indicating that $\|\hat{\bm{\mu}}_{\btheta_1}^{[,j]}\|_2\geq c_{\mu}- \sqrt{p_s}\|\hat{\btheta}_1-\hat{\bm{\vartheta}}_1\|_2$.

By Markov's inequality, for any positive constant $0<c_0<c_{\mu}/\sqrt{p_s}$, 
\begin{align*}
    P(\|\hat{\btheta}_1-\hat{\bm{\vartheta}}_1\|_2\leq c_0)=1-P(\|\hat{\btheta}_1-\hat{\bm{\vartheta}}_1\|_2\geq c_0)\geq 1-\frac{\bbE\|\hat{\btheta}_1-\hat{\bm{\vartheta}}_1\|_2}{c_0}.
\end{align*} 
Theorem \ref{thm:vary-p} provides that there exists a positive constant $c_1$ such that
\begin{align*}
    \bbE\|\hat{\btheta}_1-\hat{\bm{\vartheta}}_1\|_2\leq c_1\left({n_1}^{-\frac{1}{p+1}} \log {n_1}\right)^{\frac{1}{2(a-1)}},
\end{align*}
so that for any $j=1,\ldots,p_s$,  
\begin{align}\label{eq:prob-mu}
    P\left(\|\hat{\bm{\mu}}_{\btheta_1}^{[,j]}\|_2\geq c_{\mu}-\sqrt{p_s}c_0\right)\geq 1-\frac{\bbE\|\hat{\btheta}_1-\hat{\bm{\vartheta}}_1\|_2}{c_0}\geq 1-\frac{c_1}{c_0}\left({n_1}^{-\frac{1}{p+1}} \log {n_1}\right)^{\frac{1}{2(a-1)}},
\end{align}
where $a>2$ is a constant, independent of $p$ and $n_1$. 
For the part connecting the unimportant variables,
by Theorem \ref{thm:vary-p}, for each $j=1,\ldots,p_c$,
\begin{align}\label{eq:prob-nu}
    P\left(\|\hat{\bm{\nu}}_{\btheta_1}^{[,j]}\|_2\geq c_{\mu}-\sqrt{p_s}c_0\right)\leq \frac{\bbE\|\hat{\bm{\nu}}_{\btheta_1}^{[,j]}\|_2}{c_{\mu}-\sqrt{p_s}c_0}\leq \frac{c_2{n_1}^{-\frac{1}{2(a-1)(p+1)}} \log {n_1}}{c_{\mu}-\sqrt{p_s}c_0},
\end{align}
where $c_{\mu}-\sqrt{p_s}c_0>0$ by the constriction of Markov's inequality, and $c_2>0$ is a positive constant independent of $p$ and $n$.
Taking (\ref{eq:prob-mu}) and (\ref{eq:prob-nu}) together into consideration, for any $j=1,\ldots,p_s$ and $k=1,\ldots,p_c$,
\begin{align*}
&P\left(\{\|\hat{\bm{\mu}}_{\btheta_1}^{[,j]}\|_2\geq c_{\mu}-\sqrt{p_s}c_0\}  \cap \{\|\hat{\bm{\nu}}_{\btheta_1}^{[,k]}\|_2\leq c_{\mu}-\sqrt{p_s}c_0\}\right)\\
=&P\left( \|\hat{\bm{\mu}}_{\btheta_1}^{[,j]}\|_2\geq c_{\mu}-\sqrt{p_s}c_0  \right)-P\left(\{\|\hat{\bm{\mu}}_{\btheta_1}^{[,j]}\|_2\geq c_{\mu}-\sqrt{p_s}c_0\} \cap\{\|\hat{\bm{\nu}}_{\btheta_1}^{[,k]}\|_2\geq c_{\mu}-\sqrt{p_s}c_0\} \right)\\
\geq&P\left( \|\hat{\bm{\mu}}_{\btheta_1}^{[,j]}\|_2\geq c_{\mu}-\sqrt{p_s}c_0  \right)-P\left( \|\hat{\bm{\nu}}_{\btheta_1}^{[,k]}\|_2\geq c_{\mu}-\sqrt{p_s}c_0  \right)\\
\geq & 1-\frac{c_1}{c_0}\left({n_1}^{-\frac{1}{p+1}} \log {n_1}\right)^{\frac{1}{2(a-1)}}-\frac{c_2}{c_{\mu}-\sqrt{p_s}c_0}{n_1}^{-\frac{1}{2(a-1)(p+1)}} \log {n_1}\\
\geq &1-2\max\left\{\frac{c_1}{c_0},\frac{c_2}{c_{\mu}-\sqrt{p_s}c_0}\right\}{n_1}^{-\frac{1}{2(a-1)(p+1)}} \log {n_1}\\
\to&1,\ \text{ as }n_1\to\infty,
\end{align*}
since ${n_1}^{-\frac{1}{2(a-1)(p+1)}} \log {n_1}$ converges to zero as $n_1\to\infty$.

\clearpage
\subsection{Proof of Theorem \ref{thm2:convergence-rate-stage2}}

Consider the Wasserstein distance between $P_{\hat{\bX}^2,g_{\hat{\btheta}_2}}$ and $P_{\bX^s,Y}$,
\begin{align*}
    &\mathbb{E}W(P_{\hat{\bX}^s,g_{\hat{\btheta}_2}},P_{\bX^s,Y})\\
    =&\bbE \sup _{f\in\calF_{\text{Lip}}^1}\left\{\bbE_{\hat{\bX}^s, \bZ} f(\hat{\bX}^s, g_{\hat{\btheta}_2}(\hat{\bX}^s,\bZ))-\bbE_{\hat{\bX}^s, Y} f(\hat{\bX}^s, Y)+\bbE_{\hat{\bX}^s, Y} f(\hat{\bX}^s, Y)-\bbE_{\bX^s, Y} f(\bX^s, Y)\right\}\\
    \leq& \bbE \sup _{f\in\calF_{\text{Lip}}^1}\left\{\bbE_{\hat{\bX}^s, \bZ} f(\hat{\bX}^s, g_{\hat{\btheta}_2}(\hat{\bX}^s,\bZ))-\bbE_{\hat{\bX}^s, Y} f(\hat{\bX}^s, Y)\right\}+ \bbE \sup _{f\in\calF_{\text{Lip}}^1}\left\{\bbE_{\hat{\bX}^s, Y} f(\hat{\bX}^s, Y)-\bbE_{\bX^s, Y} f(\bX^s, Y)\right\}.
\end{align*}
Corollary \ref{cor:threshold} shows that 
$\hat{\bX}^s=\bX^s$ 
with probability $1-c_{p_s}{n_1}^{-\frac{1}{2(a-1)(p+1)}} \log {n_1}$, so that in the same probability, $\bbE \sup _{f\in\calF_{\text{Lip}}^1}\left\{\bbE_{\hat{\bX}^s, Y} f(\hat{\bX}^s, Y)-\bbE_{\bX^s, Y} f(\bX^s, Y)\right\}=0$.
Therefore, we only focus on the first term, which can be decomposed as:
\begin{align*}
    &\bbE \sup _{f\in\calF_{\text{Lip}}^1}\left\{\bbE_{\hat{\bX}^s, \bZ} f(\hat{\bX}^s, g_{\hat{\btheta}_2}(\hat{\bX}^s,\bZ))-\bbE_{\hat{\bX}^s, Y} f(\hat{\bX}^s, Y)\right\}\\
    =&\bbE \sup _{f\in\calF_{\text{Lip}}^1}\left\{\bbE_{\hat{\bX}^s, \bZ} f(\hat{\bX}^s, g_{\hat{\btheta}_2}(\hat{\bX}^s,\bZ))-\frac{1}{n_2}\sum_{i=1}^{n_2}f(\hat{\bX}_i^s, g_{\hat{\btheta}_2}(\hat{\bX}_i^s,\bZ_i))\right.\\
    &\left.+\frac{1}{n_2}\sum_{i=1}^{n_2}f(\hat{\bX}_i^s, g_{\hat{\btheta}_2}(\hat{\bX}_i^s,\bZ_i))-\frac{1}{n_2}\sum_{i=1}^{n_2}f(\hat{\bX}_i^s, Y_i)+\frac{1}{n_2}\sum_{i=1}^{n_2}f(\hat{\bX}_i^s, Y_i)-\bbE_{\hat{\bX}^s, Y} f(\hat{\bX}^s, Y)\right\}\\
    \leq&\bbE \sup _{f\in\calF_{\text{Lip}}^1}\left\{\bbE_{\hat{\bX}^s, \bZ} f(\hat{\bX}^s, g_{\hat{\btheta}_2}(\hat{\bX}^s,\bZ))-\frac{1}{n_2}\sum_{i=1}^{n_2}f(\hat{\bX}_i^s, g_{\hat{\btheta}_2}(\hat{\bX}_i^s,\bZ_i))\right\}\\
    &+\bbE \sup _{f\in\calF_{\text{Lip}}^1}\left\{\frac{1}{n_2}\sum_{i=1}^{n_2}f(\hat{\bX}_i^s, g_{\hat{\btheta}_2}(\hat{\bX}_i^s,\bZ_i))-\frac{1}{n_2}\sum_{i=1}^{n_2}f(\hat{\bX}_i^s, Y_i)\right\}\\
    &+\bbE \sup _{f\in\calF_{\text{Lip}}^1}\left\{\frac{1}{n_2}\sum_{i=1}^{n_2}f(\hat{\bX}_i^s, Y_i)-\bbE_{\hat{\bX}^s, Y} f(\hat{\bX}^s, Y)\right\}.
\end{align*}
Similar to the results in Lemmas \ref{lem:stochastic-bound}-\ref{lem:stoch-f-XY}, the first and third terms above have their upper bounds as follows:
\begin{align*}
    &\bbE \sup _{f\in\calF_{\text{Lip}}^1}\left\{\bbE_{\hat{\bX}^s, \bZ} f(\hat{\bX}^s, g_{\hat{\btheta}_2}(\hat{\bX}^s,\bZ))-\frac{1}{n_2}\sum_{i=1}^{n_2}f(\hat{\bX}_i^s, g_{\hat{\btheta}_2}(\hat{\bX}_i^s,\bZ_i))\right\}\precsim n_2^{\frac{1}{p_s+1}}\log n_2,\\
    &\bbE \sup _{f\in\calF_{\text{Lip}}^1}\left\{\frac{1}{n_2}\sum_{i=1}^{n_2}f(\hat{\bX}_i^s, Y_i)-\bbE_{\hat{\bX}^s, Y} f(\hat{\bX}^s, Y)\right\}\precsim\sqrt{p_s+1}n_2^{-\frac{1}{p_s+1}}.
\end{align*}
Then, when the second stage's network classes $\calF_{\bphi_2}$ and $\calG_{\btheta_2}$ satisfy Conditions \ref{C:F-class} and \ref{C:G-class},
\begin{align*}
    &\bbE \sup _{f\in\calF_{\text{Lip}}^1}\left\{\frac{1}{n_2}\sum_{i=1}^{n_2}f(\hat{\bX}_i^s, g_{\hat{\btheta}_2}(\hat{\bX}_i^s,\bZ_i))-\frac{1}{n_2}\sum_{i=1}^{n_2}f(\hat{\bX}_i^s, Y_i)\right\}\\
    \leq & 2\sup_{f\in\calF_{\text{Lip}}^1}\inf_{f_{\bphi_2\in\calF_{\bphi_2}}}\|f-f_{\bphi_2}\|+\inf_{g_{\btheta_2}\in\mathcal{G}_{\btheta_2}}\sup_{f_{\bphi_2\in\calF_{\bphi_2}}}\left\{\frac{1}{n_2}\sum_{i=1}^{n_2}f(\hat{\bX}_i^s, g_{\hat{\btheta}_2}(\hat{\bX}_i^s,\bZ_i))-\frac{1}{n_2}\sum_{i=1}^{n_2}f(\hat{\bX}_i^s, Y_i)\right\}\\
    \leq &2\sup_{f\in\calF_{\text{Lip}}^1}\inf_{f_{\bphi_2\in\calF_{\bphi_2}}}\|f-f_{\bphi_2}\| 
    \leq  76(p_s+1)^{3/2}{n_2}^{-\frac{2}{p_s+1}}\log n_2 
    \precsim  {n_2}^{-\frac{2}{p_s+1}}\log n_2,
\end{align*}
where the second and third lines hold by Lemma C.1 and Lemma C.3 in \cite{liu2021}, the fourth line holds by (\ref{eq:f-approx-err}), and the last line holds as $p_s$ is fixed.

\clearpage
\subsection{Proof of Proposition \ref{prop:surivival}}
Suppose that Conditions \ref{C6:censor-indep} and \ref{C7:integrability} hold. Then, 
according to Theorem 1.1 in \cite{stute1996}, for any $(x,t)\in\bbR^{d\times 1}$,
\begin{align*}
    \bbE|\hat{F}_{XT}(x,t)-F_{XT}(x,t)|\leq 2F_{XT}(x,t).
\end{align*}
Condition \ref{C:XY} guarantees that $M_3=\bbE_{X,T}|(X,T)|^3<\infty$. 
Last, following Theorem 3.1 in \cite{lei2020}, 
\begin{align*}
    \bbE W(\hat{F}_{X,T},F_{X,T})\leq C n^{-\frac{1}{d+1}},
\end{align*}
where $C$ is a positive constant depending on $M_3$.

\clearpage
\subsection{Computational algorithm for the penalized WGAN for censored survival data}
 \begin{algorithm}[h]
	\caption{Penalized WGAN for censored survival data}
	\label{alg:survival}
	\begin{algorithmic}[1]
	\Require{ Tuning parameter ${\lambda_n}$;  Minibatch size $v\leq n_1$; Clipping parameter $c$; Noise dimension $m$; Learning rates $\alpha_f$, $\alpha_g$. 
	}
	\For {number of training iterations in stage 1}
        \State Sample $\{(\bx_{bj},y_{bj},\Delta_{bj})\}_{j=1}^{n_b}$ from $\{(\bx_i,y_i,\Delta_i)\}_{i=1}^{n_1}$,
        and noise $\{\bz_j\}_{j=1}^{n_b}$ from $N(\bm{0},\bm{I}_{m})$.
        \State Order the samples as $y_{(b1)}\leq y_{(b2)}\leq\ldots\leq y_{(bn_b)}$, and denote $\bx_{(b1)},\ldots,\bx_{(bn_b)}$ and $\Delta_{(b1)},\ldots,\Delta_{(bn_b)}$ as the predictors and indicators corresponding to the ordered $y_{bj}$'s.
        \State Compute the Kaplan-Meier weights as
        \begin{align*}
            \omega_{(b1)}=\frac{\Delta_{(b1)}}{n_b},\quad \omega_{(bj)}=\frac{\Delta_{(j)}}{n_b-j+1}\prod_{k=1}^{j-1}\left(\frac{n_b-k}{n-k+1}\right)^{\Delta_{(k)}},\ j=2,\ldots,n_b.
        \end{align*}
		\State Update the discriminator $f_{\bphi_1}$ by ascending its stochastic gradient:
	\begin{align*}
	 & f_{\bphi_1}\gets\nabla_{\bphi_1}\left[ \frac{1}{n_b} \sum_{j=1}^{n_b} f_{\bphi_1}\left(\bx_{bj}, {g}_{\btheta}\left( \bx_{bj},\bz_{j}\right)\right)- \sum_{j=1}^{n_b}\omega_{(bj)}f_{\bphi_1}\left(\bx_{(bj)}, y_{(bj)}\right)\right],\\
    &\bphi_1 \gets \bphi_1 + \alpha_f\cdot \text{RMSProp}(\bphi_1,f_{\bphi_1}), \quad \bphi_1 \gets \text{clip}(\bphi_1,-c,c).
	\end{align*}
		\State Update the generator $g_{\btheta_1}$ by descending its stochastic gradient:
	\begin{align*}
	 &g_{\btheta_1}\gets \nabla_{\btheta} \left[\frac{1}{n_b} \sum_{j=1}^{n_b} {f}_{\bphi_1}(\bx_{bj}, g_{\btheta_1}( \bx_{bj},\bz_{j}))+{\lambda_n}\sum_{l=1}^p\|\bm{w}_{0}(\btheta_1)^{[,j]}\|_2 \right],\\
    &{\btheta} \gets {\btheta} - \alpha_g \cdot \text{RMSProp}({\btheta},g_{\btheta_1}).
	\end{align*}
	\EndFor
	\end{algorithmic}
\end{algorithm}

\clearpage
\subsection{Additional simulation study details}

\subsubsection{Additional simulation results for $p_s =30$: Distribution estimation}

Beyond those evaluated in the main text, as a ``byproduct’’, the proposed approach can also provide the conditional distribution estimation. For comparison, we also apply the conditional WGAN \citep[cWGAN,][]{liu2021} with the original high-dimensional predictors and the set of important predictors, respectively -- here the latter is an oracle approach and is not feasible in practice. 
To evaluate the conditional distribution estimation, we consider the mean squared error (MSE) of the estimated mean $\mathbb{E}(Y|\bX)$, the estimated conditional standard deviation $SD(Y|\bX)$, and the estimated conditional quantile of level $\tau$, $F^{-1}_{Y|\bX}(\tau|\bX)$, which are defined as:
\begin{align*}
    \text{MSE(mean)}=&\frac{1}{T}\sum_{i=1}^T \{\widehat{\mathbb{E}}(Y \mid \bX=\bx_i)-\mathbb{E}(Y \mid \bX=\bx_i) \}^2,\\
    \text{MSE(sd)}=&\frac{1}{T}\sum_{i=1}^T \{\widehat{SD}(Y \mid \bX=\bx_i)-SD(Y \mid \bX=\bx_i) \}^2,\\
    \operatorname{MSE}(\tau)=&\frac{1}{T} \sum_{i=1}^{T}\left\{\hat{F}_{Y \mid \bX}^{-1}\left(\tau \mid \bX=\bx_{i}\right)-F_{Y \mid X}^{-1}\left(\tau \mid \bX=\bx_{i}\right)\right\}^{2},
\end{align*}
where $T$ is the size of testing data, 
$\widehat{\mathbb{E}}(Y|\bX=\bx_i)=J^{-1}\sum_{j=1}^J \hat{g}_{\bm{\gamma}}(\bx_i,\bZ_{ij})$, $\widehat{SD}(Y|\bX=\bx_i)=[J^{-1}\sum_{j=1}^J\{\hat{g}_{\bm{\theta}}(\bx_i,\bZ_{ij})-\hat{\mathbb{E}}(Y|\bX=\bx_i)\}^2]^{1/2}$, and $F_{Y|\bX}^{-1}(\tau|\bX=\bx_i)$ is the $\tau$-th conditional quantile for a given $\bX=\bx_i$. The noise vector $\bZ$ is generated from $N(\bm{0},\bm{I}_{5})$, and $J=500$. Table \ref{tab:MSEs for d30} summarizes the average MSEs and the corresponding standard errors in parentheses. It can be seen that the MSEs of the proposed approach are smaller than cWGAN in all models. Further, they remain stable with the increasing dimension of $\bX$. 

\begin{table}[h]
    \centering
        \caption{Mean squared error (MSE) of the estimated conditional mean, the estimated conditional standard deviation, and the estimated conditional quantile of level $\tau$ for M\ref{M1} to M\ref{M4} with $p_s =30$.}
        \begin{threeparttable}
        \resizebox{\textwidth}{!}{\begin{minipage}{\textwidth}
        \begin{tabular}{ccc| cc ccc}
        \hline
         & & & & & \multicolumn{3}{c}{$\tau$} \\
         \cline{6-8}
        & Method & $p$ & Mean & Sd &  0.25 & 0.50 & 0.75\\
        \hline
        \multirow{9}{*}{M\ref{M1}} & Oracle & 30 & 0.34(0.07) & 0.13(0.07) & 0.40(0.08) & 0.35(0.07) & 0.41(0.09) \\
         \cline{2-8}
         & \multirow{4}{*}{Proposed} & 100 & 0.52(0.08) & 0.21(0.09) &  0.63(0.13) & 0.53(0.08) & 0.61(0.12) \\
          & & 300 & 0.50(0.08) & 0.28(0.15) & 0.61(0.12) & 0.51(0.08) & 0.66(0.11)  \\
         & & 500 & 0.56(0.11) & 0.29(0.12) & 0.64(0.18) & 0.49(0.11) & 0.59(0.14) \\
         & & 1000 & 0.56(0.14) & 0.19(0.13) & 0.65(0.16) & 0.57(0.15) & 0.66(0.18)\\
         \cline{2-8}
         & \multirow{4}{*}{cWGAN} & 100 & 0.66(0.11) & 0.86(0.14) & 1.09(0.15) & 0.66(0.11) & 1.02(0.12) \\
          & & 300 & 2.23(0.28) & 0.91(0.03) & 2.52(0.29) & 2.23(0.28) & 2.77(0.39)\\
         & & 500 & 4.79(0.49) & 0.90(0.02) & 5.28(0.64) & 4.80(0.56) & 5.14(0.53) \\
         & & 1000 &12.21(0.80) & 0.89(0.02) & 12.67(0.88) & 12.21(0.80) & 12.56(0.90)\\     
    \hline 
    \multirow{9}{*}{M\ref{M2}} & Oracle & 30& - & - & 2.75(3.02) & 2.52(2.70) & 2.44(2.18) \\
         \cline{2-8}
         & \multirow{4}{*}{ Proposed } & 100 & - & - & 3.62(3.37) & 2.96(3.19) & 4.55(5.79) \\
          & & 300 & - & - & 5.92(5.04) & 3.31(2.81) & 3.88(3.31)\\
         & & 500 & - & - & 5.22(4.97) & 2.70(2.49) & 4.02(3.58)\\
         & & 1000 & - & - & 7.15(6.65) & 2.74(1.94) & 2.74(2.64)\\
         \cline{2-8}
         & \multirow{4}{*}{cWGAN} & 100 & - & - & 26.86(12.34) & 26.38(10.78) & 26.49(10.05)\\
          & & 300 & - & - & 32.66(15.01) & 30.73(12.93) & 32.66(13.90) \\
         & & 500 & - & - & 32.13(19.67) & 30.29(19.07) & 30.08(18.86) \\
         & & 1000 & - & - & 32.40(11.78) & 32.35(10.97) & 34.05(10.56)\\ 
         \hline
         \multirow{9}{*}{M\ref{M3}} & Oracle & 30 & 0.97(0.35) & 0.78(0.16) & 1.07(0.34) & 0.97(0.35) & 1.05(0.37) \\
         \cline{2-8}
         & \multirow{4}{*}{ Proposed } & 100 & 1.19(0.28) & 0.85(0.12) & 1.32(0.27) & 1.19(0.28) &1.27(0.28)\\
          & & 300 &1.15(0.28) & 0.85(0.13) & 1.28(0.26) & 1.15(0.28) & 1.23(0.29)\\
         & & 500 & 1.16(0.31) & 0.83(0.19) & 1.27(0.31) & 1.17(0.32) & 1.27(0.31)\\
         & & 1000 & 1.22(0.36) & 0.85(0.11) & 1.34(0.33) & 1.22(0.36) & 1.31(0.37)\\
         \cline{2-8}
         & \multirow{4}{*}{cWGAN} & 100 & 1.87(0.24) & 0.91(0.04) & 1.99(0.24) & 1.87(0.24) & 1.97(0.24)\\
          & & 300 & 5.53(0.43) & 0.84(0.06) & 5.61(1.36) & 5.53(1.43) & 5.65(1.60)\\
         & & 500 & 8.32(0.90) & 0.85(0.05) & 8.38(0.85) & 8.32(0.90) & 8.47(0.98)\\
         & & 1000 & 17.62(0.68) & 0.88(0.03) & 17.62(0.72) & 17.62(0.72) & 17.82(0.69)\\  
         \hline
         \multirow{9}{*}{M\ref{M4}} & Oracle & 30 & 0.29(0.14) & 0.29(0.07) & 0.56(0.16) & 0.38(0.17) & 0.56(0.18) \\
         \cline{2-8}
         & \multirow{4}{*}{ Proposed } & 100 & 0.40(0.22) & 0.71(0.58) & 1.09(0.44) & 0.53(0.29) & 1.08(0.78) \\
          & & 300 & 0.50(0.24) & 0.56(0.76) & 1.19(0.57) & 0.73(0.39) & 1.05(0.46)\\
         & & 500 & 0.54(0.30) & 0.86(0.62) & 1.21(0.45) & 0.84(0.44) & 1.35(0.55) \\
         & & 1000 & 0.41(0.15) & 0.70(0.73) & 0.93(0.26) & 0.65(0.27) & 1.08(0.33)\\
         \cline{2-8}
         & \multirow{4}{*}{cWGAN} & 100 & 0.57(0.17) & 0.44(0.15) & 1.10(0.35) & 0.91(0.36) & 1.23(0.53) \\
          & & 300 & 0.85(0.56) & 1.58(0.49) & 3.21(0.74) & 0.85(0.44) & 3.06(0.86)\\
         & & 500 & 0.94(0.82) & 8.55(0.67) & 4.33(0.36) & 0.80(0.31) & 4.34(0.38)\\
         & & 1000 & 0.90(0.87) & 8.19(0.34) &  4.19(0.65) & 0.68(0.45) & 3.93(0.32)\\     
      \hline
        \end{tabular}
        \footnotesize
        Note: In M\ref{M2}, the conditional distribution $P_{Y|\bX}$ is a $t$-distribution with degree of freedom 1, and the mean and standard deviation are not well defined.
        \end{minipage}}
        \end{threeparttable}\label{tab:MSEs for d30}
\end{table}

\clearpage
\subsubsection{Simulation results for $p_s =5$}

We consider the same models as in Section \ref{sec:simulation} but set $\beta=(\bm{I}_{5},\bm{I}_{p_c})$. 
The predictors $\bX$ are generated from $N(\bm{0},\bm{I}_{p})$ with $p\in\{100,300,500,1000\}$, and the noise $\bZ\sim N(\bm{0},\bm{I}_{5})$. All methods adopt FNNs with two hidden layers, and the numbers of nodes are set as 64 and 32. 
For M\ref{M1} and M\ref{M3}, $n=1,000$; For M\ref{M2}, $n=10,000$;
For M\ref{M4}, $n=2,000$; And for M\ref{M5} and M\ref{M6}, $n=5,000$. 
Table \ref{tab:d5} summarizes the variable selection and prediction results, and Table \ref{tab:MSEs for d5} summarizes the condition distribution estimation results. The observed patterns and superiority of the proposed approach are similar to those for $\bX^{s}\in\bbR^{30}$.

\begin{table}[h]
    \centering
    \caption{Simulation results for variable selection and prediction with $p_s =5$.}
    \begin{threeparttable}
    \resizebox{\textwidth}{!}{\begin{minipage}{\textwidth}
    \begin{tabular}{cc ccc c ccc c ccc c ccc c ccc }
      \hline
      &&\multicolumn{3}{c}{Proposed} & &\multicolumn{3}{c}{P-LS}& & \multicolumn{3}{c}{DFS} & & \multicolumn{3}{c}{LassoNet}& & \multicolumn{3}{c}{GCRNet}\\
      \cline{3-5}\cline{7-9} \cline{11-13} \cline{15-17} \cline{19-21}
      & $p$ & MSE/C-idx & TPR & FPR && MSE/C-idx & TPR & FPR & & MSE/C-idx & TPR & FPR  && MSE/C-idx & TPR & FPR  && MSE/C-idx & TPR & FPR\\
      \hline
      \multirow{4}{*}{M\ref{M1}} & 100 & 1.08(0.04) & 1.00 & 0.00 &   & 1.04(0.03) & 1.00 & 0.00 & & 1.10(0.07) & 1.00 & 0.00 & &1.03(0.03) & 1.00 & 0.00 & & 1.06(0.03) & 1.00 & 0.00 \\
      & 300 & 1.08(0.07) & 1.00 & 0.00 & & 1.08(0.07) &  1.00 & 0.00 & & 1.22(0.30) & 1.00 & 0.00 & &  1.04(0.04) & 1.00 & 0.00 & & 1.08(0.11) & 1.00 & 0.00   \\
      & 500 & 1.05(0.10) & 1.00 & 0.00 & & 1.11(0.07) &  1.00 & 0.00 & & 1.66(0.64) & 0.94 & 0.00 & & 1.05(0.05) & 1.00 & 0.00 & & 1.12(0.20) & 1.00 & 0.00   \\
      & 1000 & 1.07(0.05) & 1.00 & 0.00 & & 1.11(0.07) &  1.00 & 0.00 & & 1.59(0.65) & 0.93 & 0.00 & & 1.05(0.05) & 1.00 & 0.00 & & 1.10(0.13) & 1.00 & 0.00  \\
      \hline
      \multirow{4}{*}{M\ref{M2}} & 100 & - & 1.00 & 0.00 & & - & 0.56 & 0.48 & & - & 0.10 & 0.05 && - & 1.0 & 0.19 & &  - & 0.20 & 0.04  \\
      & 300 & - & 1.00 & 0.00 & & - & 0.49 & 0.49 & & - & 0.04 & 0.02 && - & 1.0 & 0.20 & & - & 0.16 & 0.19 \\
      & 500 & - & 1.00 & 0.00 & & - & 0.46 & 0.51 & & - & 0.00 & 0.00 && - & 1.0 & 0.14 & &  - & 0.00 & 0.00\\
      & 1000 & - & 0.98 & 0.00 & & - & 0.64 & 0.50 & & - & 0.00 & 0.00 && - & 1.0 & 0.24 & & - & 0.00 & 0.00 \\
      \hline
      \multirow{4}{*}{M\ref{M3}} & 100 & 1.13(0.36) & 1.00 & 0.00 & & 0.54(0.29) &  1.00 & 0.00 && 1.40(0.26) & 1.00 & 0.00 & & 0.60(0.12) & 1.00 & 0.00 & & 0.69(0.34) & 1.00 & 0.00 \\
      & 300 & 1.17(0.32) & 1.00 & 0.00 && 0.64(0.34) & 1.00 & 0.00 && 1.47(0.40) & 1.00 & 0.00 & & 0.72(0.28) & 1.00 & 0.00 & & 0.78(0.47) & 1.00 & 0.00 \\
      & 500 & 1.04(0.29) &  1.00 & 0.00 &&  0.96(0.79) & 1.00 & 0.00 & & 1.69(0.54) & 0.96 & 0.00 & & 0.61(0.08) & 1.00 & 0.00 & &  0.70(0.47) & 1.00 & 0.00\\
      & 1000 & 1.10(0.28) &  1.00 & 0.00 && 0.97(0.94) & 1.00 & 0.00 & & 1.55(0.43) & 0.96 & 0.00 & & 0.66(0.14) & 1.00 & 0.00 & &  0.63(0.39) & 1.00 & 0.00\\
      \hline
      \multirow{4}{*}{M\ref{M4}} & 100 & 2.42(0.17) & 1.00 & 0.00 && 2.09(0.18) & 0.62 & 0.50 & & 2.41(0.22) & 0.04 & 0.05 & & 2.00(0.14) & 1.00 & 0.00 & & 2.19(0.27) & 0.04 & 0.00 \\
      & 300 & 2.59(0.23)&  1.00 & 0.00 && 2.46(0.33) &  0.49 & 0.49 &  & 2.42(0.27) & 0.02 & 0.02 & & 2.11(0.11) & 1.00 & 0.00 & & 2.18(0.19) & 0.02 & 0.00 \\
      & 500 & 2.50(0.16) & 1.00 & 0.00 && 2.42(0.25) & 0.46 & 0.51 & &  2.42(0.24) & 0.04 & 0.01 & & 2.06(0.18) & 1.00 & 0.00 & & 2.78(0.30) & 0.00 & 0.00 \\
      & 1000 & 2.41(0.27) & 1.00 & 0.00 &&  2.65(0.35) & 0.64 & 0.50 & & 2.74(0.24) & 0.04 & 0.00 & & 2.70(0.16) & 1.00 & 0.00 & & 2.74(0.31) & 0.00 & 0.00  \\
    \hline
    \multirow{4}{*}{M\ref{M5}} & 100 & 0.79(0.01) & 1.00 & 0.00 &&  0.79(0.01) & 1.00 & 0.00 & & - & - & - & &0.87(0.01)  & 1.00 & 0.00 & & 0.65(0.01) & 1.00 & 0.36  \\
    & 300 & 0.79(0.01) & 1.00 & 0.00 && 0.79(0.01) & 1.00 & 0.00 & & - & - & - & & 0.83(0.06) & 0.98 & 0.00 & &  0.77(0.10) & 1.00 & 0.30 \\
    & 500 & 0.79(0.01) & 1.00 & 0.00 && 0.79(0.01) & 1.00 & 0.00 & & - & - & - & &  0.77(0.04) & 0.88 & 0.01 & & 0.65(0.01) & 1.00 & 0.07 \\ 
    & 1000 & 0.79(0.01) & 1.00 & 0.00 && 0.79(0.01) & 1.00 & 0.00 & & - & - & - & & 0.74(0.05) & 0.62 & 0.00 & & 0.80(0.06) & 1.00 & 0.03 \\
    \hline
    \multirow{4}{*}{M\ref{M6}} & 100 & 0.73(0.01) & 1.00 & 0.00 && 0.64(0.01) & 1.00 & 0.00 & & - & - & - & & 0.71(0.01) & 0.90 & 0.00 & & 0.60(0.02) & 0.20 & 0.00\\
    & 300 & 0.73(0.01) & 0.98 & 0.00 &&  0.62(0.02) & 0.64 & 0.00 & & - & - & - & &0.71(0.01) & 0.44 & 0.00 &&  0.59(0.01) & 0.20 & 0.00\\
    & 500 & 0.73(0.02) & 1.00 & 0.02 && 0.62(0.01) & 0.52 & 0.00 & & - & - & - & & 0.71(0.02) & 0.40 & 0.00 & & 0.59(0.01) & 0.20 & 0.00 \\ 
    & 1000 & 0.71(0.09) & 1.00 & 0.09 &&  0.60(0.02) & 0.50 & 0.00 & & - & - & - & & 0.70(0.02) & 0.52 & 0.00 & & 0.50(0.03) & 0.00 & 0.00 \\
    \hline
    \end{tabular}
    \footnotesize
    Note: The standard errors for MSE are given in parentheses; With DFS, the number of variables to be selected is set as 5, which is required during the training process.
    \end{minipage}}
    \end{threeparttable}
    \label{tab:d5}
\end{table}

\begin{table}[h]
    \centering
        \caption{Mean squared error (MSE) of the estimated conditional mean, the estimated conditional standard deviation, and the estimated conditional quantile of level $\tau$ for M\ref{M1} to M\ref{M4} with $p_s =5$.}
        \begin{threeparttable}
        \resizebox{\textwidth}{!}{\begin{minipage}{\textwidth}
        \begin{tabular}{ccc| cc ccc}
        \hline
         & & & & & \multicolumn{3}{c}{$\tau$} \\
         \cline{6-8}
        & Method & $p$ & Mean & Sd &  0.25 & 0.50 & 0.75\\
        \hline
        \multirow{9}{*}{M\ref{M1}} & Oracle & 5 & 0.07(0.03) & 0.03(0.03) & 0.10(0.06) & 0.09(0.06) & 0.11(0.07) \\
         \cline{2-8}
         & \multirow{4}{*}{Proposed} & 100 & 0.08(0.03) & 0.04(0.03) & 0.11(0.05) & 0.10(0.04) & 0.12(0.05) \\
          & & 300 & 0.07(0.03) & 0.03(0.01) & 0.08(0.02) & 0.08(0.03) & 0.10(0.04)\\
         & & 500 & 0.06(0.02) & 0.03(0.01) & 0.08(0.02) & 0.07(0.03) & 0.09(0.04)\\
         & & 1000 & 0.06(0.01) & 0.02(0.01) & 0.08(0.02) & 0.08(0.02) & 0.09(0.03)\\
         \cline{2-8}
         & \multirow{4}{*}{cWGAN} & 100 & 0.45(0.08) & 0.39(0.37) & 0.65(0.24) & 0.46(0.08) & 0.62(0.20) \\
          & & 300 & 0.75(0.08) & 0.95(0.02) & 1.20(0.11) & 0.75(0.08) & 1.16(0.14)\\
         & & 500 & 1.25(0.11) & 0.95(0.01) & 1.73(0.20) & 1.25(0.11) & 1.63(0.26)\\
         & & 1000 & 2.35(0.11) & 0.94(0.02) & 2.79(0.34) & 2.35(0.11) & 2.77(0.27)\\     
    \hline 
    \multirow{9}{*}{M\ref{M2}} & Oracle & 5 & - & - &2.69(2.41) & 2.07(1.72) & 2.75(1.44) \\
         \cline{2-8}
         & \multirow{4}{*}{Proposed} & 100 & - & - & 2.28(2.36) & 2.24(2.21) & 2.14(2.23)\\
          & & 300 & - & - & 2.87(2.94) & 2.12(2.06) & 2.67(1.74) \\
         & & 500 & - & - &  3.36(1.96) & 1.90(2.13) & 4.44(3.44)\\
         & & 1000 & - & - & 3.83(1.51) & 1.93(2.17) & 4.46(5.78) \\
         \cline{2-8}
         & \multirow{4}{*}{cWGAN} & 100 & - & - & 4.07(2.05) & 2.76(1.08) & 4.35(2.32)\\
          & & 300 & - & - & 5.10(2.32) & 4.04(2.10) &4.67(2.06) \\
         & & 500 & - & - & 5.87(2.30) & 4.70(2.63) & 5.38(2.74) \\
         & & 1000 & - & - & 5.62(2.72) & 5.50(2.89) & 5.39(4.37) \\   
         \hline
         \multirow{9}{*}{M\ref{M3}} & Oracle & 5 & 0.73(0.23) & 0.71(0.13) & 0.80(0.22) & 0.73(0.23) & 0.83(0.23)\\
         \cline{2-8}
         & \multirow{4}{*}{Proposed} & 100 & 0.73(0.23) & 0.71(0.13) & 0.80(0.22) & 0.73(0.23) & 0.83(0.23) \\
          & & 300 & 0.71(0.22) & 0.78(0.11) & 0.78(0.22) & 0.71(0.22) & 0.78(0.23)\\
         & & 500 & 0.77(0.29) & 0.78(0.23) & 0.89(0.23) & 0.83(0.23) & 0.90(0.24)\\
         & & 1000 & 0.79(0.29) & 0.73(0.13) & 0.99(0.31) & 0.75(0.32) & 0.87(0.32)\\
         \cline{2-8}
         & \multirow{4}{*}{cWGAN} & 100 & 1.37(0.23) & 0.94(0.12) & 1.51(0.24) & 1.37(0.23) & 1.46(0.22) \\
          & & 300 & 3.49(0.99) & 0.95(0.22) & 3.59(0.97) & 3.49(0.99) & 3.61(1.02)\\
         & & 500 & 4.96(0.83) & 0.92(0.18) & 5.01(1.44) & 4.97(1.43) & 5.14(1.44)\\
         & & 1000 & 7.87(0.76) & 0.90(0.33) & 7.83(0.66) & 7.87(0.67) & 8.08(0.71) \\  
         \hline
         \multirow{9}{*}{M\ref{M4}} & Oracle & 5 & 0.03(0.01) & 0.03(0.01) & 0.06(0.03) & 0.04(0.01) & 0.06(0.02)\\
         \cline{2-8}
         & \multirow{4}{*}{Proposed} & 100 & 0.05(0.04) & 0.07(0.09) & 0.13(0.06) & 0.09(0.06) & 0.14(0.07) \\
          & & 300 &0.03(0.01) & 0.11(0.07) & 0.14(0.07) & 0.10(0.05) & 0.12(0.05)\\
         & & 500 & 0.08(0.05) & 0.08(0.05) & 0.12(0.07) & 0.09(0.05) & 0.11(0.06)\\
         & & 1000 & 0.07(0.04) & 0.08(0.09) & 0.12(0.04) & 0.09(0.05) & 0.13(0.05)\\
         \cline{2-8}
         & \multirow{4}{*}{cWGAN} & 100 & 0.08(0.05) & 2.23(0.11) & 1.03(0.07) & 0.13(0.01) & 1.06(0.07) \\
          & & 300 & 0.07(0.05) & 2.44(0.16) & 1.18(0.10) & 0.14(0.02) & 1.13(0.07)\\
         & & 500 & 0.08(0.05) & 2.47(0.13) & 1.18(0.11) & 0.13(0.02) & 1.12(0.09)\\
         & & 1000 & 0.08(0.04) & 2.67(0.09) & 1.27(0.08) & 0.15(0.04) & 1.26(0.14)\\   
      \hline
        \end{tabular}
        \footnotesize
         Note: In M\ref{M2}, the conditional distribution $P_{Y|\bX}$ is a $t$-distribution with degree of freedom 1, and the mean and standard deviation are not well defined.
        \end{minipage}}
        \end{threeparttable}\label{tab:MSEs for d5}
\end{table}

\clearpage
\subsection{Additional details for the data analysis}

\subsubsection{Settings of the neural networks}
For comparability, we adopt similar network structures for all the methods. The neural networks used in the proposed approach, GCRNet, P-LS, LassoNet, and the approximation layers of DFS are two-layer FNNs with widtsh (64, 32) and the ReLu activation function. LassoNet has a single residual connection, and DFS has a selection layer. The noise vector used in the proposed approach is generated from the standard normal distribution with dimension 5. The batch size, learning rate, and value of tuning parameter ${\lambda_n}$ are summarized in Table \ref{tab:network-setting}.

\begin{table}[H]
    \centering
    \caption{Summary of network training settings.}
    \begin{threeparttable}
    \resizebox{\textwidth}{!}
    {\begin{minipage}{\textwidth}
    \begin{tabular}{c|ccc c cccccc c ccc}
    \hline
         & \multicolumn{3}{c}{TCGA} && \multicolumn{6}{c}{MIMIC-III} && \multicolumn{3}{c}{HIV-1}\\
         &&&&& \multicolumn{3}{c}{Albumin} & \multicolumn{3}{c}{Survival} & & & \\
         & batch & lr & ${\lambda_n}$ & & batch & lr & ${\lambda_n}$ &batch & lr & ${\lambda_n}$ &  & batch & lr & ${\lambda_n}$ \\
        \hline
        Proposed & 200 & 1e-4 & 1e-4 && 50 & 1e-4 & 1e-6 & 128 & 5e-5 & 1e-5 && 25 & 1e-4 & 1e-5 \\
        P-LS & 200 & 1e-5 & 0.1 && 50 & 1e-4 & 1 & 128 & 5e-5 & 0.5 && 25 & 0.01 & 0.1 \\
        DFS & 360 & 1e-4 & 1 && 1933 & 0.1 & 1 & - & - & - && - & 0.1 & 1\\
        LassoNet & 360 & 1e-3 & - && 1933 & 1e-3 & - & 8323 & 1e-3 & - && - & 1e-3 & -\\
        GCRNet & 300 & 1e-3 & - && 50 & 0.01 & - & 8323 & 0.001& -&& 50 & 1e-3 & -\\
        \hline
    \end{tabular}
    \footnotesize
    Note: ${\lambda_n}$ in LassoNet and GCRNet are selected by cross-validation in each replication. The batch sizes in DFS and LassoNet vary since the sample sizes of the three drugs are different.
    \end{minipage}}
    \end{threeparttable}
    \label{tab:network-setting}
\end{table}

\subsubsection{Details of the selected predictors}

For the TCGA-LUAD data, Table \ref{tab:TCGA-selection} presents the identified genes using the proposed and alternative methods. Those in bold are selected by multiple methods. No gene is identified by GCRNet. In Table \ref{tab:TCGA-gene}, we present the genes that are frequently selected (which are defined as being selected in at least half of the 30 replications). It is observed that DFS identifies quite different genes across the replications, and GCRNet does not identify any important genes in 27 out of the 30 replications.

\begin{table}[h]
    \centering
    \caption{Analysis of the TCGA-LUAD data: identified important genes. }
    \begin{threeparttable}
    \begin{tabular}{c|l}
    \hline
      Method   &  Identified Genes\\
    \hline
    \multirow{2}{*}{Proposed}  & {\bf ST6GAL1}, {\bf ISL2}, {\bf OR10G8}, {\bf WDR89}, {\bf OR4N5}, {\bf TRIM16L},\\
    &{\bf C1orf126}, TMEM159, {\bf PIK3C2A}\\
    \hline
    \multirow{2}{*}{P-LS} &  PDE8A, SLC25A21, COL13A1, C18orf54, C1QTNF3, {\bf ST6GAL1}, 
     \\
    & AIG1, MRPL28, RNPC3, {\bf ISL2}, {\bf OR4N5}, DEFB136, {\bf WDR89} \\
    \hline
    \multirow{2}{*}{DFS} &  AAMP, ATIC, C4orf29, CTU2, DCI, FLCN, HSD17B10,\\
    & MADD, MYD88, PRODH, SCAI, SNCG, VSNL1, XAF1   \\
    \hline
    \multirow{3}{*}{LassoNet} &{\bf  ST6GAL1}, {\bf ISL2}, {\bf WDR89}, {\bf OR4N5}, {\bf OR10G8}, {\bf TRIM16L},\\
    & HAPLN2, {\bf C1orf126}, {\bf PIK3C2A}, H1FX, TAF2, ADRA2A, \\
    & KPTN, TMEM63A, PTPRD, JHDM1D, MYL3, LANCL2, TXNDC3 \\
    \hline
    GCRNet & - \\
    \hline
    \end{tabular}
    \end{threeparttable}
    \label{tab:TCGA-selection}
\end{table}

\begin{table}[h]
    \centering
    \caption{Analysis of the TCGA-LUAD data: Frequently identified important genes and their identification frequencies.}
    \begin{threeparttable}
    \resizebox{\textwidth}{!}{\begin{minipage}{\textwidth}
    \begin{tabular}{c|c cc cc cc cc c}
    \hline
    \multirow{2}{*}{Proposed} & Gene & {\bf ISL2} & {\bf OR10G8} & {\bf OR4N5} & SLC47A2& {\bf ST6GAL1} & {\bf WDR89}   \\
        & Frequency & 0.87 & 0.83 & 0.93 & 0.83 & 0.93 & 0.93\\
    \multirow{2}{*}{P-LS} &Gene & AFTPH & C1QTNF3 & {\bf ISL2} & {\bf OR10G8} & {\bf OR4N5}& {\bf ST6GAL1 }& TRIM16L &{\bf  WDR89} \\ 
    & Frequency & 0.50 & 0.67 & 0.83 & 0.50 & 0.73 & 0.83 & 0.77 & 0.80 \\
    \multirow{2}{*}{DFS} & Gene & - \\
    & Frequency & - \\
    \multirow{4}{*}{LassoNet} & Gene &  GLOD4 & H1FX & HAPLN2& {\bf ISL2} & KPTN & LANCL2& MPPED2 & {\bf OR10G8}& {\bf OR4N5}   \\
    & Frequency & 0.57 & 0.60 & 0.70 & 0.90 & 0.77 & 0.50 & 0.50 & 0.77 & 0.90\\
    & Gene &  PIK3C2A & PTPRD& {\bf ST6GAL1} & TAF2& TMEM63A & TRIM16L & VPS13C&{\bf  WDR89 } \\
    & Frequency & 0.57 & 0.77 & 1.00 & 0.87 & 0.90 & 0.60 & 0.90 & 0.90\\
    \multirow{2}{*}{GCRNet} & Gene & - \\
    & Frequency & - \\
    \hline
    \end{tabular}
    \end{minipage}}
    \end{threeparttable}\label{tab:TCGA-gene}
\end{table}

For the analysis of the MIMIC-III data on albumin level,  Table \ref{tab:MIMIC-selection} presents the important predictors identified by each method, and Table \ref{tab:high-MIMIC-selected} presents the frequencies of these predictors identified over the 30 
replications.

\begin{table}[h]
    \centering
        \caption{Analysis of the MIMIC-III data on albumin level: identified important predictors. }
        \label{tab:MIMIC-selection}
    \begin{tabular}{c| ccccc}
    \hline
        Identified predictors & Proposed & P-LS & DFS & LassoNet & GCRNet\\
    \hline
        Number of admissions & \checkmark & \checkmark & \checkmark \\
        White blood cells [max] & \checkmark & \checkmark & \checkmark \\
        Platelets [mean] &  \checkmark & & \checkmark & \checkmark & \checkmark\\
        Platelets [max] & \checkmark\\
        Sodium (135-148) [mean] & & \checkmark \\
        White blood cells (4-11,000) [max] & \checkmark & \checkmark & \checkmark\\
        Temperature C (calc) [max] & & & \checkmark \\
        BUN (6-20) [mean] & & & & \checkmark \\
        BUN [mean] & & & & \checkmark \\
        White blood cells from hematology [mean] & & & & \checkmark  \\
        White blood cells from hematology [max] & \checkmark & \checkmark & \checkmark\\
        Differential-Eos [mean] & & & & \checkmark \\
        Differential-laymphs & \checkmark  & & \checkmark  & \checkmark  & \checkmark \\
        Differential-Polys & & \checkmark\\
        \hline
    \end{tabular}
\end{table} 

\begin{table}[h]
    \centering
    \caption{Analysis of the MIMIC-III data on albumin level: Frequently identified important predictors and their identification frequencies.}
    \label{tab:high-MIMIC-selected}
    \begin{threeparttable}
    \resizebox{\textwidth}{!}{\begin{minipage}{\textwidth}
     \begin{tabular}{c| ccccc}
    \hline
        Identified predictors & Proposed & P-LS & DFS & LassoNet & GCRNet\\
    \hline
        Number of admissions & 0.43 & 0.77 & 0.80 & 0.63 & -  \\
        White blood cells [max] & 1.00 & 1.00 & 0.90 & - & -\\
        Platelets [mean] &  1.00 & 0.87 & 0.97 & 1.00 & 1.00 \\
        Platelets [max] & 0.87 & - & 0.60 & - & -\\
        Sodium (135-148) [mean] & - & 0.57 & 0.37 & - & - \\
        White blood cells (4-11,000) [max] & 1.00 & 1.00 & 0.67 & - &  -\\
        White blood cells from hematology [mean] & 0.23 & - &  - & 0.93 &  -\\
        White blood cells from hematology [max] & 1.00 & 1.00 & 0.93 & -\\
        Differential-laymphs &  0.97 & - & 0.53 & 1.00 & 1.00 \\
        \hline
    \end{tabular}
    \end{minipage}}
    \end{threeparttable}
\end{table}

For the analysis of the MIMIC-III data on survival, Table \ref{tab:MIMIC-III-survival-selection} presents the important predictors identified by each method, and Figure \ref{fig:heatmap-mimic} presents the identification frequencies across the 30 replications. 

\begin{table}[h]
    \centering
     \caption{Analysis of the MIMIC-III data on survival: identified important predictors.
}\label{tab:MIMIC-III-survival-selection}
     \begin{threeparttable} 
     \resizebox{\textwidth}{!}{\begin{minipage}{\textwidth}
    \begin{tabular}{c|l}
    \hline
        Method & Identified predictors \\
        \hline
        \multirow{2}{*}{Proposed} & Monocytes [max], Platelet Count [max], {\bf Bicarbonate} [max], Arterial Blood Pressure systolic [mean], Urea Nitrogen [max], \\
        &  Red Blood Cells [min], Albumin [min], Paw High [min], RDW [max], {\bf Anion Gap} [min], Sodium [min]\\
        \hline
        \multirow{3}{*}{P-LS} & Age, Glucose [min], Respiratory Rate [min], Lactate [mean, min], {\bf Bicarbonate} [max], {\bf Anion Gap} [min], Magnesium [min],\\
        & pO2 [mean], Lactic Acid [min], Non Invasive Blood Pressure systolic [mean], O2 saturation pulseoxymetry [min]\\
        & Chloride, Whole Blood [max], Resp Alarm - High [mean]\\
        \hline 
        DFS &  -\\
        \hline 
    \multirow{3}{*}{LassoNet} & Lactate [mean], O2 saturation pulseoxymetry [min], Urea Nitrogen [min], Heart Rate [min], {\bf Bicarbonate} [mean, max], \\
    & Arterial Blood Pressure systolic [min], {\bf Anion Gap} [min], Glucose [mean], Respiratory Rate (Total) [mean],  \\
    & RDW [min], MCHC [max], Glucose [min], Platelet Count [max], Potassium [max]\\
        \hline
        GCRNet & -\\
        \hline
    \end{tabular}
    Note: DFS is not applicable to survival data. GCRNet does not identify any predictor. Those identified by multiple methods are in bold.
    \end{minipage}}
   \end{threeparttable}
\end{table}

\begin{figure}[h]
    \centering
    \caption{Analysis of the MIMIC-III data on survival: identification frequencies.}
    \includegraphics[width=0.7\textheight]{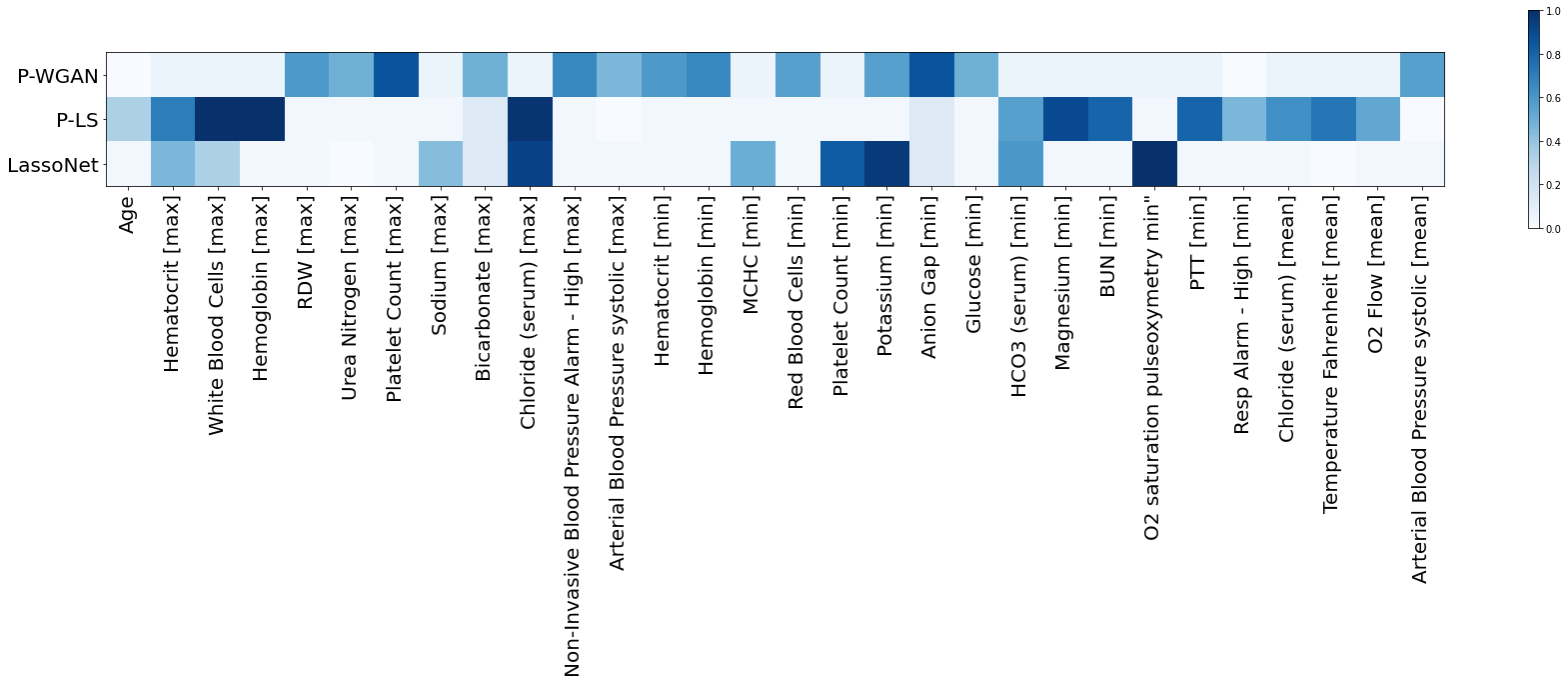}
    \label{fig:heatmap-mimic}
\end{figure}

For the analysis of the HIV-1 data, Table \ref{tab:HIV-selection} presents the mutations identified by each method for APV and SQV. Those in bold are identified in the expert panel in \cite{rhee2006genotypic}. Table \ref{tab:HIV} presents the number of true discoveries and false discoveries of each method with respect to the expert panel. It can be seen that the proposed approach exhibits fewer false discoveries than the alternatives.

\begin{table}[h]
     \caption{Analysis of the HIV dataset: identified important mutations.
}\label{tab:HIV-selection}
     \begin{threeparttable} 
     \resizebox{\textwidth}{!}{
    \begin{tabular}{cc|l}
    \hline
     Drug   &Method & Identified Mutations \\
        \hline
      \multirow{10}{*}{APV}  &\multirow{2}{*}{Proposed} &   {\bf 10F}, 10I, 10V , 12Z, {\bf 20R}, {\bf20T}, {\bf 33F}, {\bf43T}, {\bf 46L}, {\bf 47V}, {\bf 48M}, {\bf 48V}, \\
      & & {\bf 50L}, {\bf 50V}, {\bf 54M},  64V,  71 , {\bf 82A},  {\bf 82F}, {\bf 84A},  {\bf84C}, {\bf 84V}, {\bf 88S}, {\bf 90M} \\
        \cline{2-3}
        &\multirow{2}{*}{P-LS} &{\bf 50V}, {\bf 88S},  84A,  {\bf 90M}, {\bf 48V}, 20T, {\bf50L}, {\bf 54M}, {\bf 10F},  {\bf 82F}, {\bf 33F},  {\bf 84V},\\ 
        &  & {\bf 54V}, {\bf 47V}, 35D, {\bf 54L},  84C,  {\bf 63P}, {\bf 46I},  76V, 15V, 48M,  83D, 43T, {\bf 32I} \\
        \cline{2-3}
        & \multirow{2}{*}{DFS} &{\bf 10F}, {\bf 10I}, {\bf 10I}, 11I, 20T, {\bf 33F}, 43T, {\bf46I}, {\bf 46L}, {\bf 47V}, {\bf 48V}, {\bf 50L},\\
        & & {\bf 50V}, {\bf 54L}, {\bf 54M}, {\bf 54V}, 58,  76V,  {\bf 82A},  {\bf 82F},  84A,  84C,  {\bf 84V},  88S,  {\bf 90M}  \\
        \cline{2-3}
    &\multirow{2}{*}{LassoNet} & {\bf 10F}, {\bf 10I}, {\bf 10I}, 11I, 20T, {\bf 33F}, 43T, {\bf 46I}, {\bf46L}, {\bf47V}, {\bf48V}, {\bf50L}\\
    & & {\bf50V}, {\bf54L}, {\bf54M},  {\bf54V},   58,  76V,  {\bf82A},  {\bf82F},  84A,  84C,  {\bf84V} \\
        \cline{2-3}
     & \multirow{2}{*}{GCRNet} & {\bf 10F}, {\bf 10I}, {\bf 10I}, {\bf33F}, {\bf46I}, {\bf47V}, 48M, {\bf50L}, {\bf50V}, {\bf54M},  76V,  {\bf82A},\\ 
     & & {\bf82F},  84A,  {\bf84V},  {\bf88S},  89V,  {\bf90M}\\
        \hline
     \multirow{11}{*}{SQV}  &\multirow{2}{*}{Proposed} & 84A, {\bf48V},  84C,  {\bf84V}, {\bf54S},  {\bf90M},  {\bf88D},  {\bf82F}, {\bf24I}, {\bf50L}, {\bf54T}, {\bf53L},\\
      &  & 12P, 76V, {\bf20I}, 48M, {\bf 10I},  {\bf54V},  74S, {\bf36I},  61D, {\bf 10F},  {\bf88S},  {\bf73S} \\
        \cline{2-3}
        &\multirow{2}{*}{P-LS} & 84A, {\bf48V},  84C,  {\bf90M},  {\bf84V}, 48M,  {\bf88D},  76V, {\bf53L}, {\bf54S}, {\bf 10F}, {\bf50L}, \\
        &  & {\bf54V},  67Y,  74S,  {\bf82F}, {\bf20I}, 10IL, {\bf20R}, 35D,  {\bf73S}, {\bf36L}, 85VI, {\bf24I}, {\bf54T}\\
        \cline{2-3}
        &\multirow{2}{*}{DFS} & {\bf 10F}, 10IL,{\bf 20R}, {\bf24I}, 35D, 48M, {\bf48V}, {\bf50L}, {\bf53L}, {\bf54A}, {\bf54S}, {\bf54T}, \\
        &  & 54V,   58,  67Y,  {\bf73S},  74S,  76V,  {\bf82F},  84A,  84C,  {\bf84V},  {\bf88D},  {\bf88S},  {\bf90M} \\
        \cline{2-3}
    &\multirow{2}{*}{LassoNet} & {\bf90M},  {\bf71V}, {\bf 10I},  {\bf84V},  {\bf54V}, {\bf36I}, {\bf 10F}, {\bf48V}, {\bf33F},  {\bf63P},  {\bf88D}, {\bf24I}, \\
    &  & 84A,  {\bf73S}, {\bf20R},  84C, 35D, {\bf53L}, 15V,  {\bf93L}, {\bf20I},  74S, {\bf54S},  76V, 48M \\
        \cline{2-3}
     &   \multirow{3}{*}{GCRNet} & {\bf 10F}, {\bf 10I}, {\bf 10V}, 13V, {\bf24I}, 35D, {\bf36I}, {\bf36L}, 41K, {\bf46I}, 48M, {\bf48V},\\
     &  & {\bf53L}, {\bf54S}, {\bf54T},  {\bf54V},  {\bf62V},  {\bf63P}, 64V,  67Y,  {\bf71V},  {\bf73S},  74S,  {\bf77I}, \\
     &  &  84A,  84C, {\bf84V}, {\bf85VI},  {\bf88D},  {\bf90M},  93L \\
        \hline    
    \end{tabular}}
   \end{threeparttable}
\end{table}

\begin{table}[t]
    \centering
    \caption{Analysis of the MIMIC-III data on survival: numbers of true (T) and false (F) discoveries, compared against the expert panel.}
    \label{tab:HIV}
    \begin{threeparttable}
    \begin{tabular}{cc cc c cc c cc c cc c cc}
    \hline
    \multirow{2}{*}{Drug}&&\multicolumn{2}{c}{Proposed} & &\multicolumn{2}{c}{P-LS}& & \multicolumn{2}{c}{DFS} & & \multicolumn{2}{c}{LassoNet}& & \multicolumn{2}{c}{GCRNet}\\
         && T & F & &T & F & &T & F & &T & F & &T & F  \\
        \hline
       APV && 19 & 5 & & 16 & 9 & & 17 & 8 & & 16 & 7& & 14 & 4\\
       SQV && 17 & 7 & & 16 & 9 & & 15 & 10 & & 18 & 7 & & 21 & 10\\
       \hline
    \end{tabular}
    \end{threeparttable}
\end{table}

\end{document}